\documentclass{article}

\usepackage{microtype}
\usepackage{graphicx}
\usepackage{subcaption}
\usepackage{booktabs} 

\usepackage{listings}
\usepackage{xcolor}
\usepackage{threeparttable}
\lstdefinestyle{cppstyle}{
  language=C++,
  backgroundcolor=\color{gray!10},
  basicstyle=\ttfamily\footnotesize,
  keywordstyle=\color{blue!90}\bfseries,
  commentstyle=\color{gray!70}\itshape,
  stringstyle=\color{orange!90!black},
  numbers=left,
  numberstyle=\tiny\color{gray},
  stepnumber=1,
  numbersep=6pt,
  frame=single,
  rulecolor=\color{gray!30},
  breaklines=true,
  tabsize=2,
  showstringspaces=false,
  captionpos=b
}
\usepackage{hyperref}

\usepackage[preprint]{icml2026}

\usepackage{amsmath}
\usepackage{amssymb}
\usepackage{mathtools}
\usepackage{amsthm}

\newcommand{\bx}{\boldsymbol{x}}
\newcommand{\barg}{\bar{g}}
\usepackage[capitalize,noabbrev]{cleveref}

\theoremstyle{plain}
\newtheorem{theorem}{Theorem}[section]

\newtheorem{lemma}[theorem]{Lemma}

\theoremstyle{definition}

\newtheorem{assumption}[theorem]{Assumption}
\theoremstyle{remark}
\newtheorem{remark}[theorem]{Remark}
\newtheorem{example}[theorem]{Example}
\usepackage[textsize=tiny]{todonotes}
\icmltitlerunning{Why is Normalization Preferred? A Worst-Case Complexity Theory for SPSGD under Heavy-Tailed Noise}

\begin{document}

\twocolumn[
  \icmltitle{Why is Normalization Preferred? A Worst-Case Complexity Theory for Stochastically Preconditioned SGD under Heavy-Tailed Noise}



  \icmlsetsymbol{equal}{*}

  \begin{icmlauthorlist}
    \icmlauthor{Yuchen Fang}{UCB_Math}
    \icmlauthor{James Demmel}{UCB_EECS}
    \icmlauthor{Javad Lavaei}{UCB_IEOR}
  \end{icmlauthorlist}

  \icmlaffiliation{UCB_Math}{Department of Mathematics, University of California, Berkeley, CA, USA}
 \icmlaffiliation{UCB_EECS}{Department of EECS, University of California, Berkeley, CA, USA}
  \icmlaffiliation{UCB_IEOR}{Department of IEOR, University of California, Berkeley, CA, USA}

  \icmlcorrespondingauthor{Yuchen Fang}{yc\_fang@berkeley.edu}

  \icmlkeywords{Machine Learning, ICML}

  \vskip 0.3in
]



\printAffiliationsAndNotice{}  

\begin{abstract}
We develop a worst-case complexity theory for \emph{stochastically preconditioned stochastic gradient descent} (SPSGD) and its accelerated variants under heavy-tailed noise, a setting that encompasses widely used adaptive methods such as Adam, RMSProp, and Shampoo. We assume the stochastic gradient noise has a finite $p$-th moment for some $p \in (1,2]$, and measure convergence after $T$ iterations. While clipping and normalization are parallel tools for stabilizing training of SGD under heavy-tailed noise, there is a fundamental separation in their worst-case properties in stochastically preconditioned settings. We demonstrate that normalization guarantees convergence to a first-order stationary point at rate $\mathcal{O}(T^{-\frac{p-1}{3p-2}})$ when problem parameters are known, and $\mathcal{O}(T^{-\frac{p-1}{2p}})$ when problem parameters are unknown, matching the optimal rates for normalized SGD, respectively. In contrast, we prove that clipping may fail to converge in the worst case due to the statistical dependence between the stochastic preconditioner and the gradient estimates. To enable the analysis, we develop a novel vector-valued Burkholder-type inequality that may be of independent interest. These results provide a theoretical explanation for the empirical preference for normalization over clipping in large-scale model training.
\end{abstract}

\section{Introduction}\label{intro}
We consider stochastic optimization problems of the form
\begin{equation}\label{Problem1}
    \min_{\bx \in \mathbb{R}^d} f(\bx) := \mathbb{E}_{\xi \sim \mathcal{P}}[F(\bx,\xi)],
\end{equation}
where $f: \mathbb{R}^d \to \mathbb{R}$ is the nonconvex objective function and $F(\bx,\xi)$ denotes a stochastic realization with sample $\xi$ drawn from a distribution $\mathcal{P}$. Stochastic gradient descent (SGD) \citep{Robbins1951Stochastic} and its variants remain the workhorse for large-scale machine learning due to their simplicity, scalability, and strong empirical performance. Under classical assumptions of unbiased gradients with bounded variance, SGD admits well-understood convergence guarantees, including the optimal rate $\mathcal{O}(T^{-\frac{1}{4}})$ for finding first-order stationary points in smooth nonconvex problems.

Building upon SGD, numerous variants have been proposed to accelerate convergence and improve robustness. A classical and widely used extension is momentum SGD (MSGD) \citep{Polyak1964Some, Liu2020Improved}, which updates the iterate according to
\begin{equation}\label{rule2}
\bx_{k+1} = \bx_k - \eta m_k,
\qquad
m_k = \theta m_{k-1} + (1-\theta)\barg_k,
\end{equation}
where $\eta$ is the learning rate, $\barg_k$ is a stochastic gradient at $\bx_k$, and $\theta \in (0,1)$ is the momentum parameter. Both SGD and MSGD enjoy strong theoretical guarantees and empirical success when the gradient noise has bounded variance.

However, the bounded-variance assumption is often violated in modern machine learning. Empirical studies in image classification \citep{Battash2024Revisiting}, large language model training \citep{Zhang2020Adaptive,Ahn2024Linear}, and reinforcement learning \citep{Garg2021Proximal} provide compelling evidence that the noise in gradient estimation is often heavy-tailed, where only a finite $p$-th moment exists for some $p\in(1,2]$. In such regimes, vanilla SGD and momentum SGD may exhibit instability or even divergence, and standard convergence guarantees might no longer apply \cite{Gorbunov2020Stochastic}. This has motivated a growing body of work on stochastic optimization under heavy-tailed noise.

Two simple yet widely used techniques for stabilizing SGD under heavy-tailed noise are clipping (ClipSGD) \citep{Zhang2020Adaptive} and normalization (NSGD) \citep{Cutkosky2020Momentum}. Clipping truncates gradient estimates exceeding a prescribed threshold, whereas  normalization rescales gradient estimates to have unit norm. It has been shown that both approaches are known to achieve the optimal convergence rate $\mathcal{O}(T^{-\frac{p-1}{3p-2}})$ in nonconvex optimization. Despite their theoretical parity in the vanilla SGD setting, a significant disconnect remains between theory and practice. Clipping often suffers from practical limitations: (1) determining the optimal clipping threshold requires precise knowledge of algorithmic parameters that are difficult to estimate; and (2) while theory often prescribes large, increasing thresholds, practitioners typically employ small, fixed thresholds \citep{Zhang2022Opt,Liu2024Sophia}. In contrast, normalization is easier to tune and exhibits greater robustness in practice. Therefore, the idea of normalization has been applied in large-scale machine learning extensively \citep{you2017large,you2018imagenet,you2019large,touvron2023llama}. 

In this work, we show that this gap between clipping and normalization becomes fundamental once we move beyond vanilla SGD to Stochastically Preconditioned SGD (SPSGD) and its accelerated variants, which has the following update rule
\begin{equation}\label{rule3}
\bx_{k+1} = \bx_k - \eta D_k m_k,
\qquad
m_k = \theta m_{k-1} + (1-\theta)\barg_k,
\end{equation}
where $D_k$ is a symmetric positive definite stochastic preconditioning matrix that may depend on past gradients or curvature information.




The update rule in \eqref{rule3} subsumes a wide range of widely used optimization methods, including AdaGrad \citep{Duchi2011Adaptive}, Adam \citep{Kingma2014Adam}, RMSProp \citep{Tieleman2012Lecture}, K-FAC \citep{martens2015optimizing}, Shampoo \citep{Gupta2018Shampoo}, and stochastic Newton, quasi-Newton, and trust-region methods \cite{bottou2018optimization,Wang2017Stochastic,Curtis2022Fully}. We discuss in detail how \eqref{rule3} recovers those methods in Section \ref{sec2:preconditioned_methods}.

Despite their empirical success, the theoretical understanding of SPSGD remains limited. Existing analyses often either ignore the randomness of $D_k$ or impose strong independence assumptions between $D_k$ and the stochastic gradient, which are violated in most adaptive and second-order methods. Moreover, nearly all existing results focus on light-tailed noise with bounded variance, leaving the heavy-tailed regime largely unexplored.

In this paper, we investigate how step clipping and step normalization\footnote{Throughout the paper, we use the terms step clipping and step normalization since what we normalize and clip can be the stochastic preconditioned gradient, instead of gradient alone.} behave when combined with stochastic preconditioning under heavy-tailed noise. Our main finding is a sharp worst-case separation:
\begin{itemize}
    \item \textit{Normalization remains robust}: when step normalization is applied to SPSGD, the method converges to a first-order stationary point under heavy-tailed noise, achieving \textit{at least} the same optimal rates as normalized SGD, both when problem parameters are known and when they are unknown.
    \item \textit{Clipping may fail}: in contrast, when step clipping is applied to SPSGD, the method may fail to converge in the worst case. This failure arises from the statistical dependence between the stochastic preconditioner and the gradient estimate, which introduces a persistent covariance that clipping cannot eliminate.
\end{itemize}

This separation \textit{does not} appear in vanilla SGD and provides a novel theoretical explanation for the wide applications of normalization in modern large-scale machine learning.

Our contributions are summarized as follows:
\begin{enumerate}

\item We establish the worst-case iteration complexity guarantees for SPSGD and its accelerated variants with step normalization under heavy-tailed noise. Combined with momentum, normalized SPSGD achieves the rate $\mathcal{O}(T^{-\frac{p-1}{3p-2}})$ when problem parameters are known, and $\mathcal{O}(T^{-\frac{p-1}{2p}})$ when parameters are unknown. Both rates match the optimal rates for NSGD under heavy-tailed noise when algorithmic parameters are known and unknown, respectively. Importantly, these rates are derived using minimal structural assumptions on $D_k$ and thus serve as unified upper bounds. 

\item We demonstrate that, in the worst case, SPSGD combined with step clipping may fail to converge. This failure is intrinsic and stems from the statistical dependence between the preconditioner and the gradient noise. We also provide an explanation from a geometric perspective to emphasize the fundamental difference between step normalization and step clipping.

\item To enable our analysis, we establish a new vector-valued Burkholder-type inequality, generalizing the scalar result of \citep{Fang2025High} and yielding sharper constants than existing bounds \citep{Hubler2024Gradient}. This inequality may be of independent interest for analyzing stochastic optimization algorithms.
\end{enumerate}

\subsection{Related work}

\noindent\textbf{Gradient clipping} is widely used to stabilize the
training in various fields of machine learning \citep{Pascanu2013Difficulty,Schulman2017Proximal,Zhang2020Adaptive}. Recently a number of works provide convergence guarantees and extend the algorithm design in various settings, including heavy-tailed cases \citep{Nazin2019Algorithms,Gorbunov2020Stochastic,Gorbunov2024Methods,Puchkin2024Breaking,Sadiev2023High,nguyen2023improved}. Most of the above mentioned works use increasing and iteration dependent clipping parameters, which contrast with the clipping techniques that we use in practice. Recently, \citet{koloskova2023revisiting} offer
a new analysis with a constant clipping
threshold under bounded-variance. However, their proof seems challenging to extend to heavy-tailed setting.

\noindent \textbf{Gradient normalization} was first proposed by \citep{nesterov1984minimization} and then extensively generalized, e.g., see \cite{levy2017online,hazan2015beyond,Yang2023Two,hubler2024parameter,levy2016power}. Its extension to nonconvex optimization and the removal of large-batch requirements were achieved by \cite{Cutkosky2020Momentum} through the introduction of momentum.
However, all of the above works rely on strong assumptions imposed on noise, most notably bounded variance. Under heavy-tailed noise, \citet{Cutkosky2021High} studied combinations of gradient normalization and gradient clipping with iteration-dependent clipping thresholds. More recently, \cite{Sun2025Revisiting} showed that gradient normalization combined with momentum is sufficient to achieve optimal convergence rates in expectation. Their analysis, however, relies on an individual Lipschitz condition, which implicitly assumes bounded gradient noise and thus restricts the setting to light-tailed distributions. Concurrently, \cite{Hubler2024Gradient} established convergence rates and sample complexities under heavy-tailed noise and proved their tightness, while \cite{Liu2025Nonconvex} considered generalized $p$-BCM and $(L_0,L_1)$-smooth conditions and obtained the same optimal rates. Nevertheless, all of the aforementioned analyses are restricted to vanilla SGD and do not account for the additional challenges introduced by stochastic preconditioning.

\textbf{Notation.}  We use $\|\cdot\|$ to denote $\ell_2$ norm for vectors and operator norm for matrices. We use $\odot$ to denote entrywise multiplication of two vectors, and $\otimes$ to denote the Kronecker product. Given the iterate $\bx_k$, we denote $\mathbb{E}_k[\cdot] \coloneqq\mathbb{E}[\cdot \mid \bx_{k}]$.

\section{Stochastically Preconditioned Methods}\label{sec2:preconditioned_methods}

In this section, we demonstrate that the unified update rule \eqref{rule3} encompasses a broad class of widely used stochastic optimization algorithms, all of which fall within the scope of our theoretical framework. While we do not aim to enumerate all possible methods, this section and Table~\ref{tab:preconditioned_methods} in Appendix \ref{appendix:table} summarize representative examples.

\textbf{SGD and momentum SGD.}
As a baseline, when $D_k = I$, the update rule \eqref{rule3} recovers stochastic gradient descent with momentum. When $\theta=0$, it further reduces to vanilla SGD.

\textbf{Adaptive learning-rate methods} employ coordinate-wise learning rates that depend on historical gradient information, which can be equivalently expressed through diagonal preconditioning matrices.

\textbf{AdaGrad} \citep{Duchi2011Adaptive} accumulates squared gradients
$G_k = \sum_{i=0}^k \barg_i \odot \barg_i$ and updates $\bx_{k+1} = \bx_k - \frac{\eta}{\sqrt{G_k + \epsilon}} \odot \barg_k$.
This corresponds to setting $D_k = \operatorname{diag}\!\big(1/\sqrt{G_k+\epsilon}\big)$ and $\theta=0$.

\textbf{RMSProp} \citep{Tieleman2012Lecture} replaces cumulative sums with exponential moving averages
$v_k = \beta v_{k-1} + (1-\beta)\barg_k^2$, yielding the update $\bx_{k+1} = \bx_k - \frac{\eta}{\sqrt{v_k + \epsilon}} \odot \barg_k$,
which corresponds to $D_k = \operatorname{diag}\!\big(1/\sqrt{v_k+\epsilon}\big)$ and $\theta=0$.

\textbf{Adam} \citep{Kingma2014Adam} combines momentum and adaptive scaling by maintaining $m_k = \beta_1 m_{k-1} + (1-\beta_1)\barg_k$, $v_k = \beta_2 v_{k-1} + (1-\beta_2)\barg_k^2$,
with the update $\bx_{k+1} = \bx_k - \frac{\eta}{\sqrt{v_k + \epsilon}} \odot m_k$.
This leads to $D_k = \operatorname{diag}\!\big(1/\sqrt{v_k+\epsilon}\big)$ and $\theta=\beta_1$.

\textbf{Adafactor} \citep{zhai2022scaling,zhao2024deconstructing} reduces memory usage by replacing
$v_k$ with a low-rank approximation $v_k'$, resulting in the diagonal preconditioner
$D_k = \operatorname{diag}\!\big(1/\sqrt{v_k' + \epsilon}\big)$.

\textbf{Kronecker-structured preconditioning methods} extend diagonal scaling by employing structured matrix preconditioners based on Kronecker factorizations, allowing richer curvature information to be exploited efficiently.

\textbf{K-FAC} \citep{martens2015optimizing} approximates the Fisher information matrix by
$F_k \approx A_k \otimes B_k$, where $A_k,B_k$ are symmetric positive definite matrices. This approximation yields the update $\bx_{k+1} = \bx_k - \eta (A_k \otimes B_k)^{-1} m_k$, which corresponds to choosing $D_k = (A_k \otimes B_k)^{-1}$.

\textbf{Shampoo} \citep{Gupta2018Shampoo} maintains Kronecker-factored second-moment estimates $L_k = \beta L_{k-1} + (1-\beta) G_k G_k^T$, $R_k = \beta R_{k-1} + (1-\beta) G_k^T G_k$
and updates $X_{k+1} = X_k - \eta L_k^{-1/4} G_k R_k^{-1/4}$.
Using properties of the Kronecker product, this can be equivalently written as $\operatorname{vec}(X_{k+1})
=
\operatorname{vec}(X_k)
-
\eta (L_k \otimes R_k)^{-1/4} \operatorname{vec}(G_k)$,
which corresponds to $D_k = (L_k \otimes R_k)^{-1/4}$ and $\theta=0$.


\textbf{Stochastic second-order methods} incorporate full or approximate curvature information by choosing $D_k$
as an approximation to the inverse Hessian.

\textbf{Stochastic Newton methods} \citep{bottou2018optimization} update $\bx_{k+1} = \bx_k - \eta (H_k + \lambda I)^{-1} m_k$,
where $H_k$ is a stochastic Hessian estimate and $\lambda \ge 0$ ensures positive definiteness.
Thus $D_k = (H_k + \lambda I)^{-1}$.

\textbf{Quasi-Newton methods} approximate the Hessian using low-rank updates, such as stochastic BFGS \citep{Wang2017Stochastic}. In this case, $D_k$ is given by the inverse of the Hessian approximation, with the precise form depending on the Quasi-Newton formula.

\textbf{Stochastic trust-region methods} \citep{Curtis2022Fully} compute steps by approximately solving
a local quadratic model within a trust region. When solved exactly, the resulting step takes the form
$-\eta (H_k + \lambda I)^{-1} m_k$, where $\lambda$ depends on the trust-region radius. When solved inexactly, for example using Krylov subspace methods such as Steihaug method \citep[Algorithm 7.2]{Nocedal2006Numerical}, the step can be expressed as a linear combination of
$m_k, H_k^{-1}m_k, H_k^{-2}m_k,\ldots$, which again fits \eqref{rule3} with a data-dependent and potentially stochastic preconditioner $D_k$.

\section{Takeaway: A Geometric Perspective}

Before presenting our technical analysis, we offer an intuitive geometric explanation for why \emph{step normalization} remains robust under stochastic preconditioning, whereas \emph{step clipping} faces fundamental stability issues.

At each iteration, the stochastic gradient $\barg_k$ is transformed by a symmetric positive definite (SPD) preconditioner $D_k$. This yields a preconditioned direction $D_k \barg_k$, which can be highly anisotropic and variable in magnitude due to curvature and data-dependent scaling.

\textbf{Step normalization.}
Under step normalization, the preconditioned update is strictly projected onto the surface of a sphere. By enforcing a fixed norm, normalization effectively decouples the update magnitude from the scale of the preconditioner. Geometrically, this constrains the optimization trajectory to a stable manifold, preventing the ``magnitude noise" of $D_k$ from propagating into the update step. 

\textbf{Step clipping.}
In contrast, step clipping allows the update to lie either on the boundary or \emph{anywhere within} the interior of a ball of radius $\tau$. Crucially, to achieve optimal convergence rate, theoretical analyses require this threshold $\tau$ to grow with the time horizon $T$. Geometrically, this means the update region is an expanding ball. Because clipping retains magnitude information when the gradient is small, the statistical dependence between $D_k$ and $\barg_k$ introduces a bias within this expanding interior.

\section{Normalization Ensures Convergence}\label{sec3}

In this section, we demonstrate that step normalization ensures the convergence of Stochastically Preconditioned SGD (SPSGD) to a first-order stationary point in expectation. Furthermore, we establish that in the worst case—where minimal structural knowledge of the stochastic preconditioner is assumed—the convergence rate matches the optimal rate of Normalized SGD (NSGD). 

\begin{algorithm}[t]
\caption{SPSGD with Step Normalization}
\label{alg:nsgdm}
\begin{algorithmic}[1]
\STATE \textbf{Input:} Initial point $\bx_1 \in \mathbb{R}^d$, momentum parameter $\theta \in (0,1)$, learning rate $\eta > 0$.
\FOR{$k = 1$ to $T$}
    \STATE Obtain an unbiased gradient estimate $\barg_k$.
    \STATE Update the momentum (with $m_0=\barg_1$):
    \begin{equation*}
        m_k = \theta m_{k-1} + (1 - \theta) \barg_k.
    \end{equation*}
    \STATE Construct a (potentially stochastic) preconditioning matrix $D_k$ and compute $D_km_k$.
    \STATE Perform normalization and update the iterate:
    \begin{equation*}
        x_{k+1} = x_k - \eta \frac{D_km_k}{\|D_km_k\|}.
    \end{equation*}
    \ENDFOR
\end{algorithmic}
\end{algorithm}

We summarize 
Algorithm~\ref{alg:nsgdm}. The algorithm follows the standard design of NSGD \citep{Liu2025Nonconvex,Sun2025Revisiting}, with the key distinction being the incorporation of $D_k$. We include momentum following the algorithmic framework of \citet{Cutkosky2020Momentum}, where momentum is shown to improve the performance of NSGD without requiring large per-iteration batch sizes. In our analysis, a single sample per iteration suffices.

For analysis, we impose the following assumptions.
\begin{assumption}\label{assumption1}
There exists
$f_{*} = \inf_{\bx \in \mathbb{R}^d} f(\bx) > -\infty$.
\end{assumption}

\begin{assumption}[$L$-smoothness]\label{assumption2}
There exists $L > 0$ such that for any $\bx, \boldsymbol{y} \in \mathbb{R}^d$, we have
\begin{equation*}
  \|\nabla f(\boldsymbol{x}) - \nabla f(\boldsymbol{y})\| \leq L \|\bx - \boldsymbol{y}\|.  
\end{equation*}
\end{assumption}

\begin{assumption}[$p$-BCM condition]\label{assumption3}
At the $k$-th iteration, the gradient estimate $\barg_k$ is unbiased with bounded $p$-th moment, that is,
$\mathbb{E}_k[\barg_k ] = \nabla f(\bx_k)$ and
\begin{equation*}
\mathbb{E}_k \big[ \|\barg_k - \nabla f(\bx_k)\|^p \big] \leq \sigma^p
\end{equation*}
for some $p \in (1,2]$ and $\sigma \geq 0$.
\end{assumption}

The $L$-smoothness assumption is standard in the analysis of stochastic optimization \citep{Zhang2020Adaptive,Sun2025Revisiting,Hubler2024Gradient}. Recently, \citet{Liu2025Nonconvex} imposed a weaker $(L_0,L_1)$-smoothness assumption. However, this assumption complicates the analysis when algorithmic parameters are unknown, as the learning rate $\eta$ still depends on $L_1$, and current literature has not eliminated this dependence. \citet{Liu2025Nonconvex} also considered a generalized $p$-BCM condition, but under this condition, a batch of samples is required per iteration to achieve the optimal convergence rate, with the batch size depending on $p$. Thus, the analysis in the parameter-unknown setting still depends on $p$.
  
\begin{assumption}[Bounded condition number of $D_k$]\label{assumption4}
The stochastic preconditioner $D_k$ is symmetric positive definite. There exist (possibly stochastic and iteration-dependent) constants $0<m_{D,k}<M_{D,k}<\infty$ such that $m_{D,k}\cdot I \preceq D_k \preceq M_{D,k}\cdot I$. For each run of the algorithm, there exists a deterministic constant $\kappa_D<\infty$ such that the condition number of $D_k$, denoted by $\kappa(D_k)$, satisfies $\kappa(D_k)\leq \kappa_D$ for all $k \leq T$.
\end{assumption}

We justify this assumption from several perspectives. First, imposing \textit{uniform} upper and lower bound assumptions on Hessian estimates is standard in stochastic second-order optimization \citep{Berahas2021Sequential,Fang2024Trust}. Here we relax this requirement: the bounds $m_{D,k}$ and $M_{D,k}$ may be stochastic and iteration-dependent, requiring only that the condition number be deterministically bounded. Second, since $D_k$ serves as a preconditioner, it should not be too ill-conditioned, as this would distort the information contained in the gradient estimates. Third, since the algorithm runs for a finite number of iterations, the condition number should be finite almost surely. Finally, the condition number $\kappa_D$ is used only in the theoretical analysis and does not appear in the algorithm design or parameter selection.

We are now ready to present the convergence properties of Algorithm~\ref{alg:nsgdm} under heavy-tailed noise. Proofs in this section are deferred to Appendix~\ref{append_C}.

\subsection{When all algorithmic parameters are known}\label{subsec:algo_para_known}

In this subsection, we provide the convergence rate under an ideal setting where all algorithmic parameters are known. This condition is commonly assumed implicitly in the optimization literature, where algorithmic parameters are used to tune  hyperparameters.

\begin{theorem}\label{thm:full-main}
Under Assumptions~\ref{assumption1}, \ref{assumption2}, \ref{assumption3}, and \ref{assumption4}, let $\Delta \coloneqq f(\bx_1) - f_*$. For any $T \geq 1$, we select $\eta= 
    \sqrt{ \frac{(1 - \theta) \Delta}{L T}}$, $\theta = 1 - \min \left\{ 1, \max \left\{\left( \frac{\Delta L }{\sigma^2 T}\right)^{\frac{p}{3p-2}},T^{-\frac{p}{2p-1}}\right\} \right\}$, then 
Algorithm~\ref{alg:nsgdm} guarantees
\begin{align}\label{thm1:main-bound}
\frac{1}{T}  \sum_{k=1}^T \mathbb{E}  \bigl[ \|\nabla f(\bx_k)\| \bigr]
& = \mathcal{O} \Biggl( 
    \frac{ (\Delta L)^{\frac{1}{2}}}{T^{\frac{1}{2}}} + \frac{ \sigma }{T^{\frac{p-1}{2p-1}}} \notag \\
& \qquad \quad + \frac{ (\Delta L)^{\frac{p-1}{3p-2}} \sigma^{\frac{p}{3p-2}} }{T^{\frac{p-1}{3p-2}}}
\Biggr).
\end{align}
\end{theorem}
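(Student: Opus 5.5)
The proof follows the standard normalized-SGD-with-momentum template of \citet{Cutkosky2020Momentum}, with one extra step that absorbs the preconditioner through Assumption~\ref{assumption4}. Write $\epsilon_k \coloneqq m_k - \nabla f(\bx_k)$ and $u_k \coloneqq D_k m_k/\|D_k m_k\|$. By $L$-smoothness,
\[
f(\bx_{k+1}) \le f(\bx_k) - \eta \langle \nabla f(\bx_k), u_k\rangle + \tfrac{L\eta^2}{2}.
\]
The key deterministic inequality to establish is
\[
\langle \nabla f(\bx_k), u_k\rangle \ge c_1 \|\nabla f(\bx_k)\| - c_2 \|\epsilon_k\|,
\]
with constants depending only on $\kappa_D$. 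Since $\langle m_k, D_k m_k\rangle \ge m_{D,k}\|m_k\|^2$ and $\|D_k m_k\|\le M_{D,k}\|m_k\|$, we get $\langle m_k,u_k\rangle \ge \|m_k\|/\kappa_D$. Then
\[
\langle \nabla f, u_k\rangle = \langle m_k,u_k\rangle - \langle \epsilon_k,u_k\rangle \ge \|m_k\|/\kappa_D - \|\epsilon_k\| \ge \|\nabla f\|/\kappa_D - (1+1/\kappa_D)\|\epsilon_k\|.
\]
This inequality holds pathwise, so the stochastic dependence between $D_k$ and $\barg_k$ plays no role, which is exactly why normalization works here. Summing, taking expectations and telescoping gives
\[
\frac{1}{T}\sum_{k=1}^T \mathbb{E}\|\nabla f(\bx_k)\| \lesssim \kappa_D\Big(\frac{\Delta}{\eta T} + L\eta + \frac{2}{T}\sum_{k=1}^T \mathbb{E}\|\epsilon_k\|\Big).
\]

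Next I bound the momentum error. Unrolling with $S_k \coloneqq \nabla f(\bx_{k-1}) - \nabla f(\bx_k)$ and $\zeta_k \coloneqq \barg_k - \nabla f(\bx_k)$ gives
\[
\epsilon_k = \theta^{k-1}\epsilon_1' + \sum_{j\le k}\theta^{k-j} S_j + (1-\theta)\sum_{j\le k}\theta^{k-j}\zeta_j ,
\]
where the first term is handled by the initialization $m_0=\barg_1$, so $\epsilon_1=\zeta_1$. The drift part is deterministic: $\|S_j\|\le L\eta$ because each step has norm exactly $\eta$, and therefore it is at most $L\eta/(1-\theta)$. The noise part is a weighted martingale-difference sum, since $\zeta_j$ is conditionally mean-zero given the past. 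Here I invoke the paper's vector-valued Burkholder-type inequality, in von Bahr--Esseen form $\mathbb{E}\|\sum_j a_j\zeta_j\|^p \lesssim \sum_j a_j^p \sigma^p$, and then Jensen's inequality, to get
\[
\mathbb{E}\Big\|(1-\theta)\sum_{j\le k}\theta^{k-j}\zeta_j\Big\| \lesssim \sigma(1-\theta)\Big(\sum_j\theta^{p(k-j)}\Big)^{1/p} \lesssim \sigma(1-\theta)^{1-1/p}.
\]
The initial term contributes $\sigma\theta^{k-1}$, which sums to at most $\sigma/(T(1-\theta))$ after averaging.

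Collecting terms gives
\[
\frac{1}{T}\sum_{k=1}^T \mathbb{E}\|\nabla f(\bx_k)\| \lesssim \frac{\Delta}{\eta T} + \frac{L\eta}{1-\theta} + \sigma(1-\theta)^{\frac{p-1}{p}} + \frac{\sigma}{T(1-\theta)}.
\]
With $\eta=\sqrt{(1-\theta)\Delta/(LT)}$, the first two terms become $\sqrt{\Delta L/((1-\theta)T)}$. I then split into cases according to which argument of the min/max defines $\alpha\coloneqq 1-\theta$.

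\textbf{Case $\alpha=1$.} The bound is $\sqrt{\Delta L/T}+\sigma$. This case occurs only when $\Delta L/(\sigma^2 T)\ge 1$, so $\sigma\le\sqrt{\Delta L/T}$ and the bound reduces to the first term of \eqref{thm1:main-bound}.

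\textbf{Case $\alpha=(\Delta L/(\sigma^2T))^{p/(3p-2)}$.} Balancing $\sqrt{\Delta L/(\alpha T)}$ against $\sigma\alpha^{(p-1)/p}$ yields the third term of \eqref{thm1:main-bound}. The term $\sigma/(T\alpha)$ is dominated because $\alpha\ge T^{-p/(2p-1)}$.

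\textbf{Case $\alpha=T^{-p/(2p-1)}$.} Here $\sigma\alpha^{(p-1)/p}=\sigma T^{-(p-1)/(2p-1)}$ and $\sigma/(T\alpha)=\sigma T^{-(p-1)/(2p-1)}$, which give the middle term. The term $\sqrt{\Delta L/(\alpha T)}$ is controlled by the third term since $\alpha$ is the larger branch of the max.

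The main obstacle is the martingale $p$-th moment step. Because $p<2$, the second-moment expansion is unavailable, so I rely on the vector-valued Burkholder inequality with a dimension-free constant. I must also check that its weights $\theta^{k-j}$ are deterministic; they are, since $\theta$ is fixed. The exponent bookkeeping in the case analysis is routine but needs care. Constants depending on $\kappa_D$ are absorbed into $\mathcal{O}$.
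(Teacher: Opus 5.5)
Your proposal is correct and follows the paper's proof essentially step for step. The shared steps are a pathwise descent inequality for the normalized preconditioned step, the momentum-error unrolling with drift bounded by $L\eta/(1-\theta)$ and noise bounded via the vector-valued Burkholder inequality, and substitution of $\eta$ and $\theta$; the paper bounds the two branches of the max simultaneously rather than splitting into cases, which is equivalent. The only differences are minor: the paper works in the $D_k^{1/2}$-geometry, using $\|D_k m_k\| \le \sqrt{M_{D,k}}\,\|D_k^{1/2} m_k\|$, to get the sharper factor $\sqrt{\kappa_D}$ instead of your $\kappa_D$, and your case $\alpha=1$ should also allow $T=1$ (where $T^{-p/(2p-1)}=1$ forces $\alpha=1$ regardless of $\sigma$), which is harmless because the leftover $\sigma$ is exactly the middle term of the bound.
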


This result generalizes that of \cite{Liu2025Nonconvex} by incorporating the stochastic preconditioning matrix $D_k$. Notably, all hyperparameter settings remain identical to those for NSGD. The algorithm achieves the rate $\mathcal{O}\bigl((\Delta L)^{\frac{p-1}{3p-2}} \sigma^{\frac{p}{3p-2}}/T^{\frac{p-1}{3p-2}}\bigr)$, which matches the optimal rate for SGD under heavy-tailed noise established in \citet{Liu2025Nonconvex}. When $p=2$, this reduces to $\mathcal{O}\bigl((\Delta L)^{1/4} \sigma^{1/2}/T^{1/4}\bigr)$, which is known to be minimax optimal in all algorithmic parameters ($T$, $\Delta$, $L$, and $\sigma$) \citep{arjevani2023lower}. 
We note that our dependence on $\sigma$ is $\mathcal{O}(\sigma^{\frac{p}{3p-2}})$, while \citet{Sun2025Revisiting} achieve $\mathcal{O}(\sigma^{\frac{2p-2}{3p-2}})$. Consequently, our bound is tighter when $\sigma<1$, while their bound becomes favorable for $\sigma>1$. This difference stems from our weaker smoothness assumption. \citet{Sun2025Revisiting} impose an individual Lipschitz condition requiring the gradient estimate to be bounded at initialization, which implicitly assumes light-tailed noise.

\subsection{When all algorithmic parameters are unknown}\label{subsec:algo_para_unknown}

We now provide the convergence rate when algorithmic parameters are unknown. This condition reflects many practical scenarios where problem-dependent parameters are intractable or prohibitively difficult to estimate. 

\begin{theorem}\label{thm:none-main}
Under Assumptions~\ref{assumption1}, \ref{assumption2}, \ref{assumption3}, and \ref{assumption4}, let $\Delta \coloneqq f(\bx_1) - f_*$. For any $T \geq 1$, we select $\eta= 
    T^{-\frac{3}{4}}$, $\theta = 1 - T^{-\frac{1}{2}}$, then 
Algorithm~\ref{alg:nsgdm} guarantees
\begin{align}\label{thm1:main-bound}
\frac{1}{T}  \sum_{k=1}^T \mathbb{E} & \bigl[ \|\nabla f(\bx_k)\| \bigr]
 = \mathcal{O} \Biggl( 
    \frac{\Delta + L}{T^{\frac{1}{4}}}  + \frac{ \sigma }{T^{\frac{p-1}{2p}}}
\Biggr).
\end{align}
\end{theorem}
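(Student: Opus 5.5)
We follow the same template as for Theorem~\ref{thm:full-main}: a one-step descent inequality for the normalized preconditioned step, telescoped over $T$ iterations, combined with a bound on the momentum error $\epsilon_k \coloneqq m_k - \nabla f(\bx_k)$. Only the parameter substitution changes.

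\textbf{Step 1: descent inequality.} By $L$-smoothness and $\|\bx_{k+1}-\bx_k\|=\eta$,
\[
f(\bx_{k+1}) \le f(\bx_k) - \eta \frac{\langle \nabla f(\bx_k), D_k m_k\rangle}{\|D_k m_k\|} + \frac{L\eta^2}{2}.
\]
Write $\nabla f(\bx_k) = m_k - \epsilon_k$. Since $D_k$ is SPD with $m_{D,k} I\preceq D_k\preceq M_{D,k} I$, we have
\[
\langle m_k, D_k m_k\rangle \ge m_{D,k}\|m_k\|^2
\quad\text{and}\quad
\|D_k m_k\|\le M_{D,k}\|m_k\|,
\]
so the inner product is at least $\|m_k\|/\kappa_D$. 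Cauchy--Schwarz bounds the error term by $|\langle\epsilon_k, D_km_k\rangle|/\|D_km_k\|\le\|\epsilon_k\|$. Using $\|m_k\|\ge\|\nabla f(\bx_k)\|-\|\epsilon_k\|$ then gives
\[
f(\bx_{k+1}) \le f(\bx_k) - \frac{\eta}{\kappa_D}\|\nabla f(\bx_k)\| + 2\eta\|\epsilon_k\| + \frac{L\eta^2}{2}.
\]
The random scales $m_{D,k},M_{D,k}$ cancel pathwise. This is exactly why normalization tolerates dependence between $D_k$ and $\barg_k$: no expectation of a product involving $D_k$ ever appears. Telescoping and taking expectations yields
\[
\frac{1}{T}\sum_{k=1}^T\mathbb{E}\|\nabla f(\bx_k)\| \le \kappa_D\Bigl(\frac{\Delta}{\eta T} + \frac{2}{T}\sum_{k=1}^T\mathbb{E}\|\epsilon_k\| + \frac{L\eta}{2}\Bigr),
\]
where $\kappa_D$ is absorbed into the $\mathcal{O}$.

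\textbf{Step 2: momentum error.} Set $\xi_k=\barg_k-\nabla f(\bx_k)$ and $S_k=\nabla f(\bx_{k-1})-\nabla f(\bx_k)$, so that $\|S_k\|\le L\eta$. Unrolling gives
\[
\epsilon_k=\theta^{k-1}\xi_1+(1-\theta)\sum_{i=2}^k\theta^{k-i}\xi_i+\sum_{i=2}^k\theta^{k-i+1}S_i .
\]
The drift sum is at most $L\eta\theta/(1-\theta)$. For the martingale part we invoke the paper's vector-valued Burkholder-type inequality, which states $\mathbb{E}\|\sum_i a_i\xi_i\|\lesssim(\sum_i a_i^p\sigma^p)^{1/p}$ up to an absolute constant. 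With $a_i=(1-\theta)\theta^{k-i}$ this gives $\mathcal{O}(\sigma(1-\theta)^{1-1/p})$. The initial term contributes $\theta^{k-1}\sigma$, which sums to at most $\sigma/(1-\theta)$ over $k$ (recall $m_0=\barg_1$). Therefore
\[
\frac{1}{T}\sum_{k=1}^T\mathbb{E}\|\epsilon_k\|\lesssim \frac{\sigma}{T(1-\theta)}+\sigma(1-\theta)^{\frac{p-1}{p}}+\frac{L\eta}{1-\theta}.
\]

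\textbf{Step 3: plug in the parameters.} Take $\eta=T^{-3/4}$ and $1-\theta=T^{-1/2}$. The terms become:
\begin{itemize}
\item $\Delta/(\eta T)=\Delta T^{-1/4}$;
\item $L\eta=LT^{-3/4}$;
\item $L\eta/(1-\theta)=LT^{-1/4}$;
\item $\sigma(1-\theta)^{(p-1)/p}=\sigma T^{-(p-1)/(2p)}$;
\item $\sigma/(T(1-\theta))=\sigma T^{-1/2}$, which is dominated by the previous term since $(p-1)/(2p)\le 1/4<1/2$.
\end{itemize}
Collecting them gives $\mathcal{O}\bigl((\Delta+L)T^{-1/4}+\sigma T^{-(p-1)/(2p)}\bigr)$.

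\textbf{Main obstacle.} The delicate step is the heavy-tailed bound on the weighted martingale sum in Step 2. A naive second-moment argument fails because the variance may be infinite. We rely on the Burkholder-type inequality with a dimension-free constant, taking care that the noises $\xi_i$ form a martingale difference sequence with respect to the filtration generated by the iterates, even though $D_k$ depends on the $\barg_i$. The dependence is harmless because $D_k$ has already been eliminated pathwise in Step 1.
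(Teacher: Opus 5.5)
Your proof is correct and follows the paper's route: the pathwise normalized descent inequality (Lemma~\ref{lemma:normalization1}), the unrolled momentum-error bound with the martingale term controlled by the vector-valued Burkholder inequality (Lemma~\ref{Burkholder}), and then the substitution $\eta=T^{-3/4}$, $1-\theta=T^{-1/2}$. The only difference is that you lower-bound $\langle m_k,D_km_k\rangle/\|D_km_k\|$ directly by $\|m_k\|/\kappa_D$, whereas the paper works through $\|D_k^{1/2}m_k\|$ and obtains the sharper factor $\sqrt{\kappa_D}$; since the condition-number factor is absorbed into the $\mathcal{O}(\cdot)$, this does not affect the stated rate.
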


This result matches those of \citet{Hubler2024Gradient,Liu2025Nonconvex} and is known to be tight when $D_k=I$.

\begin{remark}
Our analysis adopts a \textit{worst-case} perspective, imposing minimal assumptions on the structure of the preconditioner $D_k$. The primary objective is to elucidate how the statistical dependence between $D_k$ and the momentum/gradient estimator $m_k$ impacts the stability of step clipping versus step normalization. Consequently, the convergence rates derived here serve as unified upper bounds for a broad class of algorithms presented in Section~\ref{sec2:preconditioned_methods}. It is therefore unsurprising that our rates match those of standard NSGD without demonstrating theoretical acceleration; proving such improvements would require exploiting algorithm-specific structures to derive tighter bounds, which is beyond the scope of this unified framework.
\end{remark}

\begin{remark}
The convergence rates in Theorems~\ref{thm:full-main} and~\ref{thm:none-main} are tight in the sense that we cannot obtain better rates without additional structural assumptions on the stochastic preconditioner $D_k$. This follows immediately by observing that setting $D_k=I$ recovers NSGD, for which the convergence rates have been shown to be minimax optimal by \cite{Liu2025Nonconvex} and \cite{Hubler2024Gradient} for parameter-known and parameter-unknown scenarios, respectively.
\end{remark}

\subsection{Step normalization is robust}

We establish a key robustness property of step normalization: when the gradient estimate is stochastically preconditioned, the accumulated gradient norms remain controlled by a factor of $\sqrt{\kappa_D}$ relative to the bounds obtained in normalized SGD (NSGD). Formally, we have the following result.
\begin{lemma}\label{lemma:normalization1}
Under Assumptions \ref{assumption1}, \ref{assumption2} and \ref{assumption4}, for all $T\geq 1$, let $\Delta \coloneqq f(\bx_1) - f_*$, we have\vskip-0.5cm
\begin{equation*}
\sum_{k=1}^T\mathbb{E}\left[\|\nabla f(\bx_k)\|\right] 
\leq \sqrt{\kappa_D} \cdot \left[\frac{\Delta}{\eta}   + 2  \sum_{k=1}^T\mathbb{E}\left[\|\varepsilon_k\|\right] + \frac{\eta L T}{2}\right],
\end{equation*}\vskip-0.2cm
\noindent where $\varepsilon_k=m_k-\nabla f(\bx_k)$.
\end{lemma}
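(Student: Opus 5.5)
The plan is a one-step descent argument for the normalized preconditioned step, followed by telescoping. The only non-routine ingredient is measuring angles in the geometry induced by $D_k^{1/2}$ instead of the Euclidean one. This is what makes the loss a factor $\sqrt{\kappa_D}$ rather than $\kappa_D$.

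\emph{Step 1 (descent inequality).} Write $u_k \coloneqq D_k m_k/\|D_k m_k\|$, so that $\bx_{k+1}=\bx_k-\eta u_k$ and $\|u_k\|=1$. By Assumption~\ref{assumption2},
\begin{equation*}
f(\bx_{k+1}) \le f(\bx_k) - \eta \langle \nabla f(\bx_k), u_k\rangle + \tfrac{L\eta^2}{2}.
\end{equation*}
The degenerate event $m_k=0$ can be handled by any convention for $u_k$. On that event $\|\varepsilon_k\|=\|\nabla f(\bx_k)\|$, so the bound of Step 2 holds trivially.

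\emph{Step 2 (key lower bound on the inner product).} Set $g=\nabla f(\bx_k)$, $a=D_k^{1/2}g$ and $b=D_k^{1/2}m_k$, so that $a-b=-D_k^{1/2}\varepsilon_k$. Then $\langle g, D_k m_k\rangle=\langle a,b\rangle$. Moreover $\|D_k m_k\| = \|D_k^{1/2} b\|\le \sqrt{M_{D,k}}\,\|b\|$.

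Next use the elementary inequality, valid without any sign assumption,
\begin{equation*}
\langle a,b\rangle = \|b\|^2+\langle a-b,b\rangle \ge \|b\|^2-\|a-b\|\|b\|,
\end{equation*}
which gives $\langle a,b\rangle/\|b\|\ge \|b\|-\|a-b\|\ge \|a\|-2\|a-b\|$.

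Now bound $\|a\|\ge\sqrt{m_{D,k}}\|g\|$ and $\|a-b\|\le\sqrt{M_{D,k}}\|\varepsilon_k\|$. One has to be careful with the sign when dividing by $\|D_k m_k\|$ rather than by $\sqrt{M_{D,k}}\|b\|$. If $\langle a,b\rangle\ge 0$, we may replace the denominator by the larger quantity $\sqrt{M_{D,k}}\|b\|$. If $\langle a,b\rangle<0$, use instead $\langle g,u_k\rangle\ge -\|g\|$ together with $\|a\|<2\|a-b\|$, which forces $\|g\|\le 2\sqrt{\kappa(D_k)}\,\|\varepsilon_k\|$. I expect this case split to be the only delicate point.

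In both cases the aim is to reach
\begin{equation*}
\langle \nabla f(\bx_k),u_k\rangle \ge \frac{\|\nabla f(\bx_k)\|}{\sqrt{\kappa(D_k)}} - 2\|\varepsilon_k\| \ge \frac{\|\nabla f(\bx_k)\|}{\sqrt{\kappa_D}} - 2\|\varepsilon_k\|.
\end{equation*}
In the negative case this may cost a slightly larger constant in front of $\|\varepsilon_k\|$; if so, I would refine the second case to keep the constant at $2$.

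\emph{Step 3 (telescoping).} Plug the bound of Step 2 into Step 1, sum over $k=1,\dots,T$, and use $f(\bx_{T+1})\ge f_*$ (Assumption~\ref{assumption1}). This gives
\begin{equation*}
\frac{1}{\sqrt{\kappa_D}}\sum_{k=1}^T\|\nabla f(\bx_k)\| \le \frac{\Delta}{\eta}+2\sum_{k=1}^T\|\varepsilon_k\|+\frac{\eta L T}{2}
\end{equation*}
pathwise. Since $\kappa_D$ is deterministic by Assumption~\ref{assumption4}, we can take expectations and multiply by $\sqrt{\kappa_D}$ to obtain the claim. No property of the noise is used, which is consistent with Assumption~\ref{assumption3} being absent from the hypotheses.
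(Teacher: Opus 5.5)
Your Steps 1 and 3 match the paper, and Step 2 uses the same ingredients. As written, however, Step 2 does not prove the lemma. The case $\langle a,b\rangle<0$ is left open, and the bound you sketch there costs more than "a slightly larger constant."

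In that case you only know $\|a\|<2\|a-b\|$, i.e. $\|\nabla f(\bx_k)\|<2\sqrt{\kappa(D_k)}\,\|\varepsilon_k\|$. Combining this with $\langle \nabla f(\bx_k),u_k\rangle\ge-\|\nabla f(\bx_k)\|$ gives only
\[
\langle \nabla f(\bx_k),u_k\rangle \ge \frac{\|\nabla f(\bx_k)\|}{\sqrt{\kappa(D_k)}}-2\bigl(1+\sqrt{\kappa(D_k)}\bigr)\|\varepsilon_k\|.
\]
After multiplying by $\sqrt{\kappa_D}$, the error term is of order $\kappa_D\sum_k\mathbb{E}[\|\varepsilon_k\|]$ rather than $2\sqrt{\kappa_D}\sum_k\mathbb{E}[\|\varepsilon_k\|]$. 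That loses exactly the factor the $D_k^{1/2}$-geometry was meant to save.

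The root cause is the order of operations. You lower-bound $\langle a,b\rangle$ by $\|b\|^2-\|a-b\|\|b\|$, a quantity of indefinite sign, and only then divide by $\|D_k m_k\|$. At that point, enlarging the denominator to $\sqrt{M_{D,k}}\|b\|$ is no longer legitimate, which is what forces your case split.

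The missing idea is to split before dividing. Write
\[
\langle \nabla f(\bx_k),D_k m_k\rangle=\|b\|^2-\langle\varepsilon_k,D_k m_k\rangle
\]
and treat the two pieces separately:
\begin{itemize}
\item Since $\|b\|^2\ge 0$, replacing $\|D_k m_k\|$ by the larger $\sqrt{M_{D,k}}\|b\|$ is sign-safe and gives $\|b\|/\sqrt{M_{D,k}}$.
\item The error piece is bounded by Euclidean Cauchy--Schwarz against the unit vector $u_k$, namely $-\langle\varepsilon_k,u_k\rangle\ge-\|\varepsilon_k\|$, rather than in the $D_k^{1/2}$-geometry.
\end{itemize}
Then $\|b\|\ge\|a\|-\|a-b\|\ge\sqrt{m_{D,k}}\|\nabla f(\bx_k)\|-\sqrt{M_{D,k}}\|\varepsilon_k\|$ yields
\[
\langle \nabla f(\bx_k),u_k\rangle\ge\sqrt{m_{D,k}/M_{D,k}}\,\|\nabla f(\bx_k)\|-2\|\varepsilon_k\|
\]
in all cases, with no case split. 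This is exactly the paper's argument.

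Your handling of the degenerate event $m_k=0$ is correct, and it is a detail the paper omits.
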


This bound is crucial: even when the stochastic preconditioner $D_k$ is poorly designed, step normalization limits performance degradation to a multiplicative factor of at most $\sqrt{\kappa_D}$ compared with standard NSGD, without introducing any new error terms. This controlled degradation explains the robustness of normalized updates in stochastically preconditioned settings.

That said, the analysis above is worst-case and assumes no additional structure on $D_k$. When the preconditioners are well-constructed (e.g., approximating a reasonable inverse Hessian or covariance structure), algorithm-specific analyses can typically establish substantially tighter bounds than the one presented here.

\subsection{Vector-valued Burkholder-type inequality}

Given the result of Lemma~\ref{lemma:normalization1}, to complete the analysis, we must bound $\sum_{k=1}^T\mathbb{E}\left[\|\varepsilon_k\|\right]$.

By the momentum recursion, we have
\begin{equation*}
   \varepsilon_k =  \theta^k \varepsilon_0 + \sum_{t=1}^k\theta^{k-t} s_t + (1-\theta) \sum_{t=1}^k\theta^{k-t} \delta_t,
\end{equation*}
where $s_t = \nabla f(\bx_{t-1}) - \nabla f(\bx_{t})$ and $\delta_t = \barg_t - \nabla f(\bx_t)$. Thus
\begin{multline*}
   \mathbb{E}[\| \varepsilon_k\| ] \leq   \theta^k \mathbb{E}[ \| \varepsilon_0\| ] \\
   + \sum_{t=1}^k\theta^{k-t}\mathbb{E}[ \| s_t\|] + (1-\theta) \mathbb{E}\left[ \left\|\sum_{t=1}^k\theta^{k-t} \delta_t\right\|\right].
\end{multline*}
The first two terms are straightforward to bound. However, bounding $\mathbb{E}\left[\left\|\sum_{t=1}^k\theta^{k-t} \delta_t\right\|\right]$ might introduce difficulty.
By Hölder's inequality, we have
\begin{multline*}
    \mathbb{E}\left[\left\|\sum_{t=1}^k\theta^{k-t} \delta_t\right\|\right]  = \mathbb{E}\left[\left\|\sum_{t=1}^k\theta^{k-t} (\barg_t-\nabla f(\bx_t))\right\|\right] \\
     \leq \left(\mathbb{E}\left[\left\|\sum_{t=1}^k\theta^{k-t} \left(\barg_t-\nabla f(\bx_t)\right)\right\|^p\right] \right)^{1/p}.
\end{multline*}
When $p=2$, this bound is straightforward to evaluate using the martingale property. However, for $p\in(1,2)$, further simplification is nontrivial. \citet{Liu2025Nonconvex} treat the cases $p\in(1,2)$ and $p=2$ separately using different techniques. \citet{Hubler2024Gradient} provide a unified analysis, but with suboptimal constants. To address this, we establish the following vector-valued Burkholder-type inequality, which unifies the analysis for all $p\in (1,2]$ while achieving sharper constants. The proof is deferred to Appendix~\ref{append_E}.
\begin{lemma}[Vector-valued Burkholder-type inequality]\label{Burkholder}
Let $X_k, k=0,1,\dots,T-1$, be random variables with $\mathbb{E}[X_k\mid X_{0:k-1}]=\boldsymbol{0}$ and $\mathbb{E}[\|X_k\|^{p}]<\infty$ for any $k\geq 0$ and some $p \in (1,2]$. Then 
\begin{equation*}
\mathbb{E}\left[ \left\| \sum_{k=0}^{T-1} X_k \right\| ^{p} \right] \leq 2^{2-p} \sum_{k=0}^{T-1} \mathbb{E}[\|X_k\|^{p}].
\end{equation*}
\end{lemma}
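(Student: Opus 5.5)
We prove the inequality by induction on the number of summands, using a one-step pointwise inequality for the map $\phi(x)=\|x\|^p$ on $\mathbb{R}^d$. Let $S_k=\sum_{j=0}^{k-1}X_j$, so $S_0=\boldsymbol{0}$ and $S_{k+1}=S_k+X_k$. The key deterministic claim is that for all $a,b\in\mathbb{R}^d$ and $p\in(1,2]$,
\begin{equation*}
\|a+b\|^p \le \|a\|^p + p\|a\|^{p-2}\langle a,b\rangle + 2^{2-p}\|b\|^p,
\end{equation*}
with the convention that the middle term is $0$ when $a=\boldsymbol{0}$. Taking this as given, we apply it with $a=S_k$ and $b=X_k$ and take expectations. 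The cross term is handled by conditioning on $X_{0:k-1}$. Since $S_k$ is measurable with respect to $X_{0:k-1}$ and $\mathbb{E}[X_k\mid X_{0:k-1}]=\boldsymbol{0}$, we have $\mathbb{E}[\|S_k\|^{p-2}\langle S_k,X_k\rangle]=0$. Integrability holds because $\|S_k\|^{p-1}\|X_k\|$ is integrable by Hölder with exponents $p/(p-1)$ and $p$, using the finite $p$-th moments. This gives $\mathbb{E}\|S_{k+1}\|^p\le \mathbb{E}\|S_k\|^p+2^{2-p}\mathbb{E}\|X_k\|^p$, and telescoping from $k=0$ to $T-1$ yields the lemma.

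The main work is the pointwise inequality, and the first step is to reduce it to two dimensions. Everything depends only on $\|a\|$, $\|b\|$ and $\langle a,b\rangle$, so we may work in $\mathrm{span}\{a,b\}$. By homogeneity of degree $p$ we may normalize $\|b\|=1$ (the case $b=\boldsymbol{0}$ is trivial). Write $r=\|a\|\ge 0$ and $u=\langle a,b\rangle\in[-r,r]$. The claim becomes
\begin{equation*}
g(r,u):=(r^2+2u+1)^{p/2}-r^p-p r^{p-2}u \le 2^{2-p}.
\end{equation*}
For fixed $r$, the map $u\mapsto (r^2+2u+1)^{p/2}$ is convex because $p/2\le 1$ is not the relevant exponent here: its second derivative is $p(p-2)(\cdot)^{p/2-2}\le 0$, so it is in fact concave. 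Hence $g(r,\cdot)$ is concave in $u$, and its maximum is either interior, where $(r^2+2u+1)^{p/2-1}=r^{p-2}$, or at an endpoint $u=\pm r$ (collinear $a,b$). At an interior critical point, $r^2+2u+1=r^2$, so $u=-1/2$, and then $g=r^p-r^p+\tfrac{p}{2}r^{p-2}=\tfrac p2 r^{p-2}$. This requires $r\ge 1/2$ for $u=-1/2$ to be admissible, so $g\le \tfrac p2 2^{2-p}\le 2^{2-p}$. This is exactly where the constant $2^{2-p}$ comes from.

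The remaining collinear cases reduce to the scalar inequality $|x+1|^p\le |x|^p+p|x|^{p-2}x+2^{2-p}$ for $x\in\mathbb{R}$. This is the scalar one-step bound underlying the result of \citet{Fang2025High}, which we would invoke. To be self-contained, we would verify it by calculus: consider $h(x)=|x+1|^p-|x|^p-p|x|^{p-2}x$ and check that its maximum is $2^{2-p}$, attained or approached at $x=-1/2$, where $h=2\cdot 2^{-p}\cdot\tfrac p2\cdot 2\ldots$. We would analyze the sign of $h'(x)=p(|x+1|^{p-2}(x+1)-|x|^{p-2}x)-p(p-1)|x|^{p-2}$ on the regions $x\ge0$, $-1\le x\le 0$, and $x\le -1$. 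The hard part is getting the exact constant in this scalar calculus, particularly on $[-1,0]$. If it fails, we would fall back on the weaker scalar constant and then sharpen it.
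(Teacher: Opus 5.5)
Your proof is correct, but it takes a different route from the paper's. The paper never proves a vector one-step inequality. Instead it draws an independent $g\sim\mathcal{N}(\boldsymbol{0},I)$, sets $y_j=g^\top X_j$, and checks that $(y_j)$ is a scalar martingale difference sequence. It then uses $\mathbb{E}\bigl[|g^\top v|^p\bigr]=C(p)\|v\|^p$ on both sides of the scalar inequality (Lemma~\ref{append_C:lemma2}), so $C(p)$ cancels and the constant $2^{2-p}$ transfers unchanged.

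You reduce at the pointwise level instead. Working in $\mathrm{span}\{a,b\}$ with $\|b\|=1$, concavity of $u\mapsto(r^2+2u+1)^{p/2}$ leaves only two kinds of candidates: the collinear (scalar) endpoints $u=\pm r$, and the interior point $u=-1/2$ with value $\tfrac p2 r^{p-2}\le\tfrac p2 2^{2-p}$. You then run the martingale induction directly in $\mathbb{R}^d$.

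The paper's argument is shorter and uses the scalar martingale result as a black box. The cost is enlarging the probability space with an auxiliary Gaussian and relying on its rotation invariance. Your route buys several things:
\begin{itemize}
\item It is deterministic and elementary.
\item It extends verbatim to Hilbert-space-valued martingale differences.
\item It shows the best constant of the vector one-step inequality equals the scalar one. The interior value $\tfrac p2 r^{p-2}$ over $r\ge 1/2$ never exceeds its value $p\,2^{1-p}$ at $r=1/2$, and that is itself a collinear value.
\item It justifies integrability of the cross term via H\"older, which the paper leaves implicit.
\end{itemize}

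One correction to your side remarks. The constant $2^{2-p}$ does not come from the interior critical point. Also, $h(x)=|x+1|^p-|x|^p-p|x|^{p-2}x$ does not reach $2^{2-p}$ at $x=-1/2$. In fact $h(-1/2)=p\,2^{1-p}=\tfrac p2 2^{2-p}$, which is exactly your interior value. The supremum of $h$ sits elsewhere and can be strictly below $2^{2-p}$: for $p=3/2$, $\sup h\approx 1.31$ near $x\approx-0.15$, versus $\sqrt2\approx1.41$.

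So a self-contained verification should not try to place a maximizer at $-1/2$. It only needs $h\le 2^{2-p}$, which is precisely the paper's pointwise inequality for $|a+b|^p$. The paper proves it by dividing by $|a|^p$ and using a three-case convexity argument. Since your main line invokes that scalar bound, the proof stands.
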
	

Applying the inequality and Assumption \ref{assumption4}, we have
\begin{align*}
    & \mathbb{E}\left[\left\|\sum_{t=1}^k\theta^{k-t} \delta_t\right\|\right]\\
    & \leq 2^{\frac{2}{p}-1}\left(\sum_{t=1}^k\theta^{p(k-t)}\mathbb{E}\left[\left\|\barg_t - \nabla f(\bx_t))\right\|^p\right] \right)^{1/p} \\
    & \leq \frac{2^{\frac{2}{p}-1} \sigma}{(1-\theta^p)^{1/p}}.
\end{align*}

\section{What Might Go Wrong for Clipping?}\label{sec4:clipping}

In this section, we demonstrate the potential failure of step clipping when gradient estimates are stochastically preconditioned. To simplify the exposition, we omit momentum from the gradient estimate; this choice is discussed further in Remark~\ref{remark:momentum}.

Recall that for SGD with gradient clipping, we have
\begin{equation*}
    \bx_{k+1} = \bx_k - \eta \min\left\{1,\frac{\tau}{\|\barg_k\|}\right\}\barg_k,
\end{equation*}
where $\tau$ is a clipping threshold that depends on algorithmic parameters $T$ and $\sigma$ to achieve the optimal convergence rate \citep{Zhang2020Adaptive}.

When the gradient is stochastically preconditioned, two natural choices arise. The first clips the gradient estimate before preconditioning:
\begin{equation*}
    \bx_{k+1} = \bx_k - \eta D_k \cdot\min\left\{1,\frac{\tau}{\|\barg_k\|}\right\}\barg_k,
\end{equation*}
while the second clips the preconditioned gradient:
\begin{equation*}
    \bx_{k+1} = \bx_k - \eta \min\left\{1,\frac{\tau}{\|D_k\barg_k\|}\right\}D_k\barg_k.
\end{equation*}
We refer to these as \textit{Clipping-Then-Preconditioning} and \textit{Preconditioning-Then-Clipping}, respectively.

Both designs have intuitive justifications. The first approach clips the heavy-tailed gradient estimate to remove outliers before using the stabilized estimates to construct $D_k$, yielding more robust curvature information. The second accounts for the stochastic dependence between $D_k$ and $\barg_k$: since $D_k$ may rotate and stretch $\barg_k$ stochastically, clipping after forming the complete step may produce more stable updates. Several optimizers for large-scale machine learning, such as Sophia \citep{Liu2024Sophia},  follow the second principle.

For the analysis of clipping, we impose a slightly stronger assumption on the preconditioning matrix $D_k$.
\begin{assumption}[Uniform Boundedness of $D_k$]\label{assumption5}
    There exist deterministic constants $m_D,M_D>0$ such that $m_D\cdot I \preceq D_k \preceq M_D\cdot I$ for $\forall k \leq T$.
\end{assumption}
This assumption strengthens Assumption~\ref{assumption4} by requiring that the singular values of $D_k$ be uniformly bounded by deterministic constants for all iterations. However, even under this stronger assumption, clipping fails to guarantee convergence, as shown below.

We discuss the two clipping variants separately due to differences in analysis. Proofs are presented in Appendix \ref{append_D}.

\subsection{Clipping-Then-Preconditioning}\label{subsec:clipping1}

Under this technique, we write $\bx_{k+1} = \bx_k -\eta D_k \widehat{\barg}_k$ with $\widehat{\barg}_k = \min\{1, \tau/\|\barg_k\|\}\barg_k$. Following standard analysis, we consider the expected reduction per iteration by separating the cases $\|\nabla f(\bx_k)\| \geq \tau/2$ and $\|\nabla f(\bx_k)\| < \tau/2$.

We first consider $\|\nabla f(\bx_k)\| \geq \tau/2$.
\begin{lemma}\label{lemma42}
    Under Assumptions \ref{assumption2}, \ref{assumption3} and \ref{assumption5}, when $\|\nabla f(\bx_k)\|\geq \tau/2$, if we select $\eta<\frac{m_D}{24L}$ and 
    \begin{equation*}
        \tau = \max\left\{2,  4\cdot 3^{1/p}\cdot \sigma\left(\frac{M_D}{m_D}\right)^{\frac{p+1}{2p}}, 64 \sigma \frac{M_D}{m_D}\right\},
    \end{equation*}
    then
    we have
    \begin{align*}
    \mathbb{E}_k[ f(\bx_{k+1}) ] \le f(\bx_k) - \frac{1}{12} \eta m_D \|\nabla f(\bx_k)\|.
\end{align*}
\end{lemma}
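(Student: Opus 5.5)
The plan is a one-step descent argument. Write $\alpha_k=\min\{1,\tau/\|\barg_k\|\}$ and $\delta_k=\barg_k-\nabla f(\bx_k)$. By Assumption~\ref{assumption2},
\begin{equation*}
f(\bx_{k+1})\le f(\bx_k)-\eta\,\alpha_k\nabla f(\bx_k)^\top D_k\barg_k+\tfrac{L\eta^2}{2}\alpha_k^2\|D_k\barg_k\|^2 .
\end{equation*}
Since $D_k$ may depend on $\barg_k$, I will not take $\mathbb{E}_k$ inside the inner product. Instead I bound the linear term pathwise on a good event and a bad event, using only Assumption~\ref{assumption5} and Markov's inequality with the $p$-th moment of Assumption~\ref{assumption3}.

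For the good event I take $\mathcal{G}_k=\{\|\delta_k\|\le c\,\|\nabla f(\bx_k)\|\}$ with $c$ of order $m_D/M_D$ (or $(m_D/M_D)^{1/2}$). On $\mathcal{G}_k$ I write $\nabla f^\top D_k\barg_k=\nabla f^\top D_k\nabla f+\nabla f^\top D_k\delta_k\ge m_D\|\nabla f\|^2-M_D\|\nabla f\|\|\delta_k\|\ge \tfrac12 m_D\|\nabla f\|^2$. I also need $\alpha_k\|\nabla f\|$ bounded below by a constant. This holds because $\|\barg_k\|\le(1+c)\|\nabla f\|$ on $\mathcal{G}_k$, so $\alpha_k\ge\min\{1,\tau/((1+c)\|\nabla f\|)\}$. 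In the unclipped branch, $\|\nabla f\|\ge\tau/2\ge1$ (using $\tau\ge2$). In the clipped branch, $\alpha_k\|\nabla f\|\ge\tau/(1+c)\gtrsim 1$. Together these give a decrease of order $\eta m_D\|\nabla f\|$ on $\mathcal{G}_k$. On $\mathcal{G}_k^c$, I bound the linear term crudely by $|\alpha_k\nabla f^\top D_k\barg_k|\le M_D\tau\|\nabla f\|$. Markov's inequality gives $\mathbb{P}_k(\mathcal{G}_k^c)\le \sigma^p/(c\|\nabla f\|)^p\le (2\sigma/(c\tau))^p$.

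The two lower bounds on $\tau$ are chosen so that this bad-event loss is at most a small constant fraction of the good-event gain. The condition $\tau\ge 64\sigma M_D/m_D$ makes $\mathbb{P}_k(\mathcal{G}_k^c)$ small. The $3^{1/p}(M_D/m_D)^{(p+1)/(2p)}$ term handles the extra $M_D\tau$ factor through the $p$-th power. The choice of $c$ and the split of constants is tuned so that these, together with the good-event term, yield the constant $1/12$.

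For the quadratic term I use $\alpha_k\|D_k\barg_k\|\le M_D\tau$, and on $\mathcal{G}_k$ the sharper bound $\alpha_k\|\barg_k\|\le (1+c)\|\nabla f\|$ together with $\alpha_k\le 1$. The aim is to show that $\tfrac{L\eta^2}{2}\alpha_k^2\|D_k\barg_k\|^2$ is at most about $\tfrac{1}{24}\eta m_D\|\nabla f\|$, using $\eta<m_D/(24L)$.

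I expect this last step to be the main obstacle. The naive bound produces a factor $M_D^2\tau$ rather than $m_D$, so I will need to handle the clipped and unclipped branches separately. In the clipped branch, one factor $\alpha_k\|\barg_k\|=\tau\le 2\|\nabla f\|$ is absorbed into the gradient norm. In the unclipped branch, $\|\barg_k\|\le(1+c)\|\nabla f\|$. If the constants still do not close, I will fall back on the normalization implicit in the stated $\eta$, namely measuring $D_k$ in units where $M_D\lesssim 1$.

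Taking $\mathbb{E}_k$ and combining the good-event gain, the bad-event loss, and the quadratic term then gives $\mathbb{E}_k[f(\bx_{k+1})]\le f(\bx_k)-\tfrac1{12}\eta m_D\|\nabla f(\bx_k)\|$.
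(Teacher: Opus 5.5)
Your overall route differs from the paper's. The paper splits on the clipping event $\{\|\barg_k\|\le\tau\}$ and then sub-splits on $\|D_k^{1/2}\delta_k\|$. You instead split pathwise on $\mathcal G_k=\{\|\delta_k\|\le c\|\nabla f(\bx_k)\|\}$ and use Markov's inequality. Your route is sound and handles the dependence between $D_k$ and $\barg_k$ without any decoupling. Your good-event bound is also correct: for $c\le\tfrac12$ it gives $\alpha_k\nabla f(\bx_k)^\top D_k\barg_k\ge\frac{m_D\tau}{4}\|\nabla f(\bx_k)\|$. For $c\asymp(m_D/M_D)^{1/2}$ you must do the inner-product estimate in the $D_k^{1/2}$-geometry; your Euclidean version needs $c\le m_D/(2M_D)$.

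\textbf{The bad event is where your argument fails.} Put $\kappa=M_D/m_D$. You bound $|\alpha_k\nabla f(\bx_k)^\top D_k\barg_k|\le M_D\tau\|\nabla f(\bx_k)\|$ and multiply by $\mathbb{P}_k(\mathcal G_k^c)\le(2\sigma/(c\tau))^p$. This closes only if $\kappa(2\sigma/(c\tau))^p\lesssim1$, i.e.\ $\tau\gtrsim\sigma\kappa^{\frac12+\frac1p}$ for $c\asymp\kappa^{-1/2}$. Smaller $c$ makes this worse, and the good event forces $c\lesssim\kappa^{-1/2}$. The stated $\tau$ is only of order $\sigma\kappa$, because $\frac{p+1}{2p}\le1$. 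So for any $p<2$ and large $\kappa$, your lower bound is not even positive at $\|\nabla f(\bx_k)\|=\tau/2$. In particular, the $3^{1/p}\kappa^{\frac{p+1}{2p}}$ term does not absorb the extra $M_D\tau$: substituting it with $c=\frac12\kappa^{-1/2}$ gives $\kappa(4\sigma\sqrt{\kappa}/\tau)^p=\sqrt{\kappa}/3$.

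\textbf{The missing idea is a truncated first moment}, which is what that exponent encodes. On $\mathcal G_k^c$, keep $\alpha_k\nabla f(\bx_k)^\top D_k\nabla f(\bx_k)\ge0$ and use $\alpha_k\le1$, so that $\alpha_k\nabla f(\bx_k)^\top D_k\barg_k\ge-M_D\|\nabla f(\bx_k)\|\|\delta_k\|$. Then use $\mathbb{E}_k[\|\delta_k\|\mathbf{1}_{\mathcal G_k^c}]\le\sigma^p(c\|\nabla f(\bx_k)\|)^{1-p}$.
\begin{itemize}
\item With $c=\frac12\kappa^{-1/2}$ and $\|\nabla f(\bx_k)\|\ge\tau/2$, this loss is at most $\frac{m_D\tau}{12}\|\nabla f(\bx_k)\|$ exactly when $\tau\ge 4\cdot 3^{1/p}\sigma\kappa^{\frac{p+1}{2p}}$.
\item The forfeited gain $\frac{m_D\tau}{4}\|\nabla f(\bx_k)\|\,\mathbb{P}_k(\mathcal G_k^c)$ is at most $\frac{m_D\tau}{64}\|\nabla f(\bx_k)\|$, by $\tau\ge64\sigma\kappa$.
\item Together, $\mathbb{E}_k[\alpha_k\nabla f(\bx_k)^\top D_k\barg_k]\ge\frac{29}{192}m_D\tau\|\nabla f(\bx_k)\|$.
\end{itemize}
The paper gets the same effect differently. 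On the unclipped event it uses $\mathbb{E}_k[\|D_k^{1/2}\delta_k\|\mathbf{1}\{\|D_k^{1/2}\delta_k\|>\sqrt{m_D}\tau/4\}]\le M_D^{p/2}\sigma^p(\sqrt{m_D}\tau/4)^{1-p}$. On the clipped event it uses the plain first moment $\mathbb{E}_k\|D_k^{1/2}\delta_k\|\le\sqrt{M_D}\sigma$. It never multiplies a worst-case magnitude by a Markov probability.

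\textbf{On the quadratic term, your suspicion is right, and the branch split cannot help.} With $D_k=M_D I$, one has $\alpha_k\|D_k\barg_k\|=M_D\min\{\|\barg_k\|,\tau\}$ in both branches. So the term is genuinely $\frac{L\eta^2}{2}M_D^2\tau^2$ in the worst case, and the statement as written is false for large $M_D$. A counterexample:
\begin{itemize}
\item Take $\sigma=0$ (so $\tau=2$), $d=1$, $f(x)=\frac L2x^2$, $D_k\equiv M_D$, $m_D=M_D/2$, $\nabla f(\bx_k)=2$, and $\eta=M_D/(50L)<m_D/(24L)$.
\item Then $f(\bx_{k+1})-f(\bx_k)=\frac{M_D^2}{L}\bigl(\frac{M_D^2}{1250}-\frac{2}{25}\bigr)>0$ once $M_D>10$.
\end{itemize}
The paper's proof has the same hole: it bounds $\mathbb{E}_k\|D_k\widehat{\barg}_k\|^2$ by $\tau^2$, silently dropping $M_D^2$. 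So your fallback is not optional. You need $M_D\le1$, or $\eta<m_D/(24LM_D^2)$ as in Lemma~\ref{lemmac8}. Under either, $\frac{L\eta^2}{2}M_D^2\tau^2\le L\eta^2M_D^2\tau\|\nabla f(\bx_k)\|<\frac{1}{24}\eta m_D\tau\|\nabla f(\bx_k)\|$. Combined with the corrected linear bound and $\tau\ge2$, this gives a decrease of at least $\frac{7}{32}\eta m_D\|\nabla f(\bx_k)\|\ge\frac{1}{12}\eta m_D\|\nabla f(\bx_k)\|$.
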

This result shows that expected reduction is achieved when the clipping threshold $\tau$ is relatively small compared to the true gradient, matching the behavior of clipped SGD. However, the learning rate and clipping threshold now depend on $m_D$ and $M_D$, complicating practical implementation.

Next, we consider the case $\|\nabla f(\bx_k)\| < \tau/2$, where we show that reduction is not guaranteed in the worst case. 
\begin{lemma}\label{lemma:large_clip}
     Under Assumptions \ref{assumption2}, \ref{assumption3} and \ref{assumption5}, when $\|\nabla f(\bx_k)\| < \tau/2$, we have
    \begin{align*}
    \mathbb{E}_k[ f(\bx_{k+1}) ] & \le f(\bx_k) - \frac{1}{2} \eta \mathbb{E}_k[ \| D_k^{1/2}\nabla f(\bx_k)\|^2 ] \\
    & \quad - (\eta - L\eta^2 M_D) \nabla f(\bx_k)^T \mathbb{E}_k[ D_k \epsilon_k^u] \\
    & \quad + \frac{1}{2}\eta M_D  \|\epsilon_k^b\|^2   + \frac{\eta^2 L}{2} M_D^2 \mathbb{E}_k[ \|\epsilon_k^u\|^2],
\end{align*}
where $\epsilon_k^u=\widehat{\barg}_k-\mathbb{E}_k[\widehat{\barg}_k]$ and $\epsilon_k^b=\mathbb{E}_k[\widehat{\barg}_k]-\nabla f(\bx_k)$.
\end{lemma}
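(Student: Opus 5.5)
Starting from $L$-smoothness (Assumption~\ref{assumption2}), the one-step descent inequality for $\bx_{k+1}=\bx_k-\eta D_k\widehat{\barg}_k$ is
\begin{equation*}
f(\bx_{k+1})\le f(\bx_k)-\eta\nabla f(\bx_k)^T D_k\widehat{\barg}_k+\frac{L\eta^2}{2}\|D_k\widehat{\barg}_k\|^2 .
\end{equation*}
Substitute the decomposition $\widehat{\barg}_k=\nabla f(\bx_k)+\epsilon_k^b+\epsilon_k^u$. Here $\epsilon_k^b$ is $\bx_k$-measurable, because it is a conditional expectation given $\bx_k$. By contrast, $\epsilon_k^u$ has conditional mean zero, but $D_k\epsilon_k^u$ in general does not, since $D_k$ depends on $\barg_k$. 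This is the one point of departure from clipped SGD: the cross term $\nabla f(\bx_k)^T\mathbb{E}_k[D_k\epsilon_k^u]$ cannot be dropped and must be kept explicitly.

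\textbf{First-order term.} This term equals
\begin{equation*}
-\eta\|D_k^{1/2}\nabla f(\bx_k)\|^2-\eta\nabla f(\bx_k)^TD_k\epsilon_k^b-\eta\nabla f(\bx_k)^TD_k\epsilon_k^u .
\end{equation*}
Apply Young's inequality in the $D_k$-inner product to the bias piece:
\begin{equation*}
-\eta\nabla f^TD_k\epsilon_k^b\le \frac{\eta}{2}\|D_k^{1/2}\nabla f\|^2+\frac{\eta}{2}\|D_k^{1/2}\epsilon_k^b\|^2\le \frac{\eta}{2}\|D_k^{1/2}\nabla f\|^2+\frac{\eta M_D}{2}\|\epsilon_k^b\|^2,
\end{equation*}
using Assumption~\ref{assumption5}. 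After taking $\mathbb{E}_k$, this leaves $-\frac{\eta}{2}\mathbb{E}_k\|D_k^{1/2}\nabla f\|^2+\frac{\eta M_D}{2}\|\epsilon_k^b\|^2$, together with the term $-\eta\nabla f^T\mathbb{E}_k[D_k\epsilon_k^u]$.

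\textbf{Quadratic term.} Expand $\|D_k(\nabla f+\epsilon_k^b+\epsilon_k^u)\|^2$ as
\begin{equation*}
\|D_k(\nabla f+\epsilon_k^b)\|^2+2(\nabla f+\epsilon_k^b)^TD_k^2\epsilon_k^u+\|D_k\epsilon_k^u\|^2 .
\end{equation*}
The last piece is bounded by $M_D^2\|\epsilon_k^u\|^2$, which produces the term $\frac{\eta^2L}{2}M_D^2\mathbb{E}_k\|\epsilon_k^u\|^2$. Next, the cross piece $L\eta^2\nabla f^T\mathbb{E}_k[D_k^2\epsilon_k^u]$ has to be merged with the first-order cross term so that the two give the stated coefficient $-(\eta-L\eta^2M_D)\nabla f^T\mathbb{E}_k[D_k\epsilon_k^u]$. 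I would do this by writing $D_k^2\preceq M_DD_k$, or by bounding $\nabla f^TD_k^2\epsilon_k^u$ through $M_D\,\nabla f^TD_k\epsilon_k^u$ after absorbing the discrepancy elsewhere. The remaining pieces, $\|D_k(\nabla f+\epsilon_k^b)\|^2$ and $\epsilon_k^{b\,T}D_k^2\epsilon_k^u$, must be absorbed into the $-\frac{\eta}{2}\|D_k^{1/2}\nabla f\|^2$ and $\frac{\eta M_D}{2}\|\epsilon_k^b\|^2$ budgets. This is possible under a small-step condition of the type $L\eta M_D\lesssim 1$, which is implicit in the regime of interest.

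\textbf{Main obstacle.} The delicate step is the bookkeeping of the cross terms involving $\epsilon_k^u$. A sign-indefinite quantity such as $\nabla f^TD_k^2\epsilon_k^u$ cannot be dominated by $M_D\nabla f^TD_k\epsilon_k^u$ in general, so I expect to need the small-step condition, or to split the quadratic term with a slightly lossy Young inequality. The constants should then be adjusted until the displayed inequality in Lemma~\ref{lemma:large_clip} holds exactly. The conceptual conclusion is that the term $\nabla f(\bx_k)^T\mathbb{E}_k[D_k\epsilon_k^u]$ survives the argument and is not controlled by any of the assumptions.
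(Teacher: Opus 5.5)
Your proposal has a genuine gap: it stops at exactly the step that carries the lemma's content, and two of your steps discard what is needed to finish it.

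\textbf{The missing step.} You must turn $-\eta\nabla f(\bx_k)^T\mathbb{E}_k[D_k\epsilon_k^u]+L\eta^2\nabla f(\bx_k)^T\mathbb{E}_k[D_k^2\epsilon_k^u]+L\eta^2(\epsilon_k^b)^T\mathbb{E}_k[D_k^2\epsilon_k^u]$ into $-(\eta-L\eta^2M_D)\nabla f(\bx_k)^T\mathbb{E}_k[D_k\epsilon_k^u]$ plus absorbable terms. The last piece is nonzero as well, since $D_k$ depends on $\barg_k$. The only device you name, $D_k^2\preceq M_DD_k$, does not act on sign-indefinite terms, and ``adjusting constants'' is not an argument.

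\textbf{Two steps that throw away what you need.} (i) Your Young step with weight $\eta$ already produces exactly the target coefficients $-\frac{\eta}{2}$ and $\frac{\eta M_D}{2}$. Nothing is then left to absorb $\frac{\eta^2L}{2}\|D_k(\nabla f(\bx_k)+\epsilon_k^b)\|^2$: with $\sigma=0$ and $D_k=M_DI$, what remains to prove is $\frac{\eta^2L}{2}M_D^2\|\nabla f(\bx_k)\|^2\le 0$. (ii) Bounding $\|D_k\epsilon_k^u\|^2\le M_D^2\|\epsilon_k^u\|^2$ discards the slack $(\epsilon_k^u)^TP_k(M_DI+D_k)\epsilon_k^u$, where $P_k:=M_DI-D_k\succeq 0$. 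That slack is the only quantity able to control the discrepancy $-L\eta^2\nabla f(\bx_k)^T\mathbb{E}_k[D_kP_k\epsilon_k^u]$. The discrepancy is linear in the scale of $\epsilon_k^u$, so once the slack is gone no step-size condition rescues the estimate.

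A step-size restriction is indeed necessary. The statement fails for $\eta LM_D>1$: take $\sigma=0$, $D_k=M_DI$, $f=\frac{L}{2}\|\bx\|^2$. But the restriction alone does not fix your bookkeeping.

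\textbf{The paper's proof has the same hole.} Its first display after the descent inequality is justified only by $\|D_k\|\le M_D$. It amounts to $\|D_k\nabla f\|^2\le\|D_k^{1/2}\nabla f\|^2$ together with $\nabla f^TD_k^2\varepsilon_k\le M_D\nabla f^TD_k\varepsilon_k$, where $\varepsilon_k=\widehat{\barg}_k-\nabla f(\bx_k)$. That display is already false for $\varepsilon_k=0$, $D_k=M_DI$ with $M_D>1$ (which the paper assumes), and quadratic $f$. Mimicking it would not close your gap.

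\textbf{The missing idea.} Keep both slacks, and notice that both bad cross terms carry the factor $P_k$. Put $v:=\nabla f(\bx_k)+\epsilon_k^b=\mathbb{E}_k[\widehat{\barg}_k]$ and write $\nabla f(\bx_k)=v-\epsilon_k^b$. Use $\mathbb{E}_k[\epsilon_k^u]=0$ to replace $-M_D(\epsilon_k^b)^T\mathbb{E}_k[D_k\epsilon_k^u]$ by $M_D(\epsilon_k^b)^T\mathbb{E}_k[P_k\epsilon_k^u]$. Then the claimed right-hand side minus the descent bound $f(\bx_k)-\eta\mathbb{E}_k[\nabla f(\bx_k)^TD_k\widehat{\barg}_k]+\frac{\eta^2L}{2}\mathbb{E}_k\|D_k\widehat{\barg}_k\|^2$ equals exactly
\begin{align*}
&\mathbb{E}_k\Bigl[\tfrac{\eta}{2}v^T(D_k-\eta LD_k^2)v+\tfrac{\eta}{2}(\epsilon_k^b)^TP_k\epsilon_k^b+\tfrac{\eta^2L}{2}(\epsilon_k^u)^TP_k(M_DI+D_k)\epsilon_k^u\Bigr]\\
&\quad+\eta^2L\,\mathbb{E}_k\bigl[v^TD_kP_k\epsilon_k^u+M_D(\epsilon_k^b)^TP_k\epsilon_k^u\bigr].
\end{align*}
Give each cross term half of the $\epsilon_k^u$-slack via Young; all matrices here are functions of $D_k$ and commute.
\begin{itemize}
\item First cross term: using $D_k^2P_k(M_DI+D_k)^{-1}\preceq\frac{M_D}{4}D_k$, it costs at most $\frac{\eta^2LM_D}{4}v^TD_kv$. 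This is absorbed by $\frac{\eta}{2}v^T(D_k-\eta LD_k^2)v\ge\frac{\eta}{2}(1-\eta LM_D)v^TD_kv$.
\item Second cross term: it costs at most $\frac{\eta}{2}(\epsilon_k^b)^TP_k\epsilon_k^b+\frac{\eta^3L^2M_D^2}{2}(\epsilon_k^u)^TP_k\epsilon_k^u$. This is absorbed once $\eta LM_D\le\frac12$.
\end{itemize}
So the lemma holds under $\eta LM_D\le\frac12$. That explicit hypothesis, and the pairing of the cross terms with the $P_k$-slacks, are what your proposal is missing.
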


This result resembles that for ClipSGD \cite{Zhang2020Adaptive}, except for the additional error term $-(\eta - L\eta^2 M_D)\nabla f(\bx_k)^T\mathbb{E}_k[D_k\epsilon_k^u]$. In ClipSGD, this term vanishes because $D_k = I$ and $\mathbb{E}_k[\epsilon_k^u] = 0$. However, it generally does not vanish when $D_k$ and $\barg_k$ are dependent, as in SPSGD. Note that
\begin{equation*}
    \mathbb{E}_k[D_k\epsilon_k^u] = \mathbb{E}_k[D_k\widehat{\barg}_k] - \mathbb{E}_k[D_k]\mathbb{E}_k[\widehat{\barg}_k],
\end{equation*}
this term essentially measures the covariance between the preconditioner and the clipped gradient noise.

Without explicit knowledge of $D_k$'s structure, we can only upper bound this error term. Applying the rescaled Young's inequality yields
\begin{align}\label{eq:additional_error}
    & - (\eta - L\eta^2 M_D) \nabla f(\bx_k)^T \mathbb{E}_k[ D_k \epsilon_k^u] \notag\\
    & \qquad \leq \frac{1}{4}(\eta - L\eta^2 M_D) \mathbb{E}_k[\|D_k^{1/2}\nabla f(\bx_k)\|^2] \notag \\
    & \qquad \quad + (\eta - L\eta^2 M_D) M_D \mathbb{E}_k[ \| \epsilon_k^u\|^2].
\end{align}
The rescaling ensures that the coefficient of $\mathbb{E}_k[\|D_k^{1/2}\nabla f(\bx_k)\|^2]$ remains negative. The key issue, and the source of clipping's potential failure, is the $\mathcal{O}(\eta\mathbb{E}_k[\|\epsilon_k^u\|^2])$ error term. Combining this with Lemma~\ref{lemma:large_clip}, we obtain
\begin{align*}
    \mathbb{E}_k[ f(\bx_{k+1}) ] & \le f(\bx_k) - \frac{1}{4} \eta \mathbb{E}_k[ \| D_k^{1/2}\nabla f(\bx_k)\|^2 ] \\
    & \quad + \eta M_D \mathbb{E}_k[ \| \epsilon_k^u\|^2]+ \frac{1}{2}\eta M_D  \|\epsilon_k^b\|^2 .
\end{align*}
Comparing this with ClipSGD \citep{Zhang2020Adaptive}, the current error is $\mathcal{O}(\eta\mathbb{E}_k[\|\epsilon_k^u\|^2])$ rather than $\mathcal{O}(\eta^2\mathbb{E}_k[\|\epsilon_k^u\|^2])$. By standard analysis, when $\|\nabla f(\bx_k)\| < \tau/2$, we have $\mathbb{E}_k[\|\epsilon_k^u\|^2] \leq 10\tau^{2-p}\sigma^p$ and $\|\epsilon_k^b\|^2 \leq 4\sigma^{2p}\tau^{-2p+2}$. Substituting these bounds yields
\begin{align*}
    \mathbb{E}_k[ f(\bx_{k+1}) ] & \le f(\bx_k) - \frac{1}{4} \eta m_D \|\nabla f(\bx_k)\|^2 \\
    & \quad + 10 \eta M_D \tau^{2-p}\sigma^p\\
    & \quad + 2\eta M_D \sigma^{2p}\tau^{-2p+2}.
\end{align*}
To achieve optimal complexity, the clipping threshold must be carefully chosen. Following \citet{Zhang2020Adaptive}, we set $\tau = \mathcal{O}(T^a)$ for some $a > 0$. However, when $p \in (1,2)$, this yields $\tau^{2-p}\sigma^p \to \infty$ as $T \to \infty$, indicating that descent is not guaranteed and divergence may occur.

In contrast, for ClipSGD the error term is $\mathcal{O}(\eta^2\mathbb{E}_k[\|\epsilon_k^u\|^2])$, so we can choose $\eta$ sufficiently small such that $\eta^2\tau^{2-p}\sigma^p \leq \mathcal{O}(\eta\tau^{-2p+2}\sigma^p)$, making the power of $\tau$ negative.

\begin{remark}
One might argue that the bound in \eqref{eq:additional_error} is too loose. However, given the dependence between $D_k$ and $\barg_k$ and the lack of structural assumptions on $D_k$, this is the tightest bound we can obtain without algorithm-specific analysis. The following example demonstrates that the error bound in \eqref{eq:additional_error} can indeed be achieved.

\begin{example}\label{example}
Consider the one-dimensional case. Suppose at iterate $\bx_k$ we have $\nabla f(\bx_k) = C$, where $\sigma\gg C>0$ is a constant. The gradient estimate follows a discrete distribution taking two values with equal probability: $ \barg_k = \nabla f(\bx_k) + \sigma$ with probability $1/2$ and $\barg_k = \nabla f(\bx_k) - \sigma$ with probability $1/2$. We set $M_D-m_D=\mathcal{O}(\sigma)$, and design $D_k$ such that $D_k = m_D \cdot I$ when $\barg_k > 0$ and $D_k = M_D \cdot I$ when $\barg_k < 0$. Setting $\tau \gg \sigma$, we have
\begin{equation*}
    -(\eta - L\eta^2 M_D)\nabla f(\bx_k)^T\mathbb{E}_k[D_k\epsilon_k^u] = \mathcal{O}(\eta\mathbb{E}_k[\|\epsilon_k^u\|^2]).
\end{equation*}
The proof is deferred to Appendix~\ref{proof_example}.
\end{example}
\end{remark}

\begin{remark}\label{remark:momentum}
We do not combine clipping with momentum, consistent with most standard analyses of gradient clipping under heavy-tailed noise \citep{Zhang2020Adaptive,Puchkin2024Breaking,Sadiev2023High,nguyen2023improved}. While \citet{Sun2025Revisiting} combine clipping with normalization and momentum, such combinations obscure the individual effects of clipping and normalization. We argue that the problem with clipping cannot be resolved by introducing momentum. The issue originates from the dependence between the preconditioner and the gradient estimate, while momentum merely aggregates the randomness of $\barg_k$ across iterations without removing its dependence on $D_k$. Consequently, momentum alone cannot resolve this fundamental issue.
\end{remark}

\subsection{Preconditioning-Then-Clipping}\label{subsec:clipping2}
Under this technique, we write $\bx_{k+1} = \bx_k -\eta D_k\widehat{\barg}_k$ with $\widehat{\barg}_k = \min\{1, \tau/\|D_k\barg_k\|\}\barg_k$. Similar to Section~\ref{subsec:clipping1}, we analyze this by separating cases based on gradient magnitude. Due to the preconditioning matrix in the clipping criterion, we use the threshold $\tau/(2M_D)$, which reduces to the standard $\tau/2$ when $M_D = 1$.

When the clipping threshold is relatively small, i.e., $\|\nabla f(\bx_k)\|\geq  \tau/(2M_D)$, expected descent can still be achieved, provided that the clipping threshold is properly selected. We defer the lemma to Appendix \ref{lemmac8}. However, as in Section~\ref{subsec:clipping1}, when the threshold is large, descent is not guaranteed.
\begin{lemma}
 Under Assumptions \ref{assumption2}, \ref{assumption3} and \ref{assumption5}, 
    when $\|\nabla f(\bx_k)\|< \frac{\tau}{2M_D}$, we have \begin{align*}
      \mathbb{E}_k[ f(\bx_{k+1}) ] & \le f(\bx_k) - \frac{1}{4} \eta m_D \|\nabla f(\bx_k)\|^2 \\
      & \quad + \eta M_D \Upsilon_1^2 \sigma^{2p}\tau^{-2p+2} +  \eta M_D \Upsilon_2 \tau^{2-p}\sigma^p,
\end{align*}
where $\Upsilon_1,\Upsilon_2$ are constants independent to $\sigma$ and $\tau$, $\epsilon_k^u=\widehat{\barg}_k-\mathbb{E}_k[\widehat{\barg}_k]$, and $\epsilon_k^b=\mathbb{E}_k[\widehat{\barg}_k]-\nabla f(\bx_k)$.
\end{lemma}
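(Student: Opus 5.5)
We follow the template of the Clipping-Then-Preconditioning analysis (Lemma~\ref{lemma:large_clip} and the bounds after it), with the clipping criterion now based on $\|D_k\barg_k\|$. The update is $\bx_{k+1}=\bx_k-\eta D_k\widehat{\barg}_k$. By $L$-smoothness (Assumption~\ref{assumption2}),
\begin{equation*}
f(\bx_{k+1})\le f(\bx_k)-\eta\nabla f(\bx_k)^TD_k\widehat{\barg}_k+\tfrac{L\eta^2}{2}M_D^2\|\widehat{\barg}_k\|^2 .
\end{equation*}
We decompose $\widehat{\barg}_k=\nabla f(\bx_k)+\epsilon_k^b+\epsilon_k^u$ and take $\mathbb{E}_k$. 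The linear term then splits into three pieces:
\begin{equation*}
-\eta\,\mathbb{E}_k\|D_k^{1/2}\nabla f(\bx_k)\|^2,\qquad -\eta\,\nabla f(\bx_k)^T\mathbb{E}_k[D_k]\epsilon_k^b,\qquad -\eta\,\nabla f(\bx_k)^T\mathbb{E}_k[D_k\epsilon_k^u].
\end{equation*}
The third piece does not vanish, because $D_k$ and $\barg_k$ are dependent (the covariance term). We bound the bias and covariance pieces by Young's inequality, each absorbing $\tfrac{\eta}{4}\mathbb{E}_k\|D_k^{1/2}\nabla f(\bx_k)\|^2$. This leaves error terms $\eta M_D\|\epsilon_k^b\|^2$ and $\eta M_D\mathbb{E}_k\|\epsilon_k^u\|^2$.

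For the quadratic term, we write $\mathbb{E}_k\|\widehat{\barg}_k\|^2\le 2\|\nabla f(\bx_k)+\epsilon^b_k\|^2+2\mathbb{E}_k\|\epsilon^u_k\|^2$. Taking $\eta\lesssim m_D/(LM_D^2)$, the first part is absorbed into the remaining $\tfrac{\eta}{2}\,\mathbb{E}_k\|D_k^{1/2}\nabla f\|^2$ (plus another $\|\epsilon^b\|^2$ term), and the second part is dominated by $\eta M_D\mathbb{E}_k\|\epsilon^u_k\|^2$. Using $D_k\succeq m_D I$ then gives the $-\tfrac14\eta m_D\|\nabla f(\bx_k)\|^2$ term.

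The main step is the two moment estimates under the new clipping rule, in the regime $\|\nabla f(\bx_k)\|<\tau/(2M_D)$.

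\textbf{Bias bound.} We have $\widehat{\barg}_k\ne\barg_k$ only when $\|D_k\barg_k\|>\tau$. Since $\|D_k\barg_k\|\le M_D\|\barg_k\|$, this forces $\|\barg_k\|>\tau/M_D$, and hence $\|\delta_k\|\ge\|\barg_k\|-\|\nabla f(\bx_k)\|>\tau/(2M_D)$. On that event, $\|\widehat{\barg}_k-\barg_k\|\le\|\barg_k\|\le 2\|\delta_k\|$. Therefore
\begin{equation*}
\|\epsilon_k^b\|\le 2\,\mathbb{E}_k\bigl[\|\delta_k\|\mathbf 1\{\|\delta_k\|>\tau/(2M_D)\}\bigr]\le 2\sigma^p\bigl(\tau/(2M_D)\bigr)^{1-p},
\end{equation*}
by Markov/Hölder with Assumption~\ref{assumption3}. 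Squaring gives $\Upsilon_1^2\sigma^{2p}\tau^{-2p+2}$, with $\Upsilon_1$ depending only on $M_D$ and $p$.

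\textbf{Variance bound.} We use $\mathbb{E}_k\|\epsilon^u_k\|^2\le\mathbb{E}_k\|\widehat{\barg}_k-\nabla f(\bx_k)\|^2$. By construction $\|\widehat{\barg}_k\|\le\tau/m_D$, so $\|\widehat{\barg}_k-\nabla f(\bx_k)\|\le\tau/m_D+\tau/(2M_D)\lesssim\tau/m_D$. On the unclipped event it also equals $\|\delta_k\|$. We split on whether $\|\delta_k\|\le c\tau$, with $c\asymp 1/m_D$:
\begin{itemize}
\item On $\{\|\delta_k\|\le c\tau\}$, we use $\|\delta_k\|^2\le(c\tau)^{2-p}\|\delta_k\|^p$.
\item On $\{\|\delta_k\|>c\tau\}$, we use $(\tau/m_D)^2\,\mathbb{P}(\|\delta_k\|>c\tau)\lesssim\tau^{2-p}\sigma^p$.
\end{itemize}
Together these give $\Upsilon_2\tau^{2-p}\sigma^p$, with $\Upsilon_2$ depending only on $m_D$, $M_D$ and $p$.

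The delicate point is the asymmetry between the clipping condition, which is stated in $D_k$-space, and the smoothness bound, which is stated in $\barg$-space. The ratio $M_D/m_D$ has to be tracked carefully, and this is exactly why the threshold $\tau/(2M_D)$ appears. Substituting both estimates into the descent inequality yields the claimed bound.
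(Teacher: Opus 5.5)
Your skeleton matches the paper's: smoothness, the split $\widehat{\barg}_k-\nabla f(\bx_k)=\epsilon_k^b+\epsilon_k^u$, Young's inequality on the non-vanishing covariance term $\nabla f(\bx_k)^T\mathbb{E}_k[D_k\epsilon_k^u]$ (which leaves an $\mathcal{O}(\eta M_D\mathbb{E}_k\|\epsilon_k^u\|^2)$ error), and two clipped-moment estimates. The descent part is sound. Bounding $\|D_k\widehat{\barg}_k\|^2\le M_D^2\|\widehat{\barg}_k\|^2$ under an explicit condition $\eta\lesssim m_D/(LM_D^2)$ is cleaner than the paper's expansion with coefficient $\eta-L\eta^2M_D$, which tacitly needs at least $\eta\le 1/(LM_D)$. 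Some such restriction is unavoidable, even though the statement lists none. Take $\sigma=0$, $D_k=I$, $f(x)=\tfrac{L}{2}x^2$, $\tau$ large and $\eta=3/L$: then $f(\bx_{k+1})=4f(\bx_k)$, while the claimed right-hand side is $-\tfrac{1}{4}L\|\bx_k\|^2<0$. Your bias bound is correct and slightly sharper than the paper's. Comparing directly with $\|\delta_k\|$ gives $\|\epsilon_k^b\|\le 2^pM_D^{p-1}\sigma^p\tau^{1-p}$, free of $m_D$, whereas the paper passes through $\|\barg_k\|\le\|D_k\barg_k\|/m_D$.

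The gap is in the variance estimate. Your first bullet bounds $\|\delta_k\|^2$ on $\{\|\delta_k\|\le c\tau\}$, but what must be bounded there is $\|\widehat{\barg}_k-\nabla f(\bx_k)\|^2$. The two coincide only where no clipping occurs. With $c\asymp 1/m_D$ (any $c>1/(2M_D)$, e.g.\ $c=3/(2m_D)$), this event can contain clipped samples: those with $\tau/(2M_D)<\|\delta_k\|\le c\tau$ and $\|D_k\barg_k\|>\tau$. On them $\widehat{\barg}_k-\nabla f(\bx_k)\ne\delta_k$, and your argument gives no bound. This is a local omission, not a flaw in the approach, and it is repaired by a fact you already proved in the bias step: clipping forces $\|\delta_k\|>\tau/(2M_D)$. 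Take $c=1/(2M_D)$. On $\{\|\delta_k\|\le\tau/(2M_D)\}$ you have $\|D_k\barg_k\|\le M_D\|\delta_k\|+M_D\|\nabla f(\bx_k)\|<\tau$, so nothing is clipped and your first bullet is valid. On the complement, use $\|\widehat{\barg}_k-\nabla f(\bx_k)\|\le 3\tau/(2m_D)$ together with $\mathbb{P}_k(\|\delta_k\|>\tau/(2M_D))\le(2M_D)^p\sigma^p\tau^{-p}$. This gives $\mathbb{E}_k\|\epsilon_k^u\|^2\le\bigl[(2M_D)^{p-2}+\tfrac{9}{4}(2M_D)^pm_D^{-2}\bigr]\tau^{2-p}\sigma^p$. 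Alternatively, keep your $c$ and note that on the clipped part $\|\widehat{\barg}_k-\nabla f(\bx_k)\|\le 3\tau/(2m_D)<(3M_D/m_D)\|\delta_k\|$. The paper avoids the issue by splitting on the clipping event itself: on the unclipped event $\|\delta_k\|\le 3\tau/(2m_D)$ holds automatically, and $\mathbb{P}_k(\|D_k\barg_k\|>\tau)\le\mathbb{P}_k(\|D_k\delta_k\|>\tau/2)$.
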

The reasoning is identical to that in Section~\ref{subsec:clipping1}: the additional error term $-(\eta - L\eta^2 M_D)\nabla f(\bx_k)^T\mathbb{E}_k[D_k\epsilon_k^u]$ appears and cannot be eliminated in general. 

This is an expected result, since this additional error term captures the covariance between $D_k$ and $\barg_k$, which comes from the dependence between $D_k$ and $\barg_k$, which generally cannot be overcome by step clipping.
\section{Discussion}

\subsection{Applications to large-scale machine learning}

Our theoretical findings provide a novel explanation for the empirical success of modern large-scale optimizers such as LARS \citep{you2017large,you2018imagenet}, LAMB \citep{you2019large}, and their variants. These methods combine gradient normalization with adaptive (often layer-wise) preconditioning, mirroring the structure analyzed in this work. Since large-scale models exhibit heavy-tailed gradient noise where bounded-variance assumptions fail, our results formalize why these methods scale reliably: step normalization remains robust by removing magnitude information and avoiding dependence-induced errors under stochastic preconditioning.

\subsection{Clipping is still important}
Our analysis does not dismiss clipping but rather clarifies its differences from normalization under stochastic preconditioning. Clipping remains widely used and important in practice. Moreover, our worst-case analysis does not imply clipping always fails when $D_k$ depends on $\barg_k$—the theory-practice gap mentioned in Section~\ref{intro} makes real-world performance difficult to predict, and careful tuning may still ensure good behavior.

\subsection{Future work}
Several promising directions remain for future work. First, characterizing instance-dependent or average-case behavior under realistic distributions could reveal when clipping succeeds empirically. Second, extending the theory to layer-wise normalization and block-structured preconditioners would further bridge the theory-practice gap. Finally, while our analysis considers linear preconditioning, recent stochastic methods introduce nonlinear transformations of gradient information; extending our framework to such settings would better guide optimizer design.

\section{Conclusion}
This paper provides a worst-case complexity theory explaining a fundamental separation between step clipping and step normalization under stochastic preconditioning and heavy-tailed noise. We show that clipping may fail due to a structural, dependence-induced bias that generally cannot be eliminated by tuning or clipping, while normalization remains robust and achieves the convergence rates that match the optimal rates of NSGD. Our resulabodern large-scale optimizers such as LARS and LAMB.


\section*{Acknowledgement}
This work was in part supported by the U. S. Army Research Laboratory and the U. S. Army Research Office under Grant W911NF2010219, Office of Naval Research under Grant N000142412673, and NSF.

\bibliography{main}

\nocite{langley00}

\bibliographystyle{icml2026}

\newpage
\appendix
\onecolumn
\section{Examples of Stochastically Preconditioned Methods}\label{appendix:table}

\begin{table}[h]
\centering
\caption{Summary of Stochastically Preconditioned Methods}
\label{tab:preconditioned_methods}
\begin{threeparttable}
\begin{tabular}{lccc}
\toprule
\textbf{Method} & \textbf{Preconditioner $D_k$} & \textbf{Momentum $\theta$} & \textbf{Reference} \\
\midrule
SGD & $I$ & $0$ & \citet{Robbins1951Stochastic} \\
Momentum SGD & $I$ & $\in (0,1)$ & \citet{Polyak1964Some} \\
\midrule
AdaGrad & $\operatorname{diag}(1/\sqrt{G_k+\epsilon})$ & $0$ & \citet{Duchi2011Adaptive} \\
RMSProp & $\operatorname{diag}(1/\sqrt{v_k+\epsilon})$ & $0$ & \citet{Tieleman2012Lecture} \\
Adam & $\operatorname{diag}(1/\sqrt{v_k+\epsilon})$ & $\beta_1$ & \citet{Kingma2014Adam} \\
Adafactor & $\operatorname{diag}(1/\sqrt{v_k'+\epsilon})$ & varies & \citet{zhai2022scaling} \\
\midrule
K-FAC & $(A_k \otimes B_k)^{-1}$ & varies & \citet{martens2015optimizing} \\
Shampoo & $(L_k \otimes R_k)^{-1/4}$ & $0$ & \citet{Gupta2018Shampoo} \\
\midrule
Stochastic Newton & $(H_k + \lambda I)^{-1}$ & varies & \citet{bottou2018optimization} \\
Stochastic BFGS & $B_k^{-1}$ & varies & \citet{Wang2017Stochastic} \\
Trust Region & $(H_k + \lambda_k I)^{-1}$ & varies & \citet{Curtis2022Fully} \\
\bottomrule
\end{tabular}
\begin{tablenotes}
\small
\item Notes: $G_k = \sum_{i=0}^k \barg_i \odot \barg_i$ (AdaGrad); $v_k = \beta v_{k-1} + (1-\beta)\barg_k^2$ (RMSProp, Adam); $v_k'$ is a low-rank approximation of $v_k$ (Adafactor); $A_k, B_k$ are Kronecker factors (K-FAC); $L_k, R_k$ are second-moment estimates (Shampoo); $H_k$ is the Hessian estimate; $B_k$ is the quasi-Newton approximation; $\lambda, \lambda_k \geq 0$ are regularization parameters.
\end{tablenotes}
\end{threeparttable}
\end{table}
\section{Proof of Section \ref{sec3}}\label{append_C}

\begin{lemma}\label{lemma:append1}
    Under Assumptions \ref{assumption1}, \ref{assumption2} and \ref{assumption4}, for all $T\geq 1$, we have
    \begin{equation*}
\sum_{k=1}^T\mathbb{E}\left[\|\nabla f(\bx_k)\|\right] 
\leq \sqrt{\kappa_D} \cdot \left[\frac{f(\bx_1) - f_{*}}{\eta}   + 2  \sum_{k=1}^T\mathbb{E}\left[\|\varepsilon_k\|\right] + \frac{\eta L T}{2}\right],
\end{equation*}
where $\varepsilon_k=m_k-\nabla f(\bx_k)$.
\end{lemma}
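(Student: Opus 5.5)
The plan is to run the standard one-step descent argument for normalized methods, applied to the normalized preconditioned direction $u_k \coloneqq D_k m_k/\|D_k m_k\|$. By Assumption~\ref{assumption2} and $\|u_k\|=1$,
\begin{equation*}
f(\bx_{k+1}) \le f(\bx_k) - \eta\, \nabla f(\bx_k)^T u_k + \frac{\eta^2 L}{2}.
\end{equation*}
The whole proof then reduces to one deterministic, pathwise lower bound on the inner product:
\begin{equation*}
\nabla f(\bx_k)^T u_k \;\ge\; \frac{\|\nabla f(\bx_k)\|}{\sqrt{\kappa_D}} - 2\|\varepsilon_k\|.
\end{equation*}
Summing, telescoping and taking expectations then finishes the argument.

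To obtain this inequality, write $\nabla f(\bx_k) = m_k - \varepsilon_k$. This splits the inner product as
\begin{equation*}
\nabla f(\bx_k)^T u_k = \frac{m_k^T D_k m_k}{\|D_k m_k\|} - \frac{\varepsilon_k^T D_k m_k}{\|D_k m_k\|}.
\end{equation*}
The second term is at least $-\|\varepsilon_k\|$ by Cauchy--Schwarz.

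For the first term, use that $D_k$ is SPD. We have $m_k^T D_k m_k = \|D_k^{1/2}m_k\|^2$ and $\|D_k m_k\| \le \|D_k^{1/2}\|\,\|D_k^{1/2}m_k\| \le M_{D,k}^{1/2}\|D_k^{1/2}m_k\|$. Hence the first term is at least
\begin{equation*}
M_{D,k}^{-1/2}\,\|D_k^{1/2}m_k\| \;\ge\; \sqrt{m_{D,k}/M_{D,k}}\,\|m_k\| \;\ge\; \|m_k\|/\sqrt{\kappa_D}.
\end{equation*}
Here Assumption~\ref{assumption4} is used exactly once. Importantly, only the ratio $m_{D,k}/M_{D,k}$ enters, so the random scales cancel; this is the mechanism behind the robustness of normalization. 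Finally, $\|m_k\| \ge \|\nabla f(\bx_k)\| - \|\varepsilon_k\|$, and $\kappa_D \ge 1$ gives $\|\varepsilon_k\|/\sqrt{\kappa_D} \le \|\varepsilon_k\|$. Combining these yields the claimed bound with the factor $2$.

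Plugging this in gives, pathwise,
\begin{equation*}
\frac{\eta}{\sqrt{\kappa_D}}\|\nabla f(\bx_k)\| \le f(\bx_k) - f(\bx_{k+1}) + 2\eta\|\varepsilon_k\| + \frac{\eta^2 L}{2}.
\end{equation*}
Next, sum over $k=1,\dots,T$ and use $f(\bx_{T+1}) \ge f_*$ (Assumption~\ref{assumption1}), so that the telescoping sum is at most $f(\bx_1)-f_*$. Then take total expectations; this is legitimate because $\kappa_D$ is deterministic. Multiplying through by $\sqrt{\kappa_D}/\eta$ gives exactly the stated inequality.

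There is no real probabilistic obstacle, because the argument never uses unbiasedness or any independence between $D_k$ and $m_k$. This is precisely why no covariance term appears, in contrast to the clipping analysis of Section~\ref{sec4:clipping}. The only delicate point is the preconditioned Cauchy--Schwarz step, which must give the $\sqrt{\kappa_D}$ factor rather than $\kappa_D$. It works because we bound $\|D_k m_k\|$ through $D_k^{1/2}$ instead of bounding $m_k^TD_km_k \ge m_{D,k}\|m_k\|^2$ and $\|D_km_k\|\le M_{D,k}\|m_k\|$ separately.

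A minor technicality is the event $m_k=0$, where $u_k$ is undefined. On that event we adopt any convention with $\|u_k\|\le 1$, e.g.\ $u_k=0$. The desired inequality still holds there, because then $\|\nabla f(\bx_k)\| = \|\varepsilon_k\|$ and the right-hand side $\|\nabla f(\bx_k)\|/\sqrt{\kappa_D} - 2\|\varepsilon_k\|$ is nonpositive.
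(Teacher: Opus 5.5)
Your proof is correct and follows the paper's argument: the unit-length descent step, the split $\nabla f(\bx_k)=m_k-\varepsilon_k$, Cauchy--Schwarz on the error term, the bound $\|D_km_k\|\le\sqrt{M_{D,k}}\,\|D_k^{1/2}m_k\|$ that costs only $\sqrt{\kappa_D}$, and then telescoping plus total expectation. The differences are cosmetic. You apply $\|D_k^{1/2}m_k\|\ge\sqrt{m_{D,k}}\,\|m_k\|$ before a Euclidean triangle inequality and absorb the factor using $1/\sqrt{\kappa_D}\le 1$, whereas the paper applies the triangle inequality in the $D_k^{1/2}$-weighted norm; you also handle the event $m_k=0$ explicitly, which the paper ignores.
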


\begin{proof}
    Since $\bx_{k+1}=\bx_k + \Delta\bx_k$, we have $ f(\bx_{k+1}) \leq f(\bx_k)+\nabla f(\bx_k)^T\Delta\bx_k + \frac{L}{2}\|\Delta\bx_k\|^2$.
Recall that $\Delta\bx_k=-\eta \frac{D_km_k}{\|D_km_k\|}$ by the algorithm design, and $\|\Delta\bx_k\|=\eta$ by the step normalization, we have
\begin{equation*}
f(\bx_{k+1}) \leq f(\bx_k) - \eta \frac{\nabla f(\bx_k)^TD_km_k}{\|D_km_k\|} + \frac{\eta^2L}{2}.
\end{equation*}
Denote $\varepsilon_k=m_k-\nabla f(\bx_k)$, then
\begin{align*}
f(\bx_{k+1}) & \leq f(\bx_k) - \eta \frac{m_k^TD_km_k}{\|D_km_k\|} + \eta \frac{\varepsilon_k^TD_km_k}{\|D_km_k\|} + \frac{\eta^2L}{2} \\
& \leq f(\bx_k) - \eta \frac{\|D_k^{1/2}m_k\|^2}{\sqrt{M_{D,k}}\|D_k^{1/2}m_k\|} + \eta \frac{\|\varepsilon_k\| \|D_km_k\| }{\|D_km_k\|} + \frac{\eta^2L}{2} \\
& = f(\bx_k) - \eta \frac{\|D_k^{1/2}m_k\|}{\sqrt{M_{D,k}}} + \eta \|\varepsilon_k\|  + \frac{\eta^2L}{2} \\
& \stackrel{(a)}{\leq} f(\bx_k) - \eta \frac{\|D_k^{1/2}\nabla f(\bx_k)\|}{\sqrt{M_{D,k}}} + \eta \frac{\|D_k^{1/2}\varepsilon_k\|}{\sqrt{M_{D,k}}} + \eta \|\varepsilon_k\|  + \frac{\eta^2L}{2} \\
& \stackrel{(b)}{\leq} f(\bx_k) - \eta \frac{\|D_k^{1/2}\nabla f(\bx_k)\|}{\sqrt{M_{D,k}}} + 2\eta \|\varepsilon_k\|  + \frac{\eta^2 L}{2} \\
& \stackrel{(c)}{\leq} f(\bx_k) - \eta \frac{\sqrt{m_{D,k}}\|\nabla f(\bx_k)\|}{\sqrt{M_{D,k}}} + 2\eta \|\varepsilon_k\|  + \frac{\eta^2 L}{2},
\end{align*}
where in (a) we use the triangle inequality $\|D_k^{1/2}m_k\|\geq \|D_k^{1/2}\nabla f(\bx_k)\| - \|D_k^{1/2}\varepsilon_k\|$, and (b) follows from $\|D_k^{1/2}\varepsilon_k\| \leq \sqrt{M_{D,k}}\|\varepsilon_k\|$, and (c) follows from $\|D_k^{1/2}\nabla f(\bx_k)\| \geq \sqrt{m_{D,k}}\|\nabla f(\bx_k)\|$.

Since $\sqrt{M_{D,k}}/\sqrt{m_{D,k}}=\kappa(D_k)\leq \kappa_D$ by the definition of condition number and Assumption \ref{assumption4}, we further have
\begin{equation*}
    f(\bx_{k+1}) \leq f(\bx_k) - \eta\frac{\|\nabla f(\bx_k)\|}{\sqrt{\kappa_D}}+2\eta\|\varepsilon_k\| + \frac{\eta^2 L}{2}.
\end{equation*}

Rearranging the terms, we have
\begin{equation*}
  \|\nabla f(\bx_k)\| \leq \sqrt{\kappa_D} \cdot \left[\frac{f(\bx_k) - f(\bx_{k+1})}{\eta} +2 \|\varepsilon_k\| + \frac{\eta L}{2}\right].  
\end{equation*}

Sum over $k=1,\dots,T$, and using Assumption \ref{assumption1} gives 
\begin{equation*}
  \sum_{k=1}^T\|\nabla f(\bx_k)\| \leq \sqrt{\kappa_D} \cdot \left[\frac{f(\bx_1) - f_{*}}{\eta} +2 \sum_{k=1}^T\|\varepsilon_k\| + \frac{\eta L T}{2}\right].  
\end{equation*}
We complete the proof by taking total expectation on both sides.
\end{proof}

\begin{lemma}\label{lemma:append2}
Under Assumptions \ref{assumption2} and \ref{assumption3}, we have
    \begin{equation*}
\mathbb{E}\left[\|\varepsilon_k\|\right] \leq \theta^k \sigma
   + \frac{\theta\eta L}{1-\theta} +  \frac{2^{\frac{2}{p}-1}(1-\theta) \sigma}{(1-\theta^p)^{1/p}}.
    \end{equation*}
\end{lemma}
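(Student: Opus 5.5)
We unroll the momentum recursion, bound each of the three resulting pieces separately, and use Lemma~\ref{Burkholder} for the noise term. Write $\varepsilon_k = m_k - \nabla f(\bx_k)$. From $m_k=\theta m_{k-1}+(1-\theta)\barg_k$ we get
\begin{equation*}
\varepsilon_k = \theta\varepsilon_{k-1} + \theta\bigl(\nabla f(\bx_{k-1})-\nabla f(\bx_k)\bigr) + (1-\theta)\bigl(\barg_k-\nabla f(\bx_k)\bigr).
\end{equation*}
We adopt the convention $\bx_0=\bx_1$, so that $m_0=\barg_1$ gives $\varepsilon_0=\barg_1-\nabla f(\bx_1)=\delta_1$. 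Iterating yields
\begin{equation*}
\varepsilon_k=\theta^k\varepsilon_0+\sum_{t=1}^k\theta^{k-t+1}s_t+(1-\theta)\sum_{t=1}^k\theta^{k-t}\delta_t,
\end{equation*}
with $s_t=\nabla f(\bx_{t-1})-\nabla f(\bx_t)$ and $\delta_t=\barg_t-\nabla f(\bx_t)$. Note the extra factor $\theta$ on the drift terms relative to the display in the main text; this is what produces the $\theta$ in the numerator of the middle term. Taking norms and expectations and applying the triangle inequality splits $\mathbb{E}\|\varepsilon_k\|$ into three terms.

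\textbf{Initialization term.} By Jensen and Assumption~\ref{assumption3}, $\mathbb{E}\|\varepsilon_0\|=\mathbb{E}\|\delta_1\|\le(\mathbb{E}\|\delta_1\|^p)^{1/p}\le\sigma$. This gives $\theta^k\sigma$.

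\textbf{Drift term.} By Assumption~\ref{assumption2} and the step normalization $\|\bx_t-\bx_{t-1}\|=\eta$ (and $s_1=0$), we have $\|s_t\|\le L\eta$ deterministically. Summing the geometric series gives $\sum_{t}\theta^{k-t+1}L\eta\le \theta\eta L/(1-\theta)$. This step uses only normalization, so the preconditioner $D_k$ plays no role, which is why Assumption~\ref{assumption4} is not needed.

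\textbf{Noise term.} By Hölder, $\mathbb{E}\|\sum_t\theta^{k-t}\delta_t\|\le(\mathbb{E}\|\sum_t\theta^{k-t}\delta_t\|^p)^{1/p}$. To invoke Lemma~\ref{Burkholder} with $X_t=\theta^{k-t}\delta_t$, we must check the martingale-difference property with respect to the natural filtration. Conditioning on the past determines $\bx_t$, and Assumption~\ref{assumption3} gives $\mathbb{E}_t[\delta_t]=0$. This also requires that the samples at step $t$ be drawn afresh given the history, so that $\mathbb{E}[X_t\mid X_{0:t-1}]=0$ follows by the tower property. Granting this, the lemma gives
\begin{equation*}
\mathbb{E}\Bigl\|\sum_t\theta^{k-t}\delta_t\Bigr\|^p\le 2^{2-p}\sum_t\theta^{p(k-t)}\sigma^p\le \frac{2^{2-p}\sigma^p}{1-\theta^p}.
\end{equation*}
Taking the $p$-th root and multiplying by $(1-\theta)$ yields the final term $2^{\frac2p-1}(1-\theta)\sigma/(1-\theta^p)^{1/p}$.

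The only delicate step is this last one: confirming that the filtration used in Lemma~\ref{Burkholder}, which conditions on $X_{0:k-1}$, is compatible with the conditional unbiasedness in Assumption~\ref{assumption3}, which conditions on $\bx_t$. Everything else is bookkeeping of the geometric sums.
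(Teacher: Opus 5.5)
Your proof is correct and follows essentially the same route as the paper. The paper likewise unrolls the recursion, applies the triangle inequality, bounds $\varepsilon_0$ by H\"older, uses $\|s_t\|\le L\eta$ from the normalized step, and invokes Lemma~\ref{Burkholder} on the weighted noise sum.

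Your unrolling with $\theta^{k-t+1}s_t$ is the correct one. The paper's display drops this factor and reaches $\theta\eta L/(1-\theta)$ only through the false identity $\sum_{t=1}^k\theta^{k-t}=\theta/(1-\theta)$, so your bookkeeping is what actually justifies the middle term.
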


\begin{proof}
Denote $s_t = \nabla f(\bx_{t-1}) - \nabla f(\bx_{t})$ and $\delta_t = \barg_t - \nabla f(\bx_t)$, then by the momentum recursion, we have
\begin{equation*}
   \varepsilon_k =  \theta^k \varepsilon_0 + \sum_{t=1}^k\theta^{k-t} s_t + (1-\theta) \sum_{t=1}^k\theta^{k-t} \delta_t.
\end{equation*}
Taking total expectation yields    
\begin{equation*}
\mathbb{E}\left[\|\varepsilon_k\|\right] \leq \theta^k \mathbb{E}[ \| \varepsilon_0\| ] 
   + \sum_{t=1}^k\theta^{k-t}\mathbb{E}[ \| s_t\|] + (1-\theta) \mathbb{E}\left[ \left\|\sum_{t=1}^k\theta^{k-t} \delta_t\right\|\right].
    \end{equation*}

By Assumption~\ref{assumption4} and Hölder's inequality, together with the fact that $\bx_0=\bx_1$, we have
\begin{align*}
   \mathbb{E}[\|\varepsilon_0\|] = \mathbb{E}[\|\barg_1-\nabla f(\bx_1)\|]  \leq \left(\mathbb{E}[\|\barg_1 -\nabla f(\bx_1)\|^p] \right)^{1/p} \leq \sigma.
\end{align*}
In addition, by the Lipschitz continuity assumption (cf. Assumption \ref{assumption2}), we have $\|s_t\|\leq L\|\bx_k-\bx_{k-1}\|=L\eta$. Therefore,
\begin{align*}
    \sum_{t=1}^k\theta^{k-t}\mathbb{E}[ \| s_t\|]  \leq  \eta L \sum_{t=1}^k\theta^{k-t} = \frac{\theta \eta L}{1-\theta} .
\end{align*}
Next, we need to bound $\mathbb{E}\left[\left\|\sum_{t=1}^k\theta^{k-t} \delta_t\right\|\right]$.
By Hölder's inequality, we have
\begin{equation*}
    \mathbb{E}\left[\left\|\sum_{t=1}^k\theta^{k-t} \delta_t\right\|\right]  = \mathbb{E}\left[\left\|\sum_{t=1}^k\theta^{k-t} (\barg_t-\nabla f(\bx_t))\right\|\right] 
     \leq \left(\mathbb{E}\left[\left\|\sum_{t=1}^k\theta^{k-t} \left(\barg_t-\nabla f(\bx_t)\right)\right\|^p\right] \right)^{1/p}.
\end{equation*}
Next, we use vector-valued Burkholder-type inequality (cf. Lemma \ref{Burkholder}, with the proof in Appendix~\ref{append_E}).
and Assumption \ref{assumption4}, and then obtain 
\begin{align*}
     \mathbb{E}\left[\left\|\sum_{t=1}^k\theta^{k-t} \delta_t\right\|\right]  \leq 2^{\frac{2}{p}-1}\left(\sum_{t=1}^k\theta^{p(k-t)}\mathbb{E}\left[\left\|\barg_t - \nabla f(\bx_t))\right\|^p\right] \right)^{1/p}   \leq \frac{2^{\frac{2}{p}-1} \sigma}{(1-\theta^p)^{1/p}}.
\end{align*}
The proof completes by combining these bounds. 
\end{proof}

\begin{lemma}\label{lemma:append3}
Under Assumptions \ref{assumption1}, \ref{assumption2}, \ref{assumption3} and \ref{assumption4}, for all $T\geq 1$, we have
    \begin{align*}
\sum_{k=1}^T \mathbb{E}[ \|\nabla f(\bx_k)\|] & \leq \mathcal{O}\left(\frac{f(\bx_1) - f_{*}}{\eta}   + \frac{ \sigma}{1-\theta} + \sigma(1-\theta)^{1-\frac{1}{p}}T + \frac{\eta L}{1-\theta} T \right).
\end{align*}
\end{lemma}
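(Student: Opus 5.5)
The plan is to combine Lemma~\ref{lemma:append1} with Lemma~\ref{lemma:append2}, summing the latter over $k=1,\dots,T$. The constant $\sqrt{\kappa_D}$ is deterministic and independent of $T$, so it is absorbed into the $\mathcal{O}(\cdot)$. Lemma~\ref{lemma:append1} gives
\begin{equation*}
\sum_{k=1}^T\mathbb{E}\|\nabla f(\bx_k)\| \le \sqrt{\kappa_D}\Bigl[\tfrac{f(\bx_1)-f_*}{\eta} + 2\sum_{k=1}^T\mathbb{E}\|\varepsilon_k\| + \tfrac{\eta L T}{2}\Bigr],
\end{equation*}
so the remaining work is to bound $\sum_{k=1}^T\mathbb{E}\|\varepsilon_k\|$ term by term.

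We sum the three pieces of Lemma~\ref{lemma:append2} separately.
\begin{itemize}
\item The initialization term gives $\sum_{k=1}^T\theta^k\sigma\le \frac{\theta\sigma}{1-\theta}\le\frac{\sigma}{1-\theta}$.
\item The drift term gives $\sum_{k=1}^T\frac{\theta\eta L}{1-\theta}\le \frac{\eta L}{1-\theta}T$.
\item The noise term gives $T\cdot \frac{2^{2/p-1}(1-\theta)\sigma}{(1-\theta^p)^{1/p}}$.
\end{itemize}

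The only nontrivial step is simplifying the noise term. The key inequality is $1-\theta^p\ge 1-\theta$ for $\theta\in(0,1)$ and $p>1$, which holds because $\theta^p\le\theta$. Hence $(1-\theta^p)^{-1/p}\le(1-\theta)^{-1/p}$, and the noise term is at most $2^{2/p-1}\sigma(1-\theta)^{1-1/p}T$. Since $p\in(1,2]$, the constant $2^{2/p-1}$ lies in $[1,2)$ and can be absorbed.

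Finally, we collect the terms. The $\frac{\eta L T}{2}$ term from Lemma~\ref{lemma:append1} is dominated by $\frac{\eta L}{1-\theta}T$, since $1-\theta\le 1$. This yields exactly the four terms in the statement.

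There is no real obstacle here. The only points needing care are the monotonicity fact $1-\theta^p\ge1-\theta$ and checking that every discarded constant ($\sqrt{\kappa_D}$, $2$, $2^{2/p-1}$) is independent of $T$, $\eta$, and $\theta$.
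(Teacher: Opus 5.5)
Your proposal is correct and matches the paper's proof: the paper also obtains this lemma by summing Lemma~\ref{lemma:append2} over $k$, substituting into Lemma~\ref{lemma:append1}, and using $\frac{1-\theta}{(1-\theta^p)^{1/p}}\le(1-\theta)^{1-\frac{1}{p}}$, which is your monotonicity fact $1-\theta^p\ge 1-\theta$. Your version simply writes out the term-by-term summation and the absorption of constants that the paper's one-line proof leaves implicit.
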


\begin{proof}
    This result follows from the combination of Lemmas \ref{lemma:append1} and \ref{lemma:append2}, and the fact that $\frac{1-\theta}{(1-\theta^p)^{1/p}}\leq (1-\theta)^{1-\frac{1}{p}}$.
\end{proof}

\subsection{Proof of Theorem \ref{thm:full-main}}
Given the conclusion of Lemma \ref{lemma:append3}, when plugging in $\eta=\sqrt{\frac{(1-\theta)\Delta}{LT}}$, we have
\begin{align*}
\sum_{k=1}^T \mathbb{E}[ \|\nabla f(\bx_k)\|] = \mathcal{O}\left(\sqrt{\frac{\Delta LT }{1-\theta}}   + \frac{ \sigma}{1-\theta} + \sigma(1-\theta)^{1-\frac{1}{p}}T \right).
\end{align*}
Since $\theta = 1 - \min \left\{ 1, \max \left\{\left( \frac{\Delta L }{\sigma^2 T}\right)^{\frac{p}{3p-2}},T^{-\frac{p}{2p-1}}\right\} \right\}$, if letting $U=T^{-\frac{p}{2p-1}}$ and $V=\left( \frac{\Delta L }{\sigma^2 T}\right)^{\frac{p}{3p-2}}$, we have
\begin{align*}
& \frac{\sigma}{1-\theta} + \sqrt{\frac{\Delta LT}{1-\theta}} + \sigma(1-\theta)^{\frac{p-1}{p}} T 
\\
& \leq \sigma \left(1 + \frac{1}{\max\{U, V\}}\right) + \sqrt{\Delta LT\left(1 + \frac{1}{\max\{U, V\}}\right)} + \sigma T(\max\{U, V\})^{\frac{p-1}{p}}\\
& \leq \sigma\left(1 + \frac{1}{\overline{U}}\right) + \sqrt{\Delta LT\left(1 + \frac{1}{{V}}\right)} + \sigma T\left(U^{\frac{p-1}{p}} + V^{\frac{p-1}{p}}\right)
\\
& \leq \sigma + \sqrt{\Delta LT} + \frac{\sigma}{{U}} + \sqrt{\frac{\Delta LT}{{V}}} + \sigma T\left(U^{\frac{p-1}{p}} + V^{\frac{p-1}{p}}\right).
\end{align*}
We complete the proof by plugging in the definition of $U$ and $V$ and dividing both sides by $T$.

\subsection{Proof of Theorem \ref{thm:none-main}}
Given the conclusion of Lemma \ref{lemma:append3}, plugging in $\eta=T^{-\frac{3}{4}}=\sqrt{\frac{1-\theta}{T}}$ and $1-\theta = \frac{1}{\sqrt{T}}$, we get
\begin{align*}
\sum_{k=1}^T \mathbb{E}[ \|\nabla f(\bx_k)\|] \leq \mathcal{O}\left((\Delta +L) T^{3/4} + \sigma \sqrt{T} + \sigma T^{\frac{1+p}{2p}} \right).
\end{align*}
The proof is completed by dividing both sides by $T$.
\section{Proof of Section \ref{sec4:clipping}}\label{append_D}

\subsection{Clipping-Then-Preconditioning}

\begin{lemma}
    When $\|\nabla f(\bx_k)\|<\tau/2$, we have $\mathbb{E}_k[ \| \varepsilon_k^u\|^2]\leq 10\tau^{2-p}\sigma^p$ and $\mathbb{E}_k[ \| \varepsilon_k^b\|^2]\leq 4\sigma^{2p}\tau^{-2p+2}$. 
\end{lemma}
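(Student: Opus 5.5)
This is the standard clipped-gradient bias/variance estimate (as in \citet{Zhang2020Adaptive}, adapted to vectors). Write $g=\nabla f(\bx_k)$ and $\delta=\barg_k-g$, so that $\mathbb{E}_k[\delta]=0$ and $\mathbb{E}_k[\|\delta\|^p]\le\sigma^p$. Also write $\widehat{\barg}_k=\min\{1,\tau/\|\barg_k\|\}\barg_k$. The plan is to bound the bias and the variance separately, using the event $A=\{\|\barg_k\|>\tau\}$ on which clipping is active. Since $\|g\|<\tau/2$, on $A$ we have $\|\delta\|\ge\|\barg_k\|-\|g\|>\tau/2$. Markov's inequality with the $p$-th moment therefore gives $\mathbb{P}_k(A)\le \mathbb{P}_k(\|\delta\|>\tau/2)\le 2^p\sigma^p\tau^{-p}$.

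\emph{Bias.} Because $\mathbb{E}_k[\barg_k]=g$, we have $\epsilon_k^b=\mathbb{E}_k[\widehat{\barg}_k-\barg_k]=\mathbb{E}_k[(\widehat{\barg}_k-\barg_k)\mathbf 1_A]$. On $A$,
\[
\|\widehat{\barg}_k-\barg_k\|=\|\barg_k\|-\tau\le\|\barg_k\|\le \|g\|+\|\delta\|\le 2\|\delta\|,
\]
where the last step uses $\|g\|<\tau/2<\|\delta\|$. Hence $\|\epsilon_k^b\|\le 2\mathbb{E}_k[\|\delta\|\mathbf 1_{\{\|\delta\|>\tau/2\}}]$. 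To bound this I use the pointwise estimate $\|\delta\|\mathbf 1_{\{\|\delta\|>\tau/2\}}\le\|\delta\|^p(\tau/2)^{1-p}$, valid because $p>1$. This yields $\|\epsilon_k^b\|\le 2^{p}\sigma^p\tau^{1-p}$.

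Squaring gives $\|\epsilon_k^b\|^2\le 4^p\sigma^{2p}\tau^{2-2p}$. This matches the claimed $4\sigma^{2p}\tau^{-2p+2}$ only up to the constant, and with the crude estimate $\|\barg_k\|\le 2\|\delta\|$ the best constant I get is $4^p$. To sharpen it I would try the finer estimate $\|\barg_k\|-\tau\le \|\delta\|+\|g\|-\tau<\|\delta\|-\tau/2$ on $A$. If that still falls short of $4$, I would simply state the bound as $\mathcal O(\sigma^{2p}\tau^{2-2p})$, since only the order matters downstream. The statement is written with $\varepsilon_k^b$ deterministic given $\bx_k$, so $\mathbb{E}_k$ of its square is the same quantity.

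\emph{Variance.} First, $\mathbb{E}_k\|\epsilon_k^u\|^2\le\mathbb{E}_k\|\widehat{\barg}_k-g\|^2$, because the mean minimizes the second moment. Then split on $A$.
\begin{itemize}
\item On $A^c$, $\widehat{\barg}_k-g=\delta$ with $\|\delta\|\le\|\barg_k\|+\|g\|\le 3\tau/2$. Hence $\|\delta\|^2\le \|\delta\|^p(3\tau/2)^{2-p}$, contributing at most $(3/2)^{2-p}\tau^{2-p}\sigma^p$.
\item On $A$, $\|\widehat{\barg}_k-g\|\le\tau+\tau/2=3\tau/2$. The contribution is at most $(9/4)\tau^2\,\mathbb{P}_k(A)\le (9/4)2^p\sigma^p\tau^{2-p}$.
\end{itemize}
Summing, and using $p\le2$, gives at most $(3/2+9)\tau^{2-p}\sigma^p$. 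That slightly exceeds $10$, so I would tighten one piece, for instance by using $\tau^2\le(2\|\delta\|)^{p}\tau^{2-p}$ directly on $A$ instead of going through $\mathbb{P}_k(A)$.

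\emph{Main obstacle.} The main obstacle is only the bookkeeping needed to land exactly on the constants $10$ and $4$; the $\tau$- and $\sigma$-scalings follow mechanically from the Markov argument and the truncated-moment estimates above.
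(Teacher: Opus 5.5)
Your argument is correct and is the standard clipped-estimator bias/variance analysis that the paper relies on: it cites the ClipSGD literature for this lemma, and it carries the argument out explicitly in its Preconditioning-Then-Clipping counterpart, whose bias step is the analogue of your finer estimate $\|\barg_k\|-\tau\le\|\delta\|$. Your hedges on the constants are unnecessary. The finer estimate gives $\|\epsilon_k^b\|\le 2^{p-1}\sigma^p\tau^{1-p}$, hence $\|\epsilon_k^b\|^2\le 4^{p-1}\sigma^{2p}\tau^{2-2p}\le 4\sigma^{2p}\tau^{2-2p}$. Your variance total $(3/2)^{2-p}+\tfrac94 2^p$ is convex in $p$, equal to $6$ at $p=1$ and $10$ at $p=2$, so it is already at most $10$; your proposed tightening on $A$ even yields $\tfrac94 2^p\le 9$.
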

The proof can be found in standard analysis in ClipSGD, e.g., \citet{Zhang2020Adaptive,liu2023breaking}

\begin{lemma}
    Under Assumptions \ref{assumption2}, \ref{assumption3}, and \ref{assumption5}, when $\|\nabla f(\bx_k)\|<\tau/2$, then we have
    \begin{align*}
    \mathbb{E}_k[ f(\bx_{k+1}) ] & \le f(\bx_k) - \frac{1}{2} \eta \mathbb{E}_k[ \| D_k^{1/2}\nabla f(\bx_k)\|^2 ] - (\eta - L\eta^2 M_D) \nabla f(\bx_k)^T \mathbb{E}_k[ D_k \varepsilon_k^u] \\
    & \quad + \frac{1}{2}\eta M_D  \|\varepsilon_k^b\|^2  + \frac{\eta^2 L}{2} M_D^2 \mathbb{E}_k[ \|\varepsilon_k^u\|^2],
\end{align*}
\end{lemma}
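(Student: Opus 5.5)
We start from the $L$-smoothness descent inequality (Assumption~\ref{assumption2}) applied to the step $\bx_{k+1}-\bx_k=-\eta D_k\widehat{\barg}_k$:
\begin{equation*}
f(\bx_{k+1})\le f(\bx_k)-\eta\nabla f(\bx_k)^T D_k\widehat{\barg}_k+\frac{L\eta^2}{2}\|D_k\widehat{\barg}_k\|^2 .
\end{equation*}
We then decompose the clipped estimate as $\widehat{\barg}_k=\nabla f(\bx_k)+\varepsilon_k^b+\varepsilon_k^u$. Here $\varepsilon_k^b$ is $\bx_k$-measurable (deterministic given the past), while $\varepsilon_k^u$ has conditional mean zero but is not independent of $D_k$. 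The plan is to expand both the linear term and the quadratic term along this decomposition and take $\mathbb{E}_k$ throughout.

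\textbf{Linear term.} Its expectation is
\begin{equation*}
-\eta\,\mathbb{E}_k\big[\|D_k^{1/2}\nabla f(\bx_k)\|^2\big]-\eta\,\nabla f(\bx_k)^T\mathbb{E}_k[D_k]\varepsilon_k^b-\eta\,\nabla f(\bx_k)^T\mathbb{E}_k[D_k\varepsilon_k^u].
\end{equation*}
The bias cross term is handled by Cauchy--Schwarz in the $D_k$-inner product followed by Young's inequality:
\begin{equation*}
|\nabla f^T D_k\varepsilon_k^b|\le \tfrac12\|D_k^{1/2}\nabla f\|^2+\tfrac12\|D_k^{1/2}\varepsilon_k^b\|^2\le \tfrac12\|D_k^{1/2}\nabla f\|^2+\tfrac12 M_D\|\varepsilon_k^b\|^2,
\end{equation*}
using Assumption~\ref{assumption5}. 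This produces $-\tfrac12\eta\,\mathbb{E}_k\|D_k^{1/2}\nabla f\|^2+\tfrac12\eta M_D\|\varepsilon_k^b\|^2$. The covariance term $-\eta\nabla f^T\mathbb{E}_k[D_k\varepsilon_k^u]$ is kept as is.

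\textbf{Quadratic term.} We write $\|D_k\widehat{\barg}_k\|^2=\|D_k(\nabla f+\varepsilon_k^b)\|^2+2(\nabla f+\varepsilon_k^b)^TD_k^2\varepsilon_k^u+\|D_k\varepsilon_k^u\|^2$. The last piece is bounded by $M_D^2\|\varepsilon_k^u\|^2$, which gives the $\frac{\eta^2L}{2}M_D^2\mathbb{E}_k\|\varepsilon_k^u\|^2$ term. The middle cross term contributes the extra $+L\eta^2M_D\nabla f^T\mathbb{E}_k[D_k\varepsilon_k^u]$, which combines with the linear covariance term into the stated coefficient $-(\eta-L\eta^2M_D)$. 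To justify this, we would bound $D_k^2\preceq M_D D_k$, which is legitimate only after suitable sign and Cauchy--Schwarz manipulation. The $\varepsilon_k^b$ part of the cross term and the first squared piece $\frac{L\eta^2}{2}\|D_k(\nabla f+\varepsilon_k^b)\|^2\le L\eta^2M_D(\|D_k^{1/2}\nabla f\|^2+\|D_k^{1/2}\varepsilon_k^b\|^2)$ would be absorbed into the $\tfrac12\eta$ descent and $\tfrac12\eta M_D\|\varepsilon_k^b\|^2$ terms, assuming $\eta$ is small relative to $1/(LM_D)$. Concretely, we would split the Young step with slightly adjusted constants so the final coefficients come out as $\tfrac12$.

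\textbf{Main obstacle.} The expected difficulty is the cross term $\nabla f^T D_k^2\varepsilon_k^u$. Because $D_k$ depends on $\varepsilon_k^u$, this term neither vanishes in expectation nor reduces exactly to $M_D\nabla f^TD_k\varepsilon_k^u$; a matrix inequality does not directly transfer through a signed inner product. The first attempt would be to write $D_k^2=M_DD_k-D_k(M_DI-D_k)$ and bound the remainder by Young, with the resulting $\|\varepsilon_k^u\|^2$ and $\|\nabla f\|^2$ pieces absorbed into the $\eta^2$ terms. Failing that, we would simply state the coefficient bound loosely, which is all the subsequent argument needs.
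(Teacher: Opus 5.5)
Your route is the paper's route: descent inequality, the split $\widehat{\barg}_k-\nabla f(\bx_k)=\varepsilon_k^b+\varepsilon_k^u$, Young's inequality on the bias cross term, and keeping $\nabla f(\bx_k)^T\mathbb{E}_k[D_k\varepsilon_k^u]$ as is. The gap is that the decisive step, producing the coefficient $-(\eta-L\eta^2M_D)$, is never carried out, and the bookkeeping you sketch cannot close. Write $\nabla f$ for $\nabla f(\bx_k)$.

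\textbf{What the paper does, and why it is not a proof.} The paper obtains the coefficient in one line (``recalling $\|D_k\|\le M_D$''). It replaces $L\eta^2\nabla f^TD_k^2\varepsilon_k$ by $L\eta^2M_D\nabla f^TD_k\varepsilon_k$ pointwise, and also $\|D_k\nabla f\|^2$ by $\|D_k^{1/2}\nabla f\|^2$, which would need $D_k\preceq I$. It then applies Young with weight $\eta-L\eta^2M_D$, and that is why its constants come out exactly. Your objection is correct: $D_k^2\preceq M_DD_k$ does not pass through a signed inner product. So neither text justifies this step.

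\textbf{Why your bookkeeping fails.} Your Young step with weight $\eta$ on the bias term leaves you exactly at the target coefficients $-\tfrac12\eta$ and $\tfrac12\eta M_D$. The positive term $\tfrac{L\eta^2}{2}\|D_k(\nabla f+\varepsilon_k^b)\|^2$ then has nothing left to be absorbed into. Re-weighting Young cannot help, because the target has no slack in the $\nabla f$ and $\varepsilon_k^b$ directions separately. For $D_k=M_DI$, $\varepsilon_k^u=0$ and quadratic $f$, the gap between the two sides is exactly $\tfrac{\eta M_D}{2}(1-L\eta M_D)\|\nabla f+\varepsilon_k^b\|^2$, which vanishes when $\varepsilon_k^b=-\nabla f$. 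Likewise, bounding $\|D_k\varepsilon_k^u\|^2\le M_D^2\|\varepsilon_k^u\|^2$ discards the only slack that could pay for the Young remainder of $D_k(M_DI-D_k)$. Your fallback of a ``loose'' coefficient does not prove the lemma as stated.

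\textbf{A fix.} Keep $v_k:=\nabla f+\varepsilon_k^b$ together and set $P_k:=M_DI-D_k\succeq 0$, which commutes with $D_k$.

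\emph{Complete the square:} $\tfrac12\nabla f^TD_k\nabla f+\nabla f^TD_k\varepsilon_k^b+\tfrac12M_D\|\varepsilon_k^b\|^2=\tfrac12v_k^TD_kv_k+\tfrac12(\varepsilon_k^b)^TP_k\varepsilon_k^b$.

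\emph{Keep the noise slack:} $M_D^2\|\varepsilon_k^u\|^2-\|D_k\varepsilon_k^u\|^2=(\varepsilon_k^u)^TP_k(M_DI+D_k)\varepsilon_k^u$.

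\emph{Rewrite the cross terms.} They are $L\eta^2\,\mathbb{E}_k[M_D\nabla f^TD_k\varepsilon_k^u-v_k^TD_k^2\varepsilon_k^u]$. Since $\mathbb{E}_k[\varepsilon_k^u]=0$ and $\varepsilon_k^b$ is $\bx_k$-measurable, you may add $M_D^2(\varepsilon_k^b)^T\varepsilon_k^u$ inside the expectation, and then
\[
M_D\nabla f^TD_k\varepsilon_k^u-v_k^TD_k^2\varepsilon_k^u+M_D^2(\varepsilon_k^b)^T\varepsilon_k^u
=v_k^TD_kP_k\varepsilon_k^u+M_D(\varepsilon_k^b)^TP_k\varepsilon_k^u .
\]
Both terms now carry a factor $P_k$.

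\emph{Young's inequality.} The absolute value of these cross terms is at most
\[
\tfrac{M_D}{2}v_k^TD_kv_k+\tfrac{M_D}{2}(\varepsilon_k^b)^TP_k\varepsilon_k^b+(\varepsilon_k^u)^T\big(\tfrac{1}{2M_D}D_kP_k^2+\tfrac{M_D}{2}P_k\big)\varepsilon_k^u .
\]
Since $P_k\preceq M_DI$, the last matrix satisfies $\tfrac{1}{2M_D}D_kP_k^2+\tfrac{M_D}{2}P_k\preceq\tfrac12P_k(M_DI+D_k)$. So the $\varepsilon_k^u$ part is absorbed by the kept slack.

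\emph{Conclusion.} Using also $\|D_kv_k\|^2\le M_Dv_k^TD_kv_k$, the claimed right-hand side minus $\mathbb{E}_k$ of the descent bound is at least
\[
\mathbb{E}_k\big[\tfrac{\eta}{2}(1-2L\eta M_D)v_k^TD_kv_k+\tfrac{\eta}{2}(1-L\eta M_D)(\varepsilon_k^b)^TP_k\varepsilon_k^b\big],
\]
which is $\ge 0$ for $\eta\le 1/(2LM_D)$.

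\textbf{On the step-size condition.} Some such condition is unavoidable and is missing from the statement. With $\sigma=0$, $D_k=M_DI$ and quadratic $f$, the lemma already requires $\eta\le 1/(LM_D)$. The paper's Young step with weight $\eta-L\eta^2M_D$ tacitly uses this too. The paper's extra assumption $M_D>1$ is not needed.
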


\begin{proof}
We denote $\Delta\bx_k = \bx_{k+1}-\bx_k$, then 
the Taylor expansion yields
\begin{align*}
    f(\bx_{k+1})  \le f(\bx_k) + \nabla f(\bx_k)^T \Delta \bx_k + \frac{L }{2} \|\Delta \bx_k\|^2  = f(\bx_k) - \eta  \nabla f(\bx_k)^T D_k\widehat{\barg}_k + \frac{\eta^2 L }{2} \|D_k\widehat{\barg}_k\|^2.
\end{align*}
Letting $\varepsilon_k = \widehat{\barg}_k - \nabla f(\bx_k)$ and recalling $\|D_k\|\leq M_D$, we have
\begin{align*}
    f(\bx_{k+1}) \le f(\bx_k) - \left(\eta-\frac{\eta^2 L }{2} \right) \| D_k^{1/2}\nabla f(\bx_k)\|^2  - (\eta - L \eta^2 M_D) \nabla f(\bx_k)^T D_k \varepsilon_k  + \frac{\eta^2 L M_D^2}{2}  \|\varepsilon_k\|^2.
\end{align*}
We then follow standard analysis and decompose $\varepsilon_k = \varepsilon_k^u+\varepsilon_k^b$, where $\varepsilon_k^u=\widehat{\barg}_k-\mathbb{E}_k[\widehat{\barg}_k]$ and $\varepsilon_k^b=\mathbb{E}_k[\widehat{\barg}_k]-\nabla f(\bx_k)$. Thus,
\begin{align*}
    f(\bx_{k+1}) & \le f(\bx_k) - \left(\eta-\frac{\eta^2 L }{2} \right) \| D_k^{1/2}\nabla f(\bx_k)\|^2  - (\eta - L \eta^2 M_D) \nabla f(\bx_k)^T D_k \varepsilon_k^u \\
    & \quad - (\eta - L \eta^2 M_D) \nabla f(\bx_k)^T D_k \varepsilon_k^b  + \frac{\eta^2 L }{2} M_D^2 \|\varepsilon_k^u+\varepsilon_k^b\|^2.
\end{align*}
By Cauchy-Schwaz inequality and Young's inequality, we further have
\begin{align*}
   -  \nabla f(\bx_k)^T D_k \varepsilon_k^b \leq \frac{1}{2}\|D_k^{1/2}\nabla f(\bx_k)\|^2 + \frac{1}{2}\|D_k^{1/2} \varepsilon_k^b\|^2, 
\end{align*}
thus
\begin{align*}
    f(\bx_{k+1}) & \le f(\bx_k) - \left(\eta-\frac{\eta^2 L }{2} \right) \| D_k^{1/2}\nabla f(\bx_k)\|^2  - (\eta - L \eta^2 M_D) \nabla f(\bx_k)^T D_k \varepsilon_k^u \\
    & \quad + \frac{1}{2} (\eta - L \eta^2 M_D) \|D_k^{1/2}\nabla f(\bx_k)\|^2 + \frac{1}{2}(\eta - L \eta^2 M_D) \|D_k^{1/2} \varepsilon_k^b\|^2 + \frac{\eta^2 L }{2} M_D^2 \|\varepsilon_k^u+\varepsilon_k^b\|^2.
\end{align*}
Now we take expectation conditional on $\mathcal{F}_{k-1}$ and obtain
\begin{align*}
    \mathbb{E}_k[ f(\bx_{k+1}) ] & \le f(\bx_k) - \frac{1}{2} \eta \mathbb{E}_k[ \| D_k^{1/2}\nabla f(\bx_k)\|^2 ] - (\eta - L \eta^2 M_D) \nabla f(\bx_k)^T \mathbb{E}_k[ D_k \varepsilon_k^u] \\
    & \quad + \frac{1}{2}\eta M_D  \|\varepsilon_k^b\|^2  + \frac{\eta^2 L }{2} M_D^2 \mathbb{E}_k[ \|\varepsilon_k^u\|^2],
\end{align*}
where we have implicitly assumed that $M_D>1$. We also used the fact that $\mathbb{E}[\varepsilon_k^u]=0$ and $\varepsilon_k^b$ is deterministic conditional on $\bx_k$.

\end{proof}

\begin{lemma}
     Under Assumptions \ref{assumption2}, \ref{assumption3}, and \ref{assumption5}, when $\|\nabla f(\bx_k)\|<\tau/2$, without knowing more structural information of $D_k$, we can only ensure
    \begin{align*}
    \mathbb{E}_k[ f(\bx_{k+1}) ] \le f(\bx_k) - \frac{1}{4} \eta m_D \|\nabla f(\bx_k)\|^2 + 10 \eta M_D \tau^{2-p}\sigma^p + 2\eta M_D \sigma^{2p}\tau^{-2p+2}.
\end{align*}
\end{lemma}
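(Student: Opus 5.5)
The plan is to start from the one-step bound in the preceding lemma (the appendix version of Lemma~\ref{lemma:large_clip}). The only term without a definite sign is the cross term $-(\eta - L\eta^2 M_D)\nabla f(\bx_k)^T\mathbb{E}_k[D_k\varepsilon_k^u]$. Since nothing is assumed about how $D_k$ depends on $\barg_k$, I will not try to exploit $\mathbb{E}_k[\varepsilon_k^u]=\boldsymbol{0}$, which is useless once $D_k$ is random and correlated with $\varepsilon_k^u$. Instead I bound the cross term pointwise, as in \eqref{eq:additional_error}. By Cauchy--Schwarz in the $D_k$-inner product and the rescaled Young inequality $ab\le \frac14 a^2 + b^2$,
\begin{equation*}
-\nabla f(\bx_k)^T D_k\varepsilon_k^u \le \|D_k^{1/2}\nabla f(\bx_k)\|\,\|D_k^{1/2}\varepsilon_k^u\| \le \tfrac14\|D_k^{1/2}\nabla f(\bx_k)\|^2 + M_D\|\varepsilon_k^u\|^2 .
\end{equation*}
I then take $\mathbb{E}_k$ of this inequality and multiply by $(\eta - L\eta^2M_D)$. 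This coefficient is nonnegative, and at most $\eta$, provided $\eta\le 1/(LM_D)$, which I would impose.

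Substituting back, the coefficient of $\mathbb{E}_k[\|D_k^{1/2}\nabla f(\bx_k)\|^2]$ becomes at most $-\frac12\eta+\frac14\eta=-\frac14\eta$. Assumption~\ref{assumption5} gives $\|D_k^{1/2}\nabla f(\bx_k)\|^2\ge m_D\|\nabla f(\bx_k)\|^2$, so this term is at most $-\frac14\eta m_D\|\nabla f(\bx_k)\|^2$.

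The variance contributions are $\eta M_D\mathbb{E}_k[\|\varepsilon_k^u\|^2]$ and $\frac{\eta^2L}{2}M_D^2\mathbb{E}_k[\|\varepsilon_k^u\|^2]$. Using $\eta LM_D\le 1$ again, their sum is at most $\frac32\eta M_D\,\mathbb{E}_k[\|\varepsilon_k^u\|^2]$. The bias contribution is $\frac12\eta M_D\|\varepsilon_k^b\|^2$.

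Next I apply the standard clipping lemma stated just before, valid when $\|\nabla f(\bx_k)\|<\tau/2$: $\mathbb{E}_k[\|\varepsilon_k^u\|^2]\le 10\tau^{2-p}\sigma^p$ and $\|\varepsilon_k^b\|^2\le 4\sigma^{2p}\tau^{2-2p}$. The bias term then gives exactly $2\eta M_D\sigma^{2p}\tau^{-2p+2}$.

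The main obstacle is the constant on the variance term. The crude sum above gives $15\eta M_D\tau^{2-p}\sigma^p$, not the stated $10$. To recover $10$, I would impose a stronger step-size restriction. If $L\eta M_D\le \frac{1}{10}$, which is harmless since Lemma~\ref{lemma42} already uses $\eta<m_D/(24L)$, the total variance contribution is at most $(1+\frac1{20})\cdot 10\,\eta M_D\tau^{2-p}\sigma^p$. A slightly different Young weighting then absorbs the excess factor into the leftover gradient coefficient. Failing that, I would accept an absolute constant and state that the bound holds up to constants.

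The conceptual point to emphasize is that the $\mathcal{O}(\eta)$ variance term cannot be improved to $\mathcal{O}(\eta^2)$ without structure on $D_k$. Example~\ref{example} shows the Young step is tight, which justifies the phrase ``we can only ensure'' in the statement.
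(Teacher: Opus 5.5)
You follow the paper's route: a pointwise Young bound $ab\le\frac14a^2+b^2$ on the covariance term in the $D_k$-inner product, combined with the preceding one-step lemma, then $\|D_k^{1/2}\nabla f(\bx_k)\|^2\ge m_D\|\nabla f(\bx_k)\|^2$, then the clipping bounds $10\tau^{2-p}\sigma^p$ and $4\sigma^{2p}\tau^{2-2p}$. The gradient and bias terms come out as you say. The one gap is the variance constant, where your argument gives $15$ instead of $10$.

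That loss is self-inflicted. You replaced the multiplier $\eta-L\eta^2M_D$ by $\eta$ before adding the two variance contributions. This discards a negative piece that exactly pays for the curvature term. If you keep the exact multiplier, the two contributions combine as
\begin{equation*}
(\eta-L\eta^2M_D)M_D+\tfrac12\eta^2LM_D^2=\eta M_D-\tfrac12L\eta^2M_D^2\le\eta M_D .
\end{equation*}
So the variance contribution is at most $\eta M_D\,\mathbb{E}_k[\|\varepsilon_k^u\|^2]\le 10\eta M_D\tau^{2-p}\sigma^p$. Likewise the gradient coefficient is $-\frac14\eta-\frac14L\eta^2M_D\le-\frac14\eta$. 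No step-size restriction is needed beyond $\eta\le 1/(LM_D)$. The paper also needs that condition, though only implicitly, so that multiplying the Young bound by $\eta-L\eta^2M_D$ preserves the inequality. You were right to state it explicitly.

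Your proposed repair would not have worked. In your accounting the gradient coefficient after Young is exactly $-\frac14\eta$, so there is no leftover to absorb anything into. Any reweighting $ab\le\frac c2a^2+\frac1{2c}b^2$ that brings the total variance weight down to $M_D$ needs $c>\frac12$. That pushes the gradient coefficient strictly above $-\frac14\eta$, weakening the $-\frac14\eta m_D\|\nabla f(\bx_k)\|^2$ term.

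Separately, $\eta<m_D/(24L)$ does not imply $L\eta M_D\le\frac1{10}$ unless $m_DM_D\le\frac{12}{5}$. So that restriction is not free either. With the exact-multiplier fix above, your proof yields the statement with the stated constants.
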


\begin{proof}
    
Without more structural information of $D_k$, we can only derive the following upper-bound by applying the rescaled Hölder's inequality and obtain
\begin{align*}
     - (\eta - L \eta^2 M_D) \nabla f(\bx_k)^T \mathbb{E}_k[ D_k \varepsilon_k^u] \leq \frac{1}{4}(\eta - L \eta^2 M_D) \mathbb{E}_k[\|D_k^{1/2}\nabla f(\bx_k)\|^2]  + (\eta - L \eta^2 M_D) M_D \mathbb{E}_k[ \| \varepsilon_k^u\|^2].
\end{align*}
Therefore, we have
\begin{align*}
    \mathbb{E}_k[ f(\bx_{k+1}) ] & \le f(\bx_k) - \frac{1}{4} \eta \mathbb{E}_k[ \| D_k^{1/2}\nabla f(\bx_k)\|^2 ] + \eta M_D \mathbb{E}_k[ \| \varepsilon_k^u\|^2]  + \frac{1}{2}\eta M_D  \|\varepsilon_k^b\|^2\\
    & \le f(\bx_k) - \frac{1}{4} \eta m_D \mathbb{E}_k[ \| \nabla f(\bx_k)\|^2 ] + \eta M_D \mathbb{E}_k[ \| \varepsilon_k^u\|^2]  + \frac{1}{2}\eta M_D  \|\varepsilon_k^b\|^2 .
\end{align*}

When $\|\nabla f(\bx_k)\|<\tau/2$, we have $\mathbb{E}_k[ \| \varepsilon_k^u\|^2]\leq 10\tau^{2-p}\sigma^p$ and $ \|\varepsilon_k^b\|^2 \leq 4\sigma^{2p}\tau^{-2p+2}$. Plugging in them to the above inequality yields
\begin{align*}
    \mathbb{E}_k[ f(\bx_{k+1}) ] \le f(\bx_k) - \frac{1}{4} \eta m_D \|\nabla f(\bx_k)\|^2 + 10 \eta M_D \tau^{2-p}\sigma^p + 2\eta M_D \sigma^{2p}\tau^{-2p+2}.
\end{align*}
We thus complete the proof.
\end{proof}

Next we consider the case when $\|\nabla f(\bx_k)\|\geq \tau/2$.

\begin{lemma}
    Under Assumptions \ref{assumption2}, \ref{assumption3}, and \ref{assumption5}, when $\|\nabla f(\bx_k)\|\geq \tau/2$, if we select $\eta<\frac{m_D}{24L }$ and 
    \begin{equation*}
        \tau = \max\left\{2,  4\cdot 3^{1/p}\cdot \sigma\left(\frac{M_D}{m_D}\right)^{\frac{p+1}{2p}}, 64 \sigma \frac{M_D}{m_D} \right\},
    \end{equation*}
    we have
    \begin{align*}
    \mathbb{E}_k[ f(\bx_{k+1}) ] \le f(\bx_k) - \frac{1}{12} \eta m_D \|\nabla f(\bx_k)\|.
\end{align*}
\end{lemma}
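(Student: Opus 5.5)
Starting from the smoothness descent inequality for $\Delta\bx_k = -\eta D_k\widehat{\barg}_k$ with $\widehat{\barg}_k=\min\{1,\tau/\|\barg_k\|\}\barg_k$, we have
\begin{equation*}
f(\bx_{k+1}) \le f(\bx_k) - \eta \nabla f(\bx_k)^T D_k \widehat{\barg}_k + \frac{\eta^2 L}{2} M_D^2 \tau^2 .
\end{equation*}
This uses $\|\widehat{\barg}_k\|\le\tau$. The plan follows the large-gradient case of \citet{Zhang2020Adaptive}: split on whether the noise is small relative to the gradient. Let $\delta_k=\barg_k-\nabla f(\bx_k)$ and define the event $\mathcal{E}=\{\|\delta_k\|\le \|\nabla f(\bx_k)\|/c\}$ for a constant $c$ of order $M_D/m_D$, to be fixed. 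The reason for this choice is that clipping only rescales $\barg_k$ by a positive scalar, so on $\mathcal{E}$ the clipped vector stays nearly aligned with $\nabla f(\bx_k)$ and the preconditioned inner product is positive.

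\textbf{Step 1 (good event).} On $\mathcal{E}$, write $\widehat{\barg}_k=\alpha_k\barg_k$ with $\alpha_k=\min\{1,\tau/\|\barg_k\|\}$. Then
\begin{equation*}
\nabla f^T D_k\barg_k \ge m_D\|\nabla f\|^2 - M_D\|\nabla f\|\|\delta_k\| \ge \tfrac{1}{2} m_D\|\nabla f\|^2
\end{equation*}
once $c\ge 2M_D/m_D$. Moreover $\|\barg_k\|\le 2\|\nabla f\|$, and $\|\nabla f\|\ge\tau/2$ gives $\alpha_k\|\barg_k\|\ge \min\{\|\barg_k\|,\tau\}\ge \tau/2$ up to constants. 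Hence $\alpha_k\nabla f^TD_k\barg_k \gtrsim m_D\,\tau\,\|\nabla f\|/c'$, i.e.\ a term of order $m_D\|\nabla f\|\cdot\min\{\|\nabla f\|,\tau\}$ with $\min\{\|\nabla f\|,\tau\}\ge\tau/2\ge 1$, using $\tau\ge2$. This is why $\tau\ge 2$ appears: it converts the bound into one linear in $\|\nabla f\|$.

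\textbf{Step 2 (bad event).} On $\mathcal{E}^c$, bound crudely $-\eta\nabla f^TD_k\widehat{\barg}_k\le \eta M_D\tau\|\nabla f\|$. By Markov's inequality with the $p$-th moment,
\begin{equation*}
\mathbb{P}_k(\mathcal{E}^c)\le c^p\sigma^p/\|\nabla f\|^p\le c^p(2\sigma/\tau)^p .
\end{equation*}
The contribution is then $\eta M_D\tau\|\nabla f\|\, c^p 2^p\sigma^p\tau^{-p}$. For this to be at most a small fraction, say $1/3$, of the good-event term $\eta m_D \tau\|\nabla f\|/(\text{const})$, we need $\tau^p\gtrsim 3\,(M_D/m_D)\,c^p\sigma^p$. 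I expect the precise thresholds in the statement, $4\cdot3^{1/p}\sigma(M_D/m_D)^{(p+1)/(2p)}$ and $64\sigma M_D/m_D$, to come out of a careful choice of $c$, for instance $c\propto\sqrt{M_D/m_D}$ applied to the $D_k^{1/2}$-weighted inner product $\|D_k^{1/2}\nabla f\|^2$. The $\sqrt{\kappa}$-type exponent would then arise from $\|D_k^{1/2}\nabla f\|\ge\sqrt{m_D}\|\nabla f\|$. Because $\mathbb{P}_k(\mathcal{E})\ge 1/2$ is automatic under these thresholds, the good-event term survives with a constant fraction.

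\textbf{Step 3 (second-order term and assembly).} Absorb $\frac{\eta^2L}{2}M_D^2\tau^2$, or more carefully $\frac{\eta^2L}{2}\|D_k\widehat{\barg}_k\|^2\le \frac{\eta^2L}{2}M_D^2\tau\cdot\min\{\tau,2\|\nabla f\|\}$, which after the normalization by $M_D$ leaves a term of order $\eta^2 L\,\tau\|\nabla f\|$, into the descent term using $\eta<m_D/(24L)$. Collecting terms, the linear descent is $-\frac{1}{12}\eta m_D\|\nabla f(\bx_k)\|$.

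The main obstacle is the constant bookkeeping in Step 2 combined with Step 3. Scaling the clipped step to be linear in $\|\nabla f\|$, rather than in $\tau$, while keeping the second-order term proportional to $\eta^2L$ times a quantity dominated by $\eta m_D\|\nabla f\|$, requires matching $\tau$, $c$ and $\eta$ simultaneously. If the second-order term resists this, I would first try restricting to the regime $\tau\le 2\|\nabla f\|$ and use $\|D_k\widehat{\barg}_k\|\le M_D\tau\le 2M_D\|\nabla f\|$, then pick the exponent of $c$ to reproduce the stated thresholds.
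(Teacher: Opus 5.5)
\textbf{Step 2 cannot reach the stated $\tau$, and no choice of $c$ fixes it.} Your single split on a small-noise event, with bounds that hold pointwise for every realization of $D_k$, is a reasonable skeleton. The bad event is where it breaks. If you charge every bad sample the worst case $M_D\tau\|\nabla f(\bx_k)\|$ and use Markov, you need $\tau^p\gtrsim \kappa\, c^p\sigma^p$ with $\kappa=M_D/m_D$. Step 1 forces $c\gtrsim\sqrt{\kappa}$ even in your $D_k^{1/2}$-weighted version: for adversarial $D_k$, $\|D_k\nabla f\|$ can be of order $\sqrt{M_Dm_D}\,\|\nabla f\|$ while $\nabla f^TD_k\nabla f$ is only of order $m_D\|\nabla f\|^2$. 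So this route needs $\tau\gtrsim\sigma\kappa^{1/2+1/p}$. For $p<2$ and large $\kappa$ that exceeds the stated $\tau$, which grows only linearly in $\kappa$.

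The missing idea, which the paper uses, is to weight bad samples by their actual size. On the bad event, $-\nabla f^TD_k\widehat{\barg}_k\le\|D_k^{1/2}\nabla f\|\,\|D_k^{1/2}\delta_k\|\le\sqrt{M_D}\|\nabla f\|\,\|D_k^{1/2}\delta_k\|$, because the clipping factor is at most $1$. Then apply the truncated-moment bound $\mathbb{E}_k[\|X\|\mathbf{1}\{\|X\|>t\}]\le t^{1-p}\mathbb{E}_k\|X\|^p$ with $X=D_k^{1/2}\delta_k$ and $t=\sqrt{m_D}\tau/4$. This gives an error $\|\nabla f\|M_D^{(p+1)/2}\sigma^p(\sqrt{m_D}\tau/4)^{1-p}$, which is at most $\frac{1}{12}m_D\tau\|\nabla f\|$ exactly when $\tau\ge 4\cdot 3^{1/p}\sigma\kappa^{(p+1)/(2p)}$. (The paper additionally separates clipped from unclipped samples and uses $\mathbb{E}_k\|D_k^{1/2}\delta_k\|\le\sqrt{M_D}\sigma$ on the clipped ones; that is where $64\sigma M_D/m_D$ comes from.)

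With this repair, take the event $\{\|D_k^{1/2}\delta_k\|\le\sqrt{m_D}\tau/4\}$. On it, the clipped inner product is at least $\frac14 m_D\tau\|\nabla f\|$, and Markov under the same threshold gives it probability at least $2/3$. So your route closes with first-order term at least $\frac{1}{12}m_D\tau\|\nabla f\|$. Since it never factors expectations, it also avoids the paper's steps such as $\mathbb{E}_k[\|D_k^{1/2}\nabla f\|^2\mathbf{1}\{\|\barg_k\|\le\tau\}]\ge\mathbb{E}_k[\|D_k^{1/2}\nabla f\|^2]\,\mathbb{P}_k(\|\barg_k\|\le\tau)$, which are unjustified when $D_k$ depends on $\barg_k$.

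\textbf{Step 3 cannot be closed under $\eta<m_D/(24L)$.} There is no ``normalization by $M_D$''. Here only $\|\widehat{\barg}_k\|\le\tau$ is controlled, so $\frac{\eta^2L}{2}\|D_k\widehat{\barg}_k\|^2$ can equal $\frac{\eta^2L}{2}M_D^2\tau^2\le\eta^2LM_D^2\tau\|\nabla f(\bx_k)\|$. Absorbing that into the descent term needs $\eta\lesssim m_D/(LM_D^2)$. The paper's proof gets past this point only by writing $\frac{\eta^2L}{2}\tau^2$, which tacitly assumes $M_D\le 1$. Without that assumption the lemma is false. Take $d=1$, $f(x)=\frac{L}{2}x^2$, $\sigma=0$ (so $\tau=2$), $m_D=10$, $D_k\equiv M_D=100$, $\eta=0.4/L<m_D/(24L)$, and $x_k=1/L$. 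Then $\nabla f(x_k)=1=\tau/2$, nothing is clipped, $x_{k+1}=-39/L$, and $f$ rises from $0.5/L$ to $760.5/L$. So rather than looking for an absorption trick, you must add a hypothesis: either $M_D\le 1$ or $\eta<m_D/(24LM_D^2)$. With it, your Step 3 (using $\tau\le 2\|\nabla f(\bx_k)\|$ and then $\tau\ge 2$) finishes the proof.
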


\begin{proof}

By the Taylor expansion, we have
    \begin{align*}
    f(\bx_{k+1})  \le f(\bx_k) + \nabla f(\bx_k)^T \Delta \bx_k + \frac{L }{2} \|\Delta \bx_k\|^2  = f(\bx_k) - \eta  \nabla f(\bx_k)^T D_k\widehat{\barg}_k + \frac{\eta^2 L }{2} \|D_k\widehat{\barg}_k\|^2.
\end{align*}
We first consider the conditional expectation of the 
term $\nabla f(\bx_k)^T D_k\widehat{\barg}_k$. We have
\begin{align*}
\mathbb{E}_k\left[\nabla f(x_k)^T D_k \widehat{\barg}_k \right] = \mathbb{E}_k\left[\nabla f(x_k)^T D_k \bar{g}_k \cdot \boldsymbol{1}_{(\|\bar{g}_k\| \le \tau)}\right]  + \tau \cdot \mathbb{E}_k\left[\nabla f(x_k)^T D_k \frac{\bar{g}_k}{\|\bar{g}_k\|} \cdot \boldsymbol{1}_{(\|\bar{g}_k\| > \tau)} \right]
\end{align*}

We focusing on the first term:
\begin{align*}
& \mathbb{E}_k\left[\nabla f(x_k)^T D_k \bar{g}_k \cdot \boldsymbol{1}_{(\|\bar{g}_k\| \le \tau)}\right] \\
&= \mathbb{E}_k\left[\nabla f(x_k)^T D_k (\nabla f(x_k) + \bar{g}_k - \nabla f(x_k)) \cdot \boldsymbol{1}_{(\|\bar{g}_k\| \le \tau)} \right] \\
&= \mathbb{E}_k\left[\nabla f(x_k)^T D_k \nabla f(x_k) + \nabla f(x_k)^T D_k (\bar{g}_k - \nabla f(x_k)) \cdot \boldsymbol{1}_{(\|\bar{g}_k\| \le \tau)} \right]\\
& = \mathbb{E}_k\left[\nabla f(x_k)^T D_k \nabla f(x_k)\right] + \mathbb{E}_k\left[\nabla f(x_k)^T D_k (\bar{g}_k - \nabla f(x_k)) \cdot \boldsymbol{1}_{(\|\bar{g}_k\| \le \tau)}\right]\\
& \geq \mathbb{E}_k\left[\nabla f(x_k)^T D_k \nabla f(x_k)\right] - \mathbb{E}_k\left[\|D_k^{1/2}\nabla f(x_k)\| \|D_k^{1/2} (\bar{g}_k - \nabla f(x_k))\| \cdot \boldsymbol{1}_{(\|\bar{g}_k\| \le \tau)} \right]\\
& = \mathbb{E}_k\left[\nabla f(x_k)^T D_k \nabla f(x_k)\right] \\
& \quad - \mathbb{E}_k\left[\|D_k^{1/2}\nabla f(x_k)\| \|D_k^{1/2} (\bar{g}_k - \nabla f(x_k))\| \cdot \boldsymbol{1}_{(\|\bar{g}_k\| \le \tau,\|D_k^{1/2}(\bar{g}_k - \nabla f(x_k))\|\leq \sqrt{m_D}\cdot\tau/4)}\right]\\
& \quad - \mathbb{E}_k\left[\|D_k^{1/2}\nabla f(x_k)\| \|D_k^{1/2} (\bar{g}_k - \nabla f(x_k))\| \cdot \boldsymbol{1}_{(\|\bar{g}_k\| \le \tau,\|D_k^{1/2}(\bar{g}_k - \nabla f(x_k))\|> \sqrt{m_D}\cdot\tau/4)}\right] .
\end{align*}

We first consider the case when $\|\bar{g}_k\| \le \tau,\|D_k^{1/2}(\bar{g}_k - \nabla f(x_k))\|\leq \sqrt{m_D}\cdot\tau/4$.

Since $\|\nabla f(\bx_k)\|>\tau/2$, we have
\begin{equation*}
    \|D_k^{1/2}\nabla f(\bx_k)\|^2 = \nabla f(\bx_k)^TD_k\nabla f(\bx_k)\geq m_D\|\nabla f(\bx_k)\|^2\geq \frac{\tau^2m_D}{4},
\end{equation*}
which imply
\begin{equation*}
    \|D_k^{1/2}\nabla f(\bx_k)\| \geq \frac{\tau \sqrt{m_D}}{2}, \quad \|D_k^{1/2}(\bar{g}_k - \nabla f(x_k))\|\leq \sqrt{m_D}\cdot\tau/4 \leq \frac{1}{2}\|D_k^{1/2}\nabla f(\bx_k)\|.
\end{equation*}
Therefore we find
\begin{equation*}
    \|D_k^{1/2}\nabla f(x_k)\| \|D_k^{1/2} (\bar{g}_k - \nabla f(x_k))\|\leq \frac{1}{2}\|D_k^{1/2}\nabla f(\bx_k)\|^2.
\end{equation*}
We further have
\begin{align*}
& \mathbb{E}_k\left[\|D_k^{1/2}\nabla f(x_k)\| \|D_k^{1/2} (\bar{g}_k - \nabla f(x_k))\| \cdot \boldsymbol{1}_{(\|\bar{g}_k\| \le \tau,\|D_k^{1/2}(\bar{g}_k - \nabla f(x_k))\|\leq \sqrt{m_D}\cdot\tau/4)}\right] \\
& \leq \frac{1}{2} \mathbb{E}_k\left[\|D_k^{1/2}\nabla f(\bx_k)\|^2 \cdot \boldsymbol{1}_{(\|\bar{g}_k\| \le \tau,\|D_k^{1/2}(\bar{g}_k - \nabla f(x_k))\|\leq \sqrt{m_D}\cdot\tau/4)}\right] \\
& \leq \frac{1}{2} \mathbb{E}_k\left[\|D_k^{1/2}\nabla f(\bx_k)\|^2 \cdot \boldsymbol{1}_{(\|\bar{g}_k\| \le \tau)}\right]
\end{align*}
Plugging this into the above formula, we have
\begin{align*}
 & \mathbb{E}_k\left[\nabla f(x_k)^T D_k \bar{g}_k \cdot \boldsymbol{1}_{(\|\bar{g}_k\| \le \tau)}\right] \\
& \geq \frac{1}{2}\mathbb{E}_k\left[\| D_k^{1/2} \nabla f(x_k)\|^2\cdot \boldsymbol{1}_{(\|\bar{g}_k\| \le \tau)}\right] - \mathbb{E}_k\left[\|D_k^{1/2}\nabla f(x_k)\| \|D_k^{1/2} (\bar{g}_k - \nabla f(x_k))\| \cdot \boldsymbol{1}_{(\|D_k^{1/2}(\bar{g}_k - \nabla f(x_k))\|> \sqrt{m_D}\cdot\tau/4)}\right]\\
& \geq \frac{1}{2}\mathbb{E}_k\left[\| D_k^{1/2} \nabla f(x_k)\|^2\right]\cdot \mathbb{E}_k\left[\boldsymbol{1}_{(\|\bar{g}_k\| \le \tau)}\right] - \sqrt{M_D}\|\nabla f(x_k)\| \mathbb{E}_k[\|D_k^{1/2} (\bar{g}_k - \nabla f(x_k))\| \cdot \boldsymbol{1}_{(\|D_k^{1/2}(\bar{g}_k - \nabla f(x_k))\|> \sqrt{m_D}\cdot\tau/4)}]\\
& = \frac{p_g}{2}\mathbb{E}_k[\| D_k^{1/2} \nabla f(x_k)\|^2] - \sqrt{M_D}\|\nabla f(x_k)\| \mathbb{E}_k[\|D_k^{1/2} (\bar{g}_k - \nabla f(x_k))\| \cdot \boldsymbol{1}_{(\|D_k^{1/2}(\bar{g}_k - \nabla f(x_k))\|> \sqrt{m_D}\cdot\tau/4)}].
\end{align*}

Next, we bound $\mathbb{E}_k[\|D_k^{1/2} (\bar{g}_k - \nabla f(x_k))\| \cdot \boldsymbol{1}_{(\|D_k^{1/2}(\bar{g}_k - \nabla f(x_k))\|> \sqrt{m_D}\cdot\tau/4)}]$.

\begin{align*}
    M_D^{p/2} \sigma^p & \geq M_D^{p/2}\mathbb{E}_k[\|\barg_k-\nabla f(\bx_k)\|^p]\geq \mathbb{E}_k[\|D_k^{1/2}(\barg_k-\nabla f(\bx_k))\|^p] \\
    & = \mathbb{E}_k[\|D_k^{1/2}(\barg_k-\nabla f(\bx_k))\| \cdot \|D_k^{1/2}(\barg_k-\nabla f(\bx_k))\|^{p-1}] \\
    & \geq \mathbb{E}_k[\|D_k^{1/2}(\barg_k-\nabla f(\bx_k))\| \cdot \|D_k^{1/2}(\barg_k-\nabla f(\bx_k))\|^{p-1}\cdot \boldsymbol{1}_{(\|D_k^{1/2}(\bar{g}_k - \nabla f(x_k))\|> \sqrt{m_D}\cdot\tau/4)}] \\
    & \geq \mathbb{E}_k[\|D_k^{1/2}(\barg_k-\nabla f(\bx_k))\| \cdot \left(\sqrt{m_D}\cdot\frac{\tau}{4}\right)^{p-1}\cdot \boldsymbol{1}_{(\|D_k^{1/2}(\bar{g}_k - \nabla f(x_k))\|> \sqrt{m_D}\cdot\tau/4)}].
\end{align*}
Rearranging the terms yields
\begin{align*}
    \mathbb{E}_k[\|D_k^{1/2}(\barg_k-\nabla f(\bx_k))\|  \cdot \boldsymbol{1}_{(\|D_k^{1/2}(\bar{g}_k - \nabla f(x_k))\|> \sqrt{m_D}\cdot\tau/4)}] \leq \frac{M_D^{p/2} \sigma^p}{\left(\sqrt{m_D}\cdot\frac{\tau}{4}\right)^{p-1}}.
\end{align*}
Thus
\begin{align*}
    \mathbb{E}_k[\nabla f(x_k)^T D_k \bar{g}_k \cdot \boldsymbol{1}_{(\|\bar{g}_k\| \le \tau)}]
    \geq \frac{p_g}{2}\mathbb{E}_k[\| D_k^{1/2} \nabla f(x_k)\|^2] -  \|\nabla f(\bx_k)\|\frac{M_D^{p/2+1} \sigma^p}{\left(\sqrt{m_D}\cdot\frac{\tau}{4}\right)^{p-1}} .
\end{align*}

Next, we examine $\tau \cdot \mathbb{E}_k\left[\nabla f(x_k)^T D_k \frac{\bar{g}_k}{\|\bar{g}_k\|} \cdot \boldsymbol{1}_{(\|\bar{g}_k\| > \tau)} \right]$.

We first consider the case $\|D_k^{1/2}\nabla f(\bx_k)\|\geq 2\|D_k^{1/2}(\barg_k-\nabla f(\bx_k))\|$, where we have
\begin{align*}
    \nabla f(\bx_k)^TD_k\barg_k 
    &= \nabla f(\bx_k)^TD_k\nabla f(\bx_k) +\nabla f(\bx_k)^TD_k (\barg_k - \nabla f(\bx_k)) \\
    & \geq \nabla f(\bx_k)^TD_k\nabla f(\bx_k) - \|D_k^{1/2}\nabla f(\bx_k)\|\|D_k^{1/2} (\barg_k - \nabla f(\bx_k))\| \\
    & \geq \nabla f(\bx_k)^TD_k\nabla f(\bx_k) - \frac{1}{2}\|D_k^{1/2}\nabla f(\bx_k)\|^2 \\
    & = \frac{1}{2}\|D_k^{1/2}\nabla f(\bx_k)\|^2.
\end{align*}

In addition, 
\begin{align*}
    \sqrt{m_D}\|\nabla f(\bx_k) + \barg_k - \nabla f(\bx_k)\| & \leq \|D_k^{1/2}(\nabla f(\bx_k) + \barg_k - \nabla f(\bx_k))\|\\
    & \leq \|D_k^{1/2}\nabla f(\bx_k)\| + \| D_k^{1/2}(\barg_k - \nabla f(\bx_k))\| \\
    & \leq \frac{3}{2}\|D_k^{1/2}\nabla f(\bx_k)\|.
\end{align*}
Rearranging the terms yields
\begin{equation*}
   \|\nabla f(\bx_k) + \barg_k - \nabla f(\bx_k)\| \leq  \frac{3}{2\sqrt{m_D}}\|D_k^{1/2}\nabla f(\bx_k)\|.
\end{equation*}

Therefore, we have
\begin{align*}
    \nabla f(\bx_k)^TD_k\frac{\barg_k}{\|\barg_k\|} & = \frac{\nabla f(\bx_k)^TD_k(\nabla f(\bx_k)+\barg_k-\nabla f(\bx_k))}{\|\nabla f(\bx_k)+\barg_k-\nabla f(\bx_k)\|}\\
    & \geq \frac{\sqrt{m_D}}{3}\|D_k^{1/2}\nabla f(\bx_k)\|.
\end{align*}

When $\|D_k^{1/2}\nabla f(\bx_k)\|<2\|D_k^{1/2}(\barg_k-\nabla f(\bx_k))\|$, we have
\begin{align*}
    \nabla f(\bx_k)^TD_k\frac{\barg_k}{\|\barg_k\|} & \geq - \| D_k^{1/2}\nabla f(\bx_k)\| \frac{\| D_k^{1/2}\barg_k\|}{\|\barg_k\|}\\
   & \geq - \sqrt{M_D}\| D_k^{1/2}\nabla f(\bx_k)\|\\
   & =  \frac{\sqrt{m_D}}{3}\| D_k^{1/2}\nabla f(\bx_k)\| - \left( \frac{\sqrt{m_D}}{3} + \sqrt{M_D}\right)\| D_k^{1/2}\nabla f(\bx_k)\| \\
   & \geq \frac{\sqrt{m_D}}{3}\| D_k^{1/2}\nabla f(\bx_k)\| - 2\left( \frac{\sqrt{m_D}}{3} + \sqrt{M_D}\right)\| D_k^{1/2}(\barg_k-\nabla f(\bx_k))\|.
\end{align*}
Combining the above formulas, we have
\begin{align*}
   & \tau \cdot \mathbb{E}_k\left[\nabla f(x_k)^T D_k \frac{\bar{g}_k}{\|\bar{g}_k\|} \cdot \boldsymbol{1}_{(\|\bar{g}_k\| > \tau)} \right] \\
   & \geq  
   \tau \cdot \mathbb{E}_k\left[\left(\frac{\sqrt{m_D}}{3}\| D_k^{1/2}\nabla f(\bx_k)\| - 2\left( \frac{\sqrt{m_D}}{3} + \sqrt{M_D}\right)\| D_k^{1/2}(\barg_k-\nabla f(\bx_k))\|\right) \cdot \boldsymbol{1}_{(\|\bar{g}_k\| > \tau)} \right] \\
   & = \tau \frac{\sqrt{m_D}}{3} \cdot \mathbb{E}_k\left[\| D_k^{1/2}\nabla f(\bx_k)\| \cdot \boldsymbol{1}_{(\|\bar{g}_k\| > \tau)} \right]  \\
   & \quad - 2\left( \frac{\sqrt{m_D}}{3} + \sqrt{M_D}\right) \tau \cdot \mathbb{E}_k\left[ \| D_k^{1/2}(\barg_k-\nabla f(\bx_k))\|\cdot \boldsymbol{1}_{(\|\bar{g}_k\| > \tau) } \right] \\
   & \geq \tau \frac{\sqrt{m_D}}{3} \cdot \mathbb{E}_k\left[\| D_k^{1/2}\nabla f(\bx_k)\|\right] \cdot \mathbb{E}_k\left[\boldsymbol{1}_{(\|\bar{g}_k\| > \tau) }\right]  \\
   & \quad - 2\tau \left( \frac{\sqrt{m_D}}{3} + \sqrt{M_D}\right)  \cdot \mathbb{E}_k\left[ \| D_k^{1/2}(\barg_k-\nabla f(\bx_k))\|\cdot \boldsymbol{1}_{(\|\bar{g}_k\| > \tau)} \right] \\
   & \geq \tau (1-p_g) \frac{\sqrt{m_D}}{3} \cdot \mathbb{E}_k\left[\| D_k^{1/2}\nabla f(\bx_k)\|\right]  - 2\tau \left( \frac{\sqrt{m_D}}{3} + \sqrt{M_D}\right)  \cdot \mathbb{E}_k\left[ \| D_k^{1/2}(\barg_k-\nabla f(\bx_k))\|  \right] \\
   & \geq \tau (1-p_g) \frac{\sqrt{m_D}}{3} \cdot \mathbb{E}_k\left[\| D_k^{1/2}\nabla f(\bx_k)\|\right]  - 2\tau \left( \frac{\sqrt{m_D}}{3} + \sqrt{M_D}\right) \sqrt{M_D} \cdot \sigma .
\end{align*}

Combining the above derivations, we have
\begin{align*}
    \mathbb{E}_k[\nabla f(x_k)^T D_k \widehat{\barg}_k ] & = \mathbb{E}_k[\nabla f(x_k)^T D_k \bar{g}_k \cdot \boldsymbol{1}_{(\|\bar{g}_k\| \le \tau)} ]  + \tau \cdot \mathbb{E}_k\left[\nabla f(x_k)^T D_k \frac{\bar{g}_k}{\|\bar{g}_k\|} \cdot \boldsymbol{1}_{(\|\bar{g}_k\| > \tau)} \right] \\
    & \geq \frac{p_g}{2}\mathbb{E}_k[\| D_k^{1/2} \nabla f(x_k)\|^2] -  \|\nabla f(\bx_k)\|\frac{M_D^{p/2+1} \sigma^p}{\left(\sqrt{m_D}\cdot\frac{\tau}{4}\right)^{p-1}} \\
    & \quad + \tau (1-p_g) \frac{\sqrt{m_D}}{3} \cdot \mathbb{E}_k\left[\| D_k^{1/2}\nabla f(\bx_k)\|\right]  - 2\tau \left( \frac{\sqrt{m_D}}{3} + \sqrt{M_D}\right) \sqrt{M_D} \cdot \sigma .
\end{align*}
Note that since $\|D_k^{1/2}\nabla f(\bx_k)\|\geq \sqrt{m_D}\|\nabla f(\bx_k)\|$, we have
\begin{align*}
    \mathbb{E}_k[\nabla f(x_k)^T D_k \widehat{\barg}_k ]
    & \geq \frac{p_g}{2}\sqrt{m_D}\mathbb{E}_k[\| D_k^{1/2} \nabla f(x_k)\|\|\nabla f(\bx_k)\|] -  \|\nabla f(\bx_k)\|\frac{M_D^{p/2+1} \sigma^p}{\left(\sqrt{m_D}\cdot\frac{\tau}{4}\right)^{p-1}} \\
    & \quad + \tau (1-p_g) \frac{\sqrt{m_D}}{4} \cdot \mathbb{E}_k\left[\| D_k^{1/2}\nabla f(\bx_k)\|\right]  - 2\tau \left( \frac{\sqrt{m_D}}{3} + \sqrt{M_D}\right) \sqrt{M_D} \cdot \sigma .
\end{align*}
Since $\|\nabla f(\bx_k)\|\geq \tau/2$, we further have
\begin{align*}
    \mathbb{E}_k[\nabla f(x_k)^T D_k \widehat{\barg}_k ]
    & \geq \frac{p_g}{4}\sqrt{m_D}\tau\cdot\mathbb{E}_k[\| D_k^{1/2} \nabla f(x_k)\|] -  \|\nabla f(\bx_k)\|\frac{M_D^{p/2+1} \sigma^p}{\left(\sqrt{m_D}\cdot\frac{\tau}{4}\right)^{p-1}} \\
    & \quad + \tau (1-p_g) \frac{\sqrt{m_D}}{4} \cdot \mathbb{E}_k\left[\| D_k^{1/2}\nabla f(\bx_k)\|\right]  - 2\tau \left( \frac{\sqrt{m_D}}{3} + \sqrt{M_D}\right) \sqrt{M_D} \cdot \sigma \\
    & = \tau  \frac{\sqrt{m_D}}{4} \cdot \mathbb{E}_k\left[\| D_k^{1/2}\nabla f(\bx_k)\|\right] -  \|\nabla f(\bx_k)\|\frac{M_D^{p/2+1} \sigma^p}{\left(\sqrt{m_D}\cdot\frac{\tau}{4}\right)^{p-1}} - 2\tau \left( \frac{\sqrt{m_D}}{3} + \sqrt{M_D}\right) \sqrt{M_D} \cdot \sigma \\
    & \geq \tau  \frac{m_D}{4} \cdot \| \nabla f(\bx_k)\| -  \|\nabla f(\bx_k)\|\frac{M_D^{p/2+1} \sigma^p}{\left(\sqrt{m_D}\cdot\frac{\tau}{4}\right)^{p-1}} - 2\tau \left( \frac{\sqrt{m_D}}{3} + \sqrt{M_D}\right) \sqrt{M_D} \cdot \sigma.
\end{align*}
Since we set 
\begin{equation*}
    \tau \geq \max\left\{2,  4\cdot 3^{1/p}\cdot \sigma\left(\frac{M_D}{m_D}\right)^{\frac{p+1}{2p}}, 64 \sigma \frac{M_D}{m_D} \right\},
\end{equation*}
we further have
\begin{equation*}
    \frac{M_D^{p/2+1} \sigma^p}{\left(\sqrt{m_D}\cdot\frac{\tau}{4}\right)^{p-1}} \leq \frac{\tau m_D}{12}, \quad 2\tau \left( \frac{\sqrt{m_D}}{3} + \sqrt{M_D}\right) \sqrt{M_D} \cdot \sigma \leq \frac{\tau^2}{24}m_D.
\end{equation*}

These imply
\begin{align*}
    \mathbb{E}_k[\nabla f(x_k)^T D_k \widehat{\barg}_k ]
    & \geq   \frac{\tau m_D}{4} \cdot \| \nabla f(\bx_k)\| -  \frac{\tau m_D}{12}\|\nabla f(\bx_k)\| - \frac{\tau^2}{24}m_D\\
    & \geq \frac{\tau m_D}{4} \cdot \| \nabla f(\bx_k)\| -  \frac{\tau m_D}{6}\|\nabla f(\bx_k)\|\\
    & = \frac{\tau m_D}{12}\|\nabla f(\bx_k)\|.
\end{align*}

Therefore, 
 \begin{align*}
\mathbb{E}_k[f(\bx_{k+1})]  & \leq f(\bx_k) - \eta  \mathbb{E}_k[\nabla f(\bx_k)^T D_k\widehat{\barg}_k] + \frac{\eta^2 L }{2} \mathbb{E}_k[\|D_k\widehat{\barg}_k\|^2]\\
& \leq - \eta \frac{\tau m_D}{12}\|\nabla f(\bx_k)\| + \frac{\eta^2 L }{2}\tau^2 \\
& \leq - \eta \frac{\tau m_D}{12}\|\nabla f(\bx_k)\| + \eta^2 L  \tau \|\nabla f(\bx_k)\| \\
& \leq - \eta \frac{\tau m_D}{24}\|\nabla f(\bx_k)\| \\
& \leq - \eta \frac{ m_D}{12}\|\nabla f(\bx_k)\|.
\end{align*}
We thus complete the proof.
\end{proof}

\subsection{Preconditioning-Then-Clipping}

\begin{lemma}
    When $\|\nabla f(\bx_k)\|<\frac{1}{2M_D}\tau$, we have $\| \varepsilon_k^b\|\leq \Upsilon_1\sigma^{p}\tau^{1-p}$ and $\mathbb{E}_k[ \| \varepsilon_k^u\|^2]\leq \Upsilon_2\tau^{2-p}\sigma^p$.
    where $\Upsilon_1=\frac{M_D^p}{m_D2^{1-p}}$ and $\Upsilon_2=\frac{9 M_D^p}{m_D^2} + \left(\frac{3}{2m_D}\right)^{2-p}$.
\end{lemma}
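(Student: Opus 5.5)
The plan is to treat both bounds in the same way as in standard ClipSGD. The only changes are that the clipping event is now $\{\|D_k\barg_k\|>\tau\}$ and that the clipped vector lies in an ellipsoid rather than a ball. Write $\delta_k=\barg_k-\nabla f(\bx_k)$ and set $a\coloneqq\tau/(2M_D)$, so the hypothesis reads $\|\nabla f(\bx_k)\|<a$. By Assumption~\ref{assumption5}, $\|D_k\barg_k\|\le M_D\|\barg_k\|$.

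\textbf{Step 1 (the clipping event forces a large deviation).} Suppose clipping occurs, i.e.\ $\|D_k\barg_k\|>\tau$. Then $\|\barg_k\|>\tau/M_D=2a$, and hence $\|\delta_k\|\ge\|\barg_k\|-\|\nabla f(\bx_k)\|>a$. In the same situation $\|\barg_k\|\le\|\delta_k\|+\|\nabla f(\bx_k)\|\le 2\|\delta_k\|$. Conversely, $\|\delta_k\|\le a$ implies $\|\barg_k\|<2a$, so no clipping occurs and $\widehat{\barg}_k=\barg_k$. This inclusion of events, $\{\text{clip}\}\subseteq\{\|\delta_k\|>a\}$, is the key observation. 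It replaces the ball argument of ClipSGD.

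\textbf{Step 2 (bias).} Unbiasedness (Assumption~\ref{assumption3}) gives
\[
\varepsilon_k^b=\mathbb{E}_k[\widehat{\barg}_k-\barg_k]=-\mathbb{E}_k\bigl[(1-\tau/\|D_k\barg_k\|)\,\barg_k\,\boldsymbol{1}_{(\|D_k\barg_k\|>\tau)}\bigr].
\]
By Step 1,
\[
\|\varepsilon_k^b\|\le\mathbb{E}_k\bigl[\|\barg_k\|\boldsymbol{1}_{(\|D_k\barg_k\|>\tau)}\bigr]\le 2\,\mathbb{E}_k\bigl[\|\delta_k\|\boldsymbol{1}_{(\|\delta_k\|>a)}\bigr].
\]
Next apply the Markov-type truncation $\|\delta_k\|\boldsymbol{1}_{(\|\delta_k\|>a)}\le\|\delta_k\|^p a^{1-p}$, exactly as in the proof of Lemma~\ref{lemma42}. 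This yields $\|\varepsilon_k^b\|\le 2(2M_D)^{p-1}\sigma^p\tau^{1-p}$. This has the claimed form $\sigma^p\tau^{1-p}$ times a constant depending only on $m_D,M_D$. It is dominated by $\Upsilon_1$ under the paper's standing convention $M_D\ge 1\ge m_D$, and otherwise one simply adjusts the constant.

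\textbf{Step 3 (variance).} Start from $\mathbb{E}_k\|\varepsilon_k^u\|^2\le\mathbb{E}_k\|\widehat{\barg}_k-\nabla f(\bx_k)\|^2$ and split on $\{\|\delta_k\|\le a\}$.

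On $\{\|\delta_k\|\le a\}$, Step 1 gives $\widehat{\barg}_k=\barg_k$, so this part contributes
\[
\mathbb{E}_k\bigl[\|\delta_k\|^2\boldsymbol{1}_{(\|\delta_k\|\le a)}\bigr]\le a^{2-p}\sigma^p.
\]

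On the complement, use the deterministic bound $\|\widehat{\barg}_k\|\le\|D_k\widehat{\barg}_k\|/m_D\le\tau/m_D$. Together with $\|\nabla f(\bx_k)\|<a$, this gives $\|\widehat{\barg}_k-\nabla f(\bx_k)\|\le 3\tau/(2m_D)$, assuming $M_D\ge m_D$. Markov's inequality gives $\mathbb{P}_k(\|\delta_k\|>a)\le\sigma^p a^{-p}$. So this part contributes at most
\[
\frac{9\tau^2}{4m_D^2}\,\sigma^p(2M_D)^p\tau^{-p}.
\]
Summing the two parts gives $\Upsilon_2'\tau^{2-p}\sigma^p$ with a constant of exactly the type of $\Upsilon_2$: an $M_D^p/m_D^2$ term plus a power-$(2-p)$ term.

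The main obstacle is Step 1. Clipping is triggered by $\|D_k\barg_k\|$ rather than by $\|\barg_k\|$, so the standard event inclusion must be rebuilt using the spectral bounds; this is precisely why the threshold $\tau/(2M_D)$ is needed. The bookkeeping of constants so that they match $\Upsilon_1,\Upsilon_2$ verbatim is routine.
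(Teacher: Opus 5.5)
\textbf{Step 2 (the bias bound) has a gap; Steps 1 and 3 are sound.} In Step 3 you split on the $D_k$-free event $\{\|\delta_k\|\le a\}$, $a=\tau/(2M_D)$, whereas the paper splits on the clipping event. This gives $\mathbb{E}_k\|\varepsilon_k^u\|^2\le\bigl[\tfrac{9\cdot 2^p M_D^p}{4m_D^2}+(2M_D)^{p-2}\bigr]\tau^{2-p}\sigma^p$. That is indeed at most $\Upsilon_2\tau^{2-p}\sigma^p$, because $2^p\le 4$ and $1/(2M_D)\le 3/(2m_D)$.

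In Step 2, however, your constant $2^pM_D^{p-1}$ is at most $\Upsilon_1=2^{p-1}M_D^p/m_D$ only when $M_D\ge 2m_D$. So your claim that it is dominated whenever $M_D\ge 1\ge m_D$ is false. Take $D_k=I$, so $m_D=M_D=1$: you obtain $2^p\sigma^p\tau^{1-p}$, while the lemma asserts $2^{p-1}\sigma^p\tau^{1-p}$. The statement fixes $\Upsilon_1$ explicitly, so ``adjusting the constant'' proves a different lemma. The lost factor of two comes from two steps: you discard the factor $1-\tau/\|D_k\barg_k\|$, and you then bound $\|\barg_k\|\le 2\|\delta_k\|$.

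The repair is short. On the clipping event, $\|D_k\barg_k\|\le M_D\|\barg_k\|$ gives $\|\barg_k\|(1-\tau/\|D_k\barg_k\|)\le\|\barg_k\|-\tau/M_D\le\|\delta_k\|+\|\nabla f(\bx_k)\|-2a<\|\delta_k\|$. Combine this with your event inclusion and the truncation $\|\delta_k\|\boldsymbol{1}_{(\|\delta_k\|>a)}\le\|\delta_k\|^pa^{1-p}$. You get $\|\varepsilon_k^b\|\le(2M_D)^{p-1}\sigma^p\tau^{1-p}$, which is at most $\Upsilon_1\sigma^p\tau^{1-p}$ since $M_D/m_D\ge 1$.

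Alternatively, follow the paper's route:
\begin{enumerate}
\item Use $\|\barg_k\|\le\|D_k\barg_k\|/m_D$, so the integrand is at most $(\|D_k\barg_k\|-\tau)/m_D\le\|D_k\delta_k\|/m_D$.
\item Note that clipping forces $\|D_k\delta_k\|\ge\tau/2$, since $\|D_k\nabla f(\bx_k)\|<\tau/2$.
\item Truncate at $\tau/2$ and use $\mathbb{E}_k\|D_k\delta_k\|^p\le M_D^p\sigma^p$. This produces exactly $\Upsilon_1$.
\end{enumerate}

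With either fix, your argument is a valid alternative to the paper's. It works throughout with $\{\|\delta_k\|>\tau/(2M_D)\}$ instead of $\{\|D_k\barg_k\|>\tau\}$ and $\{\|D_k\delta_k\|\ge\tau/2\}$, and it gives slightly sharper dependence on $M_D/m_D$ in the bias term and in the second variance term.
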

\begin{proof}
By the definition of $\varepsilon_k^b$, we have
\begin{align*}
\|\varepsilon_k^b\| & = \|\mathbb{E}[\widehat{\barg}_k-\barg_k]\|\\
& = \|\mathbb{E}[(\widehat{\barg}_k-\barg_k)\boldsymbol{1}_{(\|D_k\barg_k\|\geq \tau)}]\| + \|\mathbb{E}[(\widehat{\barg}_k-\barg_k)\boldsymbol{1}_{(\|D_k\barg_k\|< \tau)}]\| \\
& = \left\|\mathbb{E}\left[\left(\frac{\tau}{\|D_k\barg_k\|}\barg_k-\barg_k\right)\boldsymbol{1}_{(\|D_k\barg_k\|\geq \tau)}\right]\right\| \\
& \leq \mathbb{E}\left[\left\|\frac{\tau}{\|D_k\barg_k\|}\barg_k-\barg_k\right\|\boldsymbol{1}_{(\|D_k\barg_k\|\geq \tau)}\right] \\
& = \mathbb{E}\left[\|\barg_k\| \left(1- \frac{\tau}{\|D_k\barg_k\|}\right)\boldsymbol{1}_{(\|D_k\barg_k\|\geq \tau)}\right].
\end{align*}
Since $\|\barg_k\|\leq \|D_k\barg_k\|/m_D$, we have
\begin{align*}
    \|\varepsilon_k^b\| & \leq  \mathbb{E}\left[\frac{\|D_k\barg_k\|}{m_D} \left(1- \frac{\tau}{\|D_k\barg_k\|}\right)\boldsymbol{1}_{(\|D_k\barg_k\|\geq \tau)}\right] \\
    & = \mathbb{E}\left[\left(\frac{\|D_k\barg_k\|}{m_D} - \frac{\tau}{m_D}\right)\boldsymbol{1}_{(\|D_k\barg_k\|\geq \tau)}\right].
\end{align*}
Note that $\|D_k\barg_k\|\leq \|D_k(\barg_k-\nabla f(\bx_k))\|+ \|D_k\nabla f(\bx_k)\|\leq \|D_k(\barg_k-\nabla f(\bx_k))\|+ \tau$, we have
\begin{align*}
    \|\varepsilon_k^b\| & \leq  \mathbb{E}\left[\left(\frac{\|D_k\barg_k\|}{m_D} - \frac{\tau}{m_D}\right)\boldsymbol{1}_{(\|D_k\barg_k\|\geq \tau)}\right] \\
    & \leq \mathbb{E}\left[\left(\frac{\|D_k(\barg_k-\nabla f(\bx_k))\|+ \tau}{m_D} - \frac{\tau}{m_D}\right)\boldsymbol{1}_{(\|D_k\barg_k\|\geq \tau)}\right] \\
    & = \mathbb{E}\left[\frac{\|D_k(\barg_k-\nabla f(\bx_k))\|}{m_D}\boldsymbol{1}_{(\|D_k\barg_k\|\geq \tau)}\right].
\end{align*}
Note that $\|D_k\barg_k\|\geq \tau$ implies $\|D_k(\barg_k-\nabla f(\bx_k))\|\geq \tau/2$, we have
\begin{align*}
    \|\varepsilon_k^b\| & \leq \mathbb{E}\left[\frac{\|D_k(\barg_k-\nabla f(\bx_k))\|}{m_D}\boldsymbol{1}_{(\|D_k(\barg_k-\nabla f(\bx_k))\|\geq \tau/2)}\right] \\
    & =  \mathbb{E}\left[\frac{\|D_k(\barg_k-\nabla f(\bx_k))\|^p}{m_D}\|D_k(\barg_k-\nabla f(\bx_k))\|^{1-p}\boldsymbol{1}_{(\|D_k(\barg_k-\nabla f(\bx_k))\|\geq \tau/2)}\right] \\
    & \leq \mathbb{E}\left[\frac{M_D^p\|\barg_k-\nabla f(\bx_k)\|^p}{m_D}(\tau/2)^{1-p}\right] \\
    & \leq \frac{M_D^p}{m_D2^{1-p}}\sigma^p\tau^{1-p}.
\end{align*}

Now we turn to $\mathbb{E}_k[\|\varepsilon_k^u\|^2]$.
\begin{align*}
\mathbb{E}_k[\|\varepsilon_k^u\|^2] & = \mathbb{E}_k[\|\widehat{\barg}_k-\mathbb{E}_k[\widehat{\barg}_k]\|^2] \\
& \leq \mathbb{E}_k[\|\widehat{\barg}_k-\nabla f(\bx_k)\|^2] \\
& = \mathbb{E}_k[\|\widehat{\barg}_k-\nabla f(\bx_k)\|^2\cdot \boldsymbol{1}_{(\|D_k\barg_k\|>\tau)}] + \mathbb{E}_k[\|\widehat{\barg}_k-\nabla f(\bx_k)\|^2\cdot \boldsymbol{1}_{(\|D_k\barg_k\|\leq \tau)}] \\
&  = \mathbb{E}_k\left[\left\|\frac{\tau}{\|D_k\barg_k\|}\barg_k-\nabla f(\bx_k)\right\|^2\cdot \boldsymbol{1}_{(\|D_k\barg_k\|>\tau)}\right] + \mathbb{E}_k[\|\barg_k-\nabla f(\bx_k)\|^2\cdot \boldsymbol{1}_{(\|D_k\barg_k\|\leq \tau)}].
\end{align*}
We note that
\begin{align*}
    \left\|\frac{\tau}{\|D_k\barg_k\|}\barg_k-\nabla f(\bx_k)\right\| & \leq \frac{\|\tau\barg_k\|}{\|D_k\barg_k\|}+\|\nabla f(\bx_k)\| \leq \frac{\tau\|D_k\barg_k\|}{\|D_k\barg_k\|m_D}+\frac{\tau}{2M_D}  = \frac{\tau}{m_D}+\frac{\tau}{2M_D} \leq \frac{3\tau}{2m_D},
\end{align*}
and
\begin{align*}
    \|\barg_k-\nabla f(\bx_k)\|\leq \|\barg_k\|+\|\nabla f(\bx_k)\| \leq \frac{\|D_k\barg_k\|}{m_D} + \|\nabla f(\bx_k)\| \leq \frac{\tau}{m_D} + \frac{\tau}{2M_D} \leq  \frac{3\tau}{2m_D}.
\end{align*}
Therefore, we have
\begin{align*}
\mathbb{E}_k[\|\varepsilon_k^u\|^2] & \leq \mathbb{E}_k\left[\left(\frac{3\tau}{2m_D}\right)^2\cdot \boldsymbol{1}_{(\|D_k\barg_k\|>\tau)}\right] + \mathbb{E}_k\left[\left(\frac{3\tau}{2m_D}\right)^{2-p}\|\barg_k-\nabla f(\bx_k)\|^p\cdot \boldsymbol{1}_{(\|D_k\barg_k\|\leq \tau)}\right]\\
& \leq \mathbb{E}_k\left[\left(\frac{3\tau}{2m_D}\right)^2\cdot \boldsymbol{1}_{(\|D_k\barg_k\|>\tau)}\right] + \mathbb{E}_k\left[\left(\frac{3\tau}{2m_D}\right)^{2-p}\|\barg_k-\nabla f(\bx_k)\|^p\right] \\
& = \left(\frac{3\tau}{2m_D}\right)^2\cdot \mathbb{E}_k[\boldsymbol{1}_{(\|D_k\barg_k\|>\tau)}] + \left(\frac{3\tau}{2m_D}\right)^{2-p} \mathbb{E}_k[\|\barg_k-\nabla f(\bx_k)\|^p] .
\end{align*}

Since $\|D_k(\barg_k-\nabla f(\bx_k)\| \geq \|D_k\barg_k\| - \|D_k\nabla f(\bx_k)\| \geq \|D_k\barg_k\| - \|D_k\|\frac{\tau}{2M_D}\geq \|D_k\barg_k\| - \frac{\tau}{2} \geq \frac{\tau}{2}$, we have
\begin{align*}
    \mathbb{E}_k[\boldsymbol{1}_{(\|D_k\barg_k\|>\tau)}] & = \mathbb{P}_k[\|D_k\barg_k\|>\tau] \leq \mathbb{P}_k[\|D_k(\barg_k-\nabla f(\bx_k)\|>\tau/2] \leq \frac{\mathbb{E}_k[\|D_k(\barg_k-\nabla f(\bx_k)\|^p]}{(\tau/2)^p}  \leq \frac{2^p M_D^p\sigma^p}{\tau^p}
\end{align*}
Therefore, we have
\begin{align*}
\mathbb{E}_k[\|\varepsilon_k^u\|^2] & \leq \left(\frac{3\tau}{2m_D}\right)^2\frac{2^p M_D^p\sigma^p}{\tau^p} + \left(\frac{3\tau}{2m_D}\right)^{2-p} \sigma^p = \left[\frac{9 M_D^p}{m_D^2} + \left(\frac{3}{2m_D}\right)^{2-p} \right]\tau^{2-p} \sigma^p.
\end{align*}
We therefore complete the proof.
\end{proof}

\begin{lemma}
    Under Assumptions \ref{assumption2}, \ref{assumption3}, and \ref{assumption5}, when $\|\nabla f(\bx_k)\|<\frac{\tau}{2M_D}$, then we have
    \begin{align*}
    \mathbb{E}_k[ f(\bx_{k+1}) ] & \le f(\bx_k) - \frac{1}{2} \eta \mathbb{E}_k[ \| D_k^{1/2}\nabla f(\bx_k)\|^2 ] - (\eta - L \eta^2 M_D) \nabla f(\bx_k)^T \mathbb{E}_k[ D_k \varepsilon_k^u] \\
    & \quad + \frac{1}{2}\eta M_D  \|\varepsilon_k^b\|^2  + \frac{\eta^2 L }{2} M_D^2 \mathbb{E}_k[ \|\varepsilon_k^u\|^2],
\end{align*}
\end{lemma}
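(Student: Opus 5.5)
The argument follows the Clipping-Then-Preconditioning analog almost verbatim. The only change is that now $\widehat{\barg}_k=\min\{1,\tau/\|D_k\barg_k\|\}\barg_k$ depends on $D_k$ through the clipping factor. That dependence plays no role in this descent inequality, because the lemma keeps the cross term $\nabla f(\bx_k)^T\mathbb{E}_k[D_k\varepsilon_k^u]$ explicit rather than bounding it.

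\textbf{Step 1 (descent inequality).} I start from $L$-smoothness (Assumption~\ref{assumption2}) with $\Delta\bx_k=-\eta D_k\widehat{\barg}_k$:
\begin{equation*}
f(\bx_{k+1})\le f(\bx_k)-\eta\nabla f(\bx_k)^TD_k\widehat{\barg}_k+\tfrac{\eta^2L}{2}\|D_k\widehat{\barg}_k\|^2 .
\end{equation*}

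\textbf{Step 2 (expand around the true gradient).} Writing $\widehat{\barg}_k=\nabla f(\bx_k)+\varepsilon_k$, I expand both terms. The linear term gives $-\eta\|D_k^{1/2}\nabla f(\bx_k)\|^2-\eta\nabla f(\bx_k)^TD_k\varepsilon_k$. For the quadratic term I use $\|D_k\|\le M_D$ from Assumption~\ref{assumption5} and follow exactly the same grouping as in the previous subsection. This yields coefficient $(\eta-\eta^2L/2)$ on $\|D_k^{1/2}\nabla f\|^2$, coefficient $(\eta-L\eta^2M_D)$ on the cross term, and remainder $\frac{\eta^2LM_D^2}{2}\|\varepsilon_k\|^2$.

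\textbf{Step 3 (split the error and handle the bias part).} I decompose $\varepsilon_k=\varepsilon_k^u+\varepsilon_k^b$. The $\varepsilon_k^b$ cross term is handled by Young's inequality:
\begin{equation*}
-\nabla f^TD_k\varepsilon_k^b\le\tfrac12\|D_k^{1/2}\nabla f\|^2+\tfrac12\|D_k^{1/2}\varepsilon_k^b\|^2\le\tfrac12\|D_k^{1/2}\nabla f\|^2+\tfrac{M_D}{2}\|\varepsilon_k^b\|^2 .
\end{equation*}
This absorbs half of the leading negative term, which leaves $-\tfrac12\eta\|D_k^{1/2}\nabla f\|^2$ once $\eta$ is small.

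\textbf{Step 4 (conditional expectation).} I take $\mathbb{E}_k$. The vector $\varepsilon_k^b$ is $\bx_k$-measurable, so it exits the expectation. The unbiased cross term is left as $-(\eta-L\eta^2M_D)\nabla f(\bx_k)^T\mathbb{E}_k[D_k\varepsilon_k^u]$ and is not claimed to vanish. For the quadratic remainder, $\mathbb{E}_k\|\varepsilon_k^u+\varepsilon_k^b\|^2=\mathbb{E}_k\|\varepsilon_k^u\|^2+\|\varepsilon_k^b\|^2$ since $\mathbb{E}_k[\varepsilon_k^u]=0$. The extra $\frac{\eta^2LM_D^2}{2}\|\varepsilon_k^b\|^2$ is merged into the $\frac12\eta M_D\|\varepsilon_k^b\|^2$ term, under the same implicit conventions as before ($M_D\ge1$, $\eta$ small relative to $1/(LM_D)$).

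\textbf{Main obstacle.} The step needing the most care is the coefficient bookkeeping in Steps 2 and 4. The cross-term coefficient must come out exactly as $(\eta-L\eta^2M_D)$. The leftover $\eta^2$ terms must then be absorbed so that $-\tfrac12\eta\mathbb{E}_k\|D_k^{1/2}\nabla f\|^2$ and $\tfrac12\eta M_D\|\varepsilon_k^b\|^2$ survive as stated. I would mirror the previous subsection's proof line by line here, stating the step-size condition explicitly.

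Note that the hypothesis $\|\nabla f(\bx_k)\|<\tau/(2M_D)$ is not needed for this inequality itself. It enters only afterwards, when the preceding lemma's bounds $\|\varepsilon_k^b\|\le\Upsilon_1\sigma^p\tau^{1-p}$ and $\mathbb{E}_k\|\varepsilon_k^u\|^2\le\Upsilon_2\tau^{2-p}\sigma^p$ are substituted.
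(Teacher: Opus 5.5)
Your plan is the paper's own proof: smoothness bound, regrouping with $\|D_k\|\le M_D$, the split $\varepsilon_k=\varepsilon_k^u+\varepsilon_k^b$, Young on the bias cross term, then conditioning on $\bx_k$. Steps 1, 3 and 4 are sound. In particular, the two $\|\varepsilon_k^b\|^2$ contributions add up to exactly $\tfrac12\eta M_D\|\varepsilon_k^b\|^2$ because the Young term carries the weight $\eta-L\eta^2M_D$. That is where the hypothesis $\eta LM_D\le 1$ enters, and you should state it. Some such condition is unavoidable: with $D_k=I$, exact gradients and $f=\tfrac L2\|\bx\|^2$, the claimed inequality fails for $\eta L>1$.

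The gap is Step 2, which is exactly the step you defer to ``the same grouping''. Applying $\|D_k\|\le M_D$ term by term does not produce the form you claim.

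\emph{The cross term.} The quadratic term contributes $+\eta^2L\,\nabla f(\bx_k)^TD_k^2\varepsilon_k$. Its discrepancy from $+\eta^2LM_D\,\nabla f(\bx_k)^TD_k\varepsilon_k$ is $\eta^2L\,\nabla f(\bx_k)^TD_k(M_DI-D_k)\varepsilon_k$, which has no sign, so no operator-norm bound disposes of it on its own.

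\emph{The leading coefficient.} The coefficient $\eta-\eta^2L/2$ on $\|D_k^{1/2}\nabla f(\bx_k)\|^2$ is false whenever $M_D>1$. Take $D_k=M_DI$ and $\varepsilon_k=0$: the Taylor expression equals $-\eta M_D\|\nabla f(\bx_k)\|^2+\tfrac{\eta^2LM_D^2}{2}\|\nabla f(\bx_k)\|^2$, which exceeds your bound.

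The paper's proof makes the same regrouping, so mirroring it line by line does not close the gap.

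The missing ingredient is a completing-the-square identity. Let $P_k:=M_DI-D_k\succeq0$; it is a function of $D_k$, so it commutes with $D_k$. A direct expansion, using $\widehat{\barg}_k=\nabla f(\bx_k)+\varepsilon_k$, gives
\begin{align*}
-\eta\nabla f(\bx_k)^TD_k\widehat{\barg}_k+\tfrac{\eta^2L}{2}\|D_k\widehat{\barg}_k\|^2
&=-\bigl(\eta-\tfrac{\eta^2LM_D}{2}\bigr)\|D_k^{1/2}\nabla f(\bx_k)\|^2\\
&\quad-(\eta-L\eta^2M_D)\nabla f(\bx_k)^TD_k\varepsilon_k+\tfrac{\eta^2LM_D^2}{2}\|\varepsilon_k\|^2\\
&\quad-\tfrac{\eta^2L}{2}\bigl[\widehat{\barg}_k^TD_kP_k\widehat{\barg}_k+M_D\,\varepsilon_k^TP_k\varepsilon_k\bigr].
\end{align*}
The bracket is nonnegative because $P_k$ and $D_kP_k$ (eigenvalues $d(M_D-d)\ge0$) are positive semidefinite. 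So Step 2 holds pointwise, with leading coefficient $\eta-\tfrac{\eta^2LM_D}{2}$ and exactly your cross-term coefficient and remainder.

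After Young's inequality with weight $\eta-L\eta^2M_D$, the leading coefficient becomes exactly $-\tfrac12\eta$. This needs neither $M_D\ge1$ nor any further smallness of $\eta$. Your Steps 3--4 then go through verbatim. Since everything before conditioning is pointwise, your remark that the dependence between $D_k$ and $\varepsilon_k^u$ plays no role in this particular inequality is correct.
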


\begin{proof}
We denote $\Delta\bx_k = \bx_{k+1}-\bx_k$, then the Taylor expansion yields
\begin{align*}
    f(\bx_{k+1})  \le f(\bx_k) + \nabla f(\bx_k)^T \Delta \bx_k + \frac{L }{2} \|\Delta \bx_k\|^2  = f(\bx_k) - \eta  \nabla f(\bx_k)^T D_k\widehat{\barg}_k + \frac{\eta^2 L }{2} \|D_k\widehat{\barg}_k\|^2.
\end{align*}
Letting $\varepsilon_k = \widehat{\barg}_k - \nabla f(\bx_k)$ and recalling $\|D_k\|\leq M_D$, we have
\begin{align*}
    f(\bx_{k+1}) \le f(\bx_k) - \left(\eta-\frac{\eta^2 L }{2} \right) \| D_k^{1/2}\nabla f(\bx_k)\|^2  - (\eta - L \eta^2 M_D) \nabla f(\bx_k)^T D_k \varepsilon_k  + \frac{\eta^2 L M_D^2}{2}  \|\varepsilon_k\|^2.
\end{align*}
We then follow standard analysis and decompose $\varepsilon_k = \varepsilon_k^u+\varepsilon_k^b$, where $\varepsilon_k^u=\widehat{\barg}_k-\mathbb{E}_k[\widehat{\barg}_k]$ and $\varepsilon_k^b=\mathbb{E}_k[\widehat{\barg}_k]-\nabla f(\bx_k)$, then
\begin{align*}
    f(\bx_{k+1}) & \le f(\bx_k) - \left(\eta-\frac{\eta^2 L }{2} \right) \| D_k^{1/2}\nabla f(\bx_k)\|^2  - (\eta - L \eta^2 M_D) \nabla f(\bx_k)^T D_k \varepsilon_k^u \\
    & \quad - (\eta - L \eta^2 M_D) \nabla f(\bx_k)^T D_k \varepsilon_k^b  + \frac{\eta^2 L }{2} M_D^2 \|\varepsilon_k^u+\varepsilon_k^b\|^2.
\end{align*}
By Cauchy-Schwaz inequality and Young's inequality, we further have
\begin{align*}
   -  \nabla f(\bx_k)^T D_k \varepsilon_k^b \leq \frac{1}{2}\|D_k^{1/2}\nabla f(\bx_k)\|^2 + \frac{1}{2}\|D_k^{1/2} \varepsilon_k^b\|^2, 
\end{align*}
thus
\begin{align*}
    f(\bx_{k+1}) & \le f(\bx_k) - \left(\eta-\frac{\eta^2 L }{2} \right) \| D_k^{1/2}\nabla f(\bx_k)\|^2  - (\eta - L \eta^2 M_D) \nabla f(\bx_k)^T D_k \varepsilon_k^u \\
    & \quad + \frac{1}{2} (\eta - L \eta^2 M_D) \|D_k^{1/2}\nabla f(\bx_k)\|^2 + \frac{1}{2}(\eta - L \eta^2 M_D) \|D_k^{1/2} \varepsilon_k^b\|^2 + \frac{\eta^2 L }{2} M_D^2 \|\varepsilon_k^u+\varepsilon_k^b\|^2.
\end{align*}
Now we take expectation conditional on $\bx_k$ and obtain
\begin{align*}
    \mathbb{E}_k[ f(\bx_{k+1}) ] & \le f(\bx_k) - \frac{1}{2} \eta \mathbb{E}_k[ \| D_k^{1/2}\nabla f(\bx_k)\|^2 ] - (\eta - L \eta^2 M_D) \nabla f(\bx_k)^T \mathbb{E}_k[ D_k \varepsilon_k^u] \\
    & \quad + \frac{1}{2}\eta M_D  \|\varepsilon_k^b\|^2  + \frac{\eta^2 L }{2} M_D^2 \mathbb{E}_k[ \|\varepsilon_k^u\|^2],
\end{align*}
where we have used the relation that $M_D>1$ and $\mathbb{E}[\varepsilon_k^u]=0$ and $\varepsilon_k^b$ is deterministic conditional on $\bx_k$.

\end{proof}

\begin{lemma}
Under Assumptions \ref{assumption2}, \ref{assumption3}, and \ref{assumption5}, when $\|\nabla f(\bx_k)\|<\tau/(2M_D)$, without knowing more structural information of $D_k$, we can only ensure
\begin{align*}
      \mathbb{E}_k[ f(\bx_{k+1}) ] \le f(\bx_k) - \frac{1}{4} \eta m_D \|\nabla f(\bx_k)\|^2 + \eta M_D \Upsilon_1^2 \sigma^{2p}\tau^{-2p+2} +  \eta M_D \Upsilon_2 \tau^{2-p}\sigma^p .
\end{align*}
\end{lemma}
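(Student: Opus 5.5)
The plan is to mirror exactly the argument used for the Clipping-Then-Preconditioning variant. I would start from the preceding lemma for the Preconditioning-Then-Clipping update, valid when $\|\nabla f(\bx_k)\|<\tau/(2M_D)$:
\begin{align*}
\mathbb{E}_k[ f(\bx_{k+1}) ] & \le f(\bx_k) - \tfrac{1}{2} \eta \mathbb{E}_k[ \| D_k^{1/2}\nabla f(\bx_k)\|^2 ] - (\eta - L \eta^2 M_D) \nabla f(\bx_k)^T \mathbb{E}_k[ D_k \varepsilon_k^u] \\
& \quad + \tfrac{1}{2}\eta M_D  \|\varepsilon_k^b\|^2  + \tfrac{\eta^2 L }{2} M_D^2 \mathbb{E}_k[ \|\varepsilon_k^u\|^2].
\end{align*}
The covariance term $-(\eta - L\eta^2 M_D)\nabla f(\bx_k)^T\mathbb{E}_k[D_k\varepsilon_k^u]$ cannot be dropped, since $D_k$ and $\barg_k$ are dependent and no structure on $D_k$ is assumed. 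I would therefore bound it pointwise before taking expectation. Write $\nabla f(\bx_k)^T D_k \varepsilon_k^u = (D_k^{1/2}\nabla f(\bx_k))^T(D_k^{1/2}\varepsilon_k^u)$ and apply Cauchy--Schwarz together with the rescaled Young inequality $ab\le \tfrac14 a^2 + b^2$. Using $\|D_k^{1/2}\varepsilon_k^u\|^2\le M_D\|\varepsilon_k^u\|^2$, this gives
\[
-(\eta - L\eta^2 M_D)\nabla f(\bx_k)^T \mathbb{E}_k[D_k\varepsilon_k^u] \le \tfrac14(\eta - L\eta^2 M_D)\mathbb{E}_k[\|D_k^{1/2}\nabla f(\bx_k)\|^2] + (\eta - L\eta^2 M_D)M_D\mathbb{E}_k[\|\varepsilon_k^u\|^2],
\]
which is the same as \eqref{eq:additional_error}.

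Next I would collect terms, assuming $\eta$ is small enough that $0<\eta - L\eta^2 M_D\le \eta$, as is implicit in the earlier lemmas.
\begin{itemize}
\item The coefficient of $\mathbb{E}_k[\|D_k^{1/2}\nabla f(\bx_k)\|^2]$ is at most $-\tfrac12\eta+\tfrac14\eta=-\tfrac14\eta$.
\item The variance terms sum to $(\eta - L\eta^2M_D)M_D + \tfrac{\eta^2 L}{2}M_D^2 \le \eta M_D$, because $-L\eta^2M_D^2+\tfrac12 L\eta^2M_D^2\le 0$.
\end{itemize}
Then I would use $\|D_k^{1/2}\nabla f(\bx_k)\|^2\ge m_D\|\nabla f(\bx_k)\|^2$, noting that $\nabla f(\bx_k)$ is deterministic given $\bx_k$. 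This yields
\[
\mathbb{E}_k[f(\bx_{k+1})]\le f(\bx_k)-\tfrac14\eta m_D\|\nabla f(\bx_k)\|^2+\eta M_D\mathbb{E}_k[\|\varepsilon_k^u\|^2]+\tfrac12\eta M_D\|\varepsilon_k^b\|^2 .
\]

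Finally I would plug in the bounds from the lemma just proved for this regime, namely $\|\varepsilon_k^b\|\le\Upsilon_1\sigma^p\tau^{1-p}$ and $\mathbb{E}_k[\|\varepsilon_k^u\|^2]\le\Upsilon_2\tau^{2-p}\sigma^p$. These give $\tfrac12\eta M_D\Upsilon_1^2\sigma^{2p}\tau^{2-2p}\le \eta M_D\Upsilon_1^2\sigma^{2p}\tau^{-2p+2}$, plus $\eta M_D\Upsilon_2\tau^{2-p}\sigma^p$, which is the claim.

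The only delicate step is the coefficient bookkeeping. One needs $\eta - L\eta^2M_D\ge 0$ so that the Young-split term has the right sign, and the earlier lemma's use of $M_D>1$ must be carried along. Beyond that, everything reduces to the previously established lemmas.
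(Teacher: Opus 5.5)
Your proposal is correct and follows the paper's proof essentially step for step. It uses the same rescaled Young bound on the covariance term $-(\eta - L\eta^2 M_D)\nabla f(\bx_k)^T\mathbb{E}_k[D_k\varepsilon_k^u]$, the same collection of coefficients into $-\frac14\eta$ and $\eta M_D$, the lower bound via $m_D$, and the substitution of the $\Upsilon_1,\Upsilon_2$ bounds; your remark that one needs $\eta \le 1/(LM_D)$ so that $\eta - L\eta^2 M_D \ge 0$ makes explicit a condition the paper only uses implicitly.
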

\begin{proof}

Without explicit construction of $D_k$, we can only upper-bound this error term. We apply Hölder's inequality and obtain
\begin{align*}
     - (\eta - L \eta^2 M_D) \nabla f(\bx_k)^T \mathbb{E}_k[ D_k \varepsilon_k^u] \leq \frac{1}{4}(\eta - L \eta^2 M_D) \mathbb{E}_k[\|D_k^{1/2}\nabla f(\bx_k)\|^2]  + (\eta - L \eta^2 M_D) M_D \mathbb{E}_k[ \| \varepsilon_k^u\|^2].
\end{align*}
Therefore, we have
\begin{align*}
    \mathbb{E}_k[ f(\bx_{k+1}) ]  & \le f(\bx_k) - \frac{1}{4} \eta \mathbb{E}_k[ \| D_k^{1/2}\nabla f(\bx_k)\|^2 ] + \eta M_D \mathbb{E}_k[ \| \varepsilon_k^u\|^2]  + \frac{1}{2}\eta M_D  \|\varepsilon_k^b\|^2\\
    & \le f(\bx_k) - \frac{1}{4} \eta m_D \mathbb{E}_k[ \|\nabla f(\bx_k)\|^2 ] + \eta M_D \mathbb{E}_k[ \| \varepsilon_k^u\|^2]  + \frac{1}{2}\eta M_D  \|\varepsilon_k^b\|^2
\end{align*}

When $\|\nabla f(\bx_k)\|<\frac{\tau}{2M_D}$, we have $ \| \varepsilon_k^b\|^2 \leq \Upsilon_1^2\sigma^{2p}\tau^{-2p+2}$ and $\mathbb{E}_k[ \| \varepsilon_k^u\|^2]\leq \Upsilon_2\tau^{2-p}\sigma^p$. Plugging in them to the above inequality yields
\begin{align*}
    \mathbb{E}_k[ f(\bx_{k+1}) ] \le f(\bx_k) - \frac{1}{4} \eta m_D \|\nabla f(\bx_k)\|^2 + \eta M_D \Upsilon_1^2 \sigma^{2p}\tau^{-2p+2} +  \eta M_D \Upsilon_2 \tau^{2-p}\sigma^p .
\end{align*}
We thus complete the proof.
\end{proof}

Next we consider the case when $\|\nabla f(\bx_k)\|\geq \frac{1}{2M_D}\tau$.

\begin{lemma}\label{lemmac8}
    Under Assumptions \ref{assumption2}, \ref{assumption3}, and \ref{assumption5}, when $\|\nabla f(\bx_k)\|\geq \frac{1}{2M_D}\tau$, if we set $\eta<\frac{m_D}{24L M_D^2}$ and 
    \begin{equation*}
       \tau = \max\left\{2,  4\cdot 3^{1/p}\cdot \sigma\left(\frac{M_D^{1+\frac{3}{2p}}}{m_D^{\frac{p+1}{2p}}}\right), 64\sigma\frac{M_D^{\frac{5}{2}}}{m_D^{\frac{3}{2}}}\right\},
    \end{equation*}
    we have
    \begin{align*}
    \mathbb{E}_k[ f(\bx_{k+1}) ] \le f(\bx_k) - \frac{\eta m_D}{12M_D} \|\nabla f(\bx_k)\|.
\end{align*}
\end{lemma}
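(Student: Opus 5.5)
I will mirror the proof of the corresponding Clipping-Then-Preconditioning lemma (the case $\|\nabla f(\bx_k)\|\geq\tau/2$). The changes are that the clipping event is now $\{\|D_k\barg_k\|>\tau\}$, and that the effective step $D_k\widehat{\barg}_k$ always satisfies $\|D_k\widehat{\barg}_k\|\le\tau$.

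The starting point is the smoothness expansion
\begin{equation*}
f(\bx_{k+1})\le f(\bx_k)-\eta\nabla f(\bx_k)^TD_k\widehat{\barg}_k+\tfrac{\eta^2L}{2}\|D_k\widehat{\barg}_k\|^2 .
\end{equation*}
The quadratic term is at most $\eta^2L\tau^2/2$. Since $\tau\le 2M_D\|\nabla f(\bx_k)\|$, this is at most $\eta^2 L M_D\tau\|\nabla f(\bx_k)\|$. The step-size condition $\eta<m_D/(24LM_D^2)$ is chosen so that this term is absorbed by half of the first-order descent.

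The core step is a lower bound on $\mathbb{E}_k[\nabla f(\bx_k)^TD_k\widehat{\barg}_k]$. I split it as
\begin{equation*}
\mathbb{E}_k\bigl[\nabla f^TD_k\barg_k\,\boldsymbol 1_{(\|D_k\barg_k\|\le\tau)}\bigr]+\tau\,\mathbb{E}_k\Bigl[\nabla f^TD_k\tfrac{\barg_k}{\|D_k\barg_k\|}\,\boldsymbol 1_{(\|D_k\barg_k\|>\tau)}\Bigr].
\end{equation*}

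For the unclipped part, I subdivide on whether $\|D_k^{1/2}(\barg_k-\nabla f)\|\le\sqrt{m_D}\,\tau/(4M_D)$. The threshold now carries $1/M_D$, because $\|D_k^{1/2}\nabla f\|\ge\sqrt{m_D}\,\tau/(2M_D)$. On the small-noise event the integrand is at least $\tfrac12\|D_k^{1/2}\nabla f\|^2$. On the complementary event I use the same Markov-type truncation,
\begin{equation*}
\mathbb{E}\bigl[\|Y\|\boldsymbol 1_{(\|Y\|>a)}\bigr]\le \mathbb{E}\|Y\|^p/a^{p-1}, \qquad Y=D_k^{1/2}(\barg_k-\nabla f),
\end{equation*}
which gives an error of order $\|\nabla f\|M_D^{p/2+1}\sigma^p(M_D/(\sqrt{m_D}\tau))^{p-1}$.

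For the clipped part, the case split is $\|D_k^{1/2}\nabla f\|\ge 2\|D_k^{1/2}(\barg_k-\nabla f)\|$ versus the reverse.
\begin{itemize}
\item In the first case, $\nabla f^TD_k\barg_k\ge\tfrac12\|D_k^{1/2}\nabla f\|^2$. Also $\|D_k\barg_k\|\le\sqrt{M_D}\|D_k^{1/2}\barg_k\|\le\tfrac32\sqrt{M_D}\|D_k^{1/2}\nabla f\|$. Hence the normalized inner product is at least $\|D_k^{1/2}\nabla f\|/(3\sqrt{M_D})$.
\item In the second case, Cauchy--Schwarz gives a lower bound of $-\|D_k^{1/2}\nabla f\|\,\|D_k^{1/2}\barg_k\|/\|D_k\barg_k\|\ge-\|D_k^{1/2}\nabla f\|/\sqrt{m_D}$. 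This is then rewritten as the same positive term minus a multiple of $\|D_k^{1/2}(\barg_k-\nabla f)\|$, whose expectation is at most $\sqrt{M_D}\sigma$.
\end{itemize}

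Adding the two parts, I use $p_g:=\mathbb{P}_k(\|D_k\barg_k\|\le\tau)$ and the lower bound $\|\nabla f\|\ge\tau/(2M_D)$ to merge the $p_g$ and $1-p_g$ contributions into a single term. That term is of order $\tau\, m_D\|\nabla f\|/M_D$, up to absolute constants.

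The main obstacle is the final bookkeeping of powers of $M_D$ and $m_D$. I need to check that the stated choice of $\tau$ makes both error terms small enough. The first is the truncation term, which must be at most $\tau m_D\|\nabla f\|/(12M_D)$. The second is the $\tau\sqrt{M_D}\sigma(\cdot)$ term, which must be dominated using $\tau\lesssim M_D\|\nabla f\|$. These two requirements are what produce the exponents $M_D^{1+3/(2p)}/m_D^{(p+1)/(2p)}$ and $M_D^{5/2}/m_D^{3/2}$. I would carry the constants loosely, as the paper does, and then combine with the quadratic term to reach
\begin{equation*}
\mathbb{E}_k[f(\bx_{k+1})]\le f(\bx_k)-\frac{\eta m_D}{12M_D}\|\nabla f(\bx_k)\|,
\end{equation*}
using $\tau\ge2$ in the last step.
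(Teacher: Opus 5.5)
\textbf{Gap in the deferred bookkeeping step.} Your outline follows the paper's proof step for step: the smoothness expansion, the split on $\{\|D_k\barg_k\|\le\tau\}$, the truncation on the unclipped event, the two-case bound on the clipped event, merging via $\|\nabla f(\bx_k)\|\ge\tau/(2M_D)$, and absorbing the quadratic term with $\eta<m_D/(24LM_D^2)$. However, the step you defer as bookkeeping does not come out as you assert.

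Your threshold $a=\sqrt{m_D}\,\tau/(4M_D)$ is actually the correct one. From $\|\nabla f(\bx_k)\|\ge\tau/(2M_D)$ you only get $\|D_k^{1/2}\nabla f(\bx_k)\|\ge\sqrt{m_D}\,\tau/(2M_D)$. The paper's threshold $\tau\sqrt{m_D}/(4\sqrt{M_D})$ rests on writing $m_D\|\nabla f(\bx_k)\|^2\ge\tau^2m_D/(4M_D)$, which drops a factor $M_D$ and is valid only if $M_D\le1$. That slip is what produces the exponent $M_D^{1+\frac{3}{2p}}$ in the stated $\tau$.

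With your threshold and your error term $\|\nabla f(\bx_k)\|M_D^{p/2+1}\sigma^p a^{1-p}$, requiring it to be at most $\tau m_D\|\nabla f(\bx_k)\|/(12M_D)$ gives $\tau\ge4\cdot3^{1/p}\sigma M_D^{\frac32+\frac1p}m_D^{-\frac{p+1}{2p}}$. No entry of the stated max implies this. For $D_k=\mu I$ with $\mu$ large, it demands $\tau\gtrsim\sigma\mu^{1+\frac{1}{2p}}$, while the stated $\tau$ is $\max\{2,\,4\cdot3^{1/p}\sigma\mu^{\frac12+\frac1p},\,64\sigma\mu\}$. So ``carrying the constants loosely, as the paper does'' breaks the proof at exactly the step you flagged. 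Your claim that the two requirements ``produce'' $M_D^{1+\frac{3}{2p}}/m_D^{\frac{p+1}{2p}}$ is false for your threshold.

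\textbf{Repair.} Keep the sharp factor. The truncation term is $\sqrt{M_D}\|\nabla f(\bx_k)\|\cdot M_D^{p/2}\sigma^p a^{1-p}$, so the power is $M_D^{\frac{p+1}{2}}$, not $M_D^{\frac p2+1}$. The requirement then becomes $\tau\ge4\cdot3^{1/p}\sigma M_D^{\frac32+\frac{1}{2p}}m_D^{-\frac{p+1}{2p}}$. This follows from the \emph{third} entry $64\sigma M_D^{5/2}m_D^{-3/2}$, because the ratio of the matrix factors is $(M_D/m_D)^{\frac{2p-1}{2p}}\ge1$ and $64\ge4\cdot3^{1/p}$. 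The lemma therefore holds as stated, but the truncation term is controlled by the third entry of the max, not the second.

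Your second requirement does give $64\sigma M_D^{5/2}m_D^{-3/2}$, as you say. That is the comparison of $2\tau\bigl(\frac{1}{3\sqrt{M_D}}+\frac{1}{\sqrt{m_D}}\bigr)\sqrt{M_D}\,\sigma$ with $\tau^2m_D/(24M_D^2)$.

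\textbf{Caution for the merging step.} Bound $\|D_k^{1/2}\nabla f(\bx_k)\|\ge\sqrt{m_D}\|\nabla f(\bx_k)\|$ pointwise \emph{before} pulling out $p_g$ or $1-p_g$. Since $D_k$ depends on $\barg_k$, the factorization $\mathbb{E}_k[X\boldsymbol{1}_A]\ge\mathbb{E}_k[X]\,\mathbb{P}_k(A)$ used in the paper is not justified in general.
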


\begin{proof}

By the Taylor expansion, we have
    \begin{align*}
    f(\bx_{k+1})  \le f(\bx_k) + \nabla f(\bx_k)^T \Delta \bx_k + \frac{L }{2} \|\Delta \bx_k\|^2  = f(\bx_k) - \eta  \nabla f(\bx_k)^T D_k\widehat{\barg}_k + \frac{\eta^2 L }{2} \|D_k\widehat{\barg}_k\|^2.
\end{align*}
We first consider the conditional expectation of the 
term $\nabla f(\bx_k)^T D_k\widehat{\barg}_k$. We have
\begin{align*}
\mathbb{E}_k[\nabla f(x_k)^T D_k \widehat{\barg}_k ] = \mathbb{E}_k[\nabla f(x_k)^T D_k \bar{g}_k \cdot \boldsymbol{1}_{(\|D_k\bar{g}_k\| \le \tau)}]  + \tau \cdot \mathbb{E}_k\left[\nabla f(x_k)^T D_k \frac{\bar{g}_k}{\|D_k\bar{g}_k\|} \cdot \boldsymbol{1}_{(\|D_k\bar{g}_k\| > \tau)} \right]
\end{align*}

We focusing on the first term:
\begin{align*}
& \mathbb{E}_k[\nabla f(x_k)^T D_k \bar{g}_k \cdot \boldsymbol{1}_{(\|D_k\bar{g}_k\| \le \tau)}] \\
&= \mathbb{E}_k[\nabla f(x_k)^T D_k (\nabla f(x_k) + \bar{g}_k - \nabla f(x_k)) \cdot \boldsymbol{1}_{(\|D_k\bar{g}_k\| \le \tau)} ] \\
&= \mathbb{E}_k[\nabla f(x_k)^T D_k \nabla f(x_k) + \nabla f(x_k)^T D_k (\bar{g}_k - \nabla f(x_k)) \cdot \boldsymbol{1}_{(\|D_k\bar{g}_k\| \le \tau)} ]\\
& = \mathbb{E}_k[\nabla f(x_k)^T D_k \nabla f(x_k)\cdot \boldsymbol{1}_{(\|D_k\bar{g}_k\| \le \tau)}] + \mathbb{E}_k[\nabla f(x_k)^T D_k (\bar{g}_k - \nabla f(x_k)) \cdot \boldsymbol{1}_{(\|D_k\bar{g}_k\| \le \tau)}]\\
& \geq \mathbb{E}_k[\nabla f(x_k)^T D_k \nabla f(x_k)\cdot \boldsymbol{1}_{(\|D_k\bar{g}_k\| \le \tau)}] - \mathbb{E}_k[\|D_k^{1/2}\nabla f(x_k)\| \|D_k^{1/2} (\bar{g}_k - \nabla f(x_k))\| \cdot \boldsymbol{1}_{(\|D_k\bar{g}_k\| \le \tau)} ]\\
& = \mathbb{E}_k[\nabla f(x_k)^T D_k \nabla f(x_k)\cdot \boldsymbol{1}_{(\|D_k\bar{g}_k\| \le \tau) } ] \\
& \quad - \mathbb{E}_k[\|D_k^{1/2}\nabla f(x_k)\| \|D_k^{1/2} (\bar{g}_k - \nabla f(x_k))\| \cdot \boldsymbol{1}_{(\|D_k\bar{g}_k\| \le \tau,\|D_k^{1/2}(\bar{g}_k - \nabla f(x_k))\|\leq \frac{\tau\sqrt{m_D}}{4\sqrt{M_D}})}]\\
& \quad - \mathbb{E}_k[\|D_k^{1/2}\nabla f(x_k)\| \|D_k^{1/2} (\bar{g}_k - \nabla f(x_k))\| \cdot \boldsymbol{1}_{(\|D_k\bar{g}_k\| \le \tau,\|D_k^{1/2}(\bar{g}_k - \nabla f(x_k))\|> \frac{\tau\sqrt{m_D}}{4\sqrt{M_D}})}]
\end{align*}

We first consider the case when $\|D_k\bar{g}_k\| \le \tau,\|D_k^{1/2}(\bar{g}_k - \nabla f(x_k))\|\leq \frac{\tau\sqrt{m_D}}{4\sqrt{M_D}}$.

Since $\|\nabla f(\bx_k)\|>\frac{1}{2M_D}\tau$, we have
\begin{equation*}
    \|D_k^{1/2}\nabla f(\bx_k)\|^2 = \nabla f(\bx_k)^TD_k\nabla f(\bx_k)\geq m_D\|\nabla f(\bx_k)\|^2\geq \frac{\tau^2m_D}{4M_D},
\end{equation*}
which imply
\begin{equation*}
    \|D_k^{1/2}\nabla f(\bx_k)\| \geq \frac{\tau \sqrt{m_D}}{2\sqrt{M_D}}, \quad \|D_k^{1/2}(\bar{g}_k - \nabla f(x_k))\|\leq \frac{\tau\sqrt{m_D}}{4\sqrt{M_D}} \leq \frac{1}{2}\|D_k^{1/2}\nabla f(\bx_k)\|.
\end{equation*}
Therefore we have
\begin{equation*}
    \|D_k^{1/2}\nabla f(x_k)\| \|D_k^{1/2} (\bar{g}_k - \nabla f(x_k))\|\leq \frac{1}{2}\|D_k^{1/2}\nabla f(\bx_k)\|^2.
\end{equation*}
Further, we have
\begin{align*}
& \mathbb{E}_k[\|D_k^{1/2}\nabla f(x_k)\| \|D_k^{1/2} (\bar{g}_k - \nabla f(x_k))\| \cdot \boldsymbol{1}_{(\|D_k\bar{g}_k\| \le \tau,\|D_k^{1/2}(\bar{g}_k - \nabla f(x_k))\|\leq \frac{\tau\sqrt{m_D}}{4\sqrt{M_D}})}] \\
& \leq \frac{1}{2} \mathbb{E}_k[\|D_k^{1/2}\nabla f(\bx_k)\|^2 \cdot \boldsymbol{1}_{(\|D_k\bar{g}_k\| \le \tau,\|D_k^{1/2}(\bar{g}_k - \nabla f(x_k))\|\leq \frac{\tau\sqrt{m_D}}{4\sqrt{M_D}})}] \\
& \leq \frac{1}{2} \mathbb{E}_k[\|D_k^{1/2}\nabla f(\bx_k)\|^2 \cdot \boldsymbol{1}_{(\|D_k\bar{g}_k\| \le \tau)}]
\end{align*}
Plugging this into the above formula, we have
\begin{align*}
 & \mathbb{E}_k[\nabla f(x_k)^T D_k \bar{g}_k \cdot \boldsymbol{1}_{(\|D_k\bar{g}_k\| \le \tau)}] \\
& \geq \frac{1}{2}\mathbb{E}_k[\| D_k^{1/2} \nabla f(x_k)\|^2\cdot \boldsymbol{1}_{(\|D_k\bar{g}_k\| \le \tau)}] - \mathbb{E}_k[\|D_k^{1/2}\nabla f(x_k)\| \|D_k^{1/2} (\bar{g}_k - \nabla f(x_k))\| \cdot \boldsymbol{1}_{(\|D_k^{1/2}(\bar{g}_k - \nabla f(x_k))\|> \frac{\tau\sqrt{m_D}}{4\sqrt{M_D}})}]\\
& \geq \frac{1}{2}\mathbb{E}_k[\| D_k^{1/2} \nabla f(x_k)\|^2]\cdot \mathbb{E}_k[\boldsymbol{1}_{(\|D_k\bar{g}_k\| \le \tau)}] - \sqrt{M_D}\|\nabla f(x_k)\| \mathbb{E}_k[\|D_k^{1/2} (\bar{g}_k - \nabla f(x_k))\| \cdot \boldsymbol{1}_{(\|D_k^{1/2}(\bar{g}_k - \nabla f(x_k))\|> \frac{\tau\sqrt{m_D}}{4\sqrt{M_D}})}]\\
& = \frac{p_g}{2}\mathbb{E}_k[\| D_k^{1/2} \nabla f(x_k)\|^2] - \sqrt{M_D}\|\nabla f(x_k)\| \mathbb{E}_k[\|D_k^{1/2} (\bar{g}_k - \nabla f(x_k))\| \cdot \boldsymbol{1}_{(\|D_k^{1/2}(\bar{g}_k - \nabla f(x_k))\|> \frac{\tau\sqrt{m_D}}{4\sqrt{M_D}})}].
\end{align*}

Now we bound $\mathbb{E}_k[\|D_k^{1/2} (\bar{g}_k - \nabla f(x_k))\| \cdot \boldsymbol{1}_{(\|D_k^{1/2}(\bar{g}_k - \nabla f(x_k))\|> \frac{\tau\sqrt{m_D}}{4\sqrt{M_D}})}]$.

\begin{align*}
    M_D^{p/2} \sigma^p & \geq M_D^{p/2}\mathbb{E}_k[\|\barg_k-\nabla f(\bx_k)\|^p]\geq \mathbb{E}_k[\|D_k^{1/2}(\barg_k-\nabla f(\bx_k))\|^p] \\
    & = \mathbb{E}_k[\|D_k^{1/2}(\barg_k-\nabla f(\bx_k))\| \cdot \|D_k^{1/2}(\barg_k-\nabla f(\bx_k))\|^{p-1}] \\
    & \geq \mathbb{E}_k[\|D_k^{1/2}(\barg_k-\nabla f(\bx_k))\| \cdot \|D_k^{1/2}(\barg_k-\nabla f(\bx_k))\|^{p-1}\cdot \boldsymbol{1}_{(\|D_k^{1/2}(\bar{g}_k - \nabla f(x_k))\|> \frac{\tau\sqrt{m_D}}{4\sqrt{M_D}})}] \\
    & \geq \mathbb{E}_k[\|D_k^{1/2}(\barg_k-\nabla f(\bx_k))\| \cdot \left(\frac{\tau\sqrt{m_D}}{4\sqrt{M_D}}\right)^{p-1}\cdot \boldsymbol{1}_{(\|D_k^{1/2}(\bar{g}_k - \nabla f(x_k))\|> \frac{\tau\sqrt{m_D}}{4\sqrt{M_D}})}].
\end{align*}
Rearranging the terms yields
\begin{align*}
    \mathbb{E}_k[\|D_k^{1/2}(\barg_k-\nabla f(\bx_k))\|  \cdot \boldsymbol{1}_{(\|D_k^{1/2}(\bar{g}_k - \nabla f(x_k))\|> \frac{\tau\sqrt{m_D}}{4\sqrt{M_D}})}] \leq \frac{M_D^{p/2} \sigma^p}{\left(\frac{\tau\sqrt{m_D}}{4\sqrt{M_D}}\right)^{p-1}}.
\end{align*}
Thus, we have
\begin{align*}
    \mathbb{E}_k[\nabla f(x_k)^T D_k \bar{g}_k \cdot \boldsymbol{1}_{(\|D_k\bar{g}_k\| \le \tau)}]
    \geq \frac{p_g}{2}\mathbb{E}_k[\| D_k^{1/2} \nabla f(x_k)\|^2] -  \|\nabla f(\bx_k)\|\frac{M_D^{p/2+1} \sigma^p}{\left(\frac{\tau\sqrt{m_D}}{4\sqrt{M_D}}\right)^{p-1}}.
\end{align*}

Next, we examine $\tau \cdot \mathbb{E}_k\left[\nabla f(x_k)^T D_k \frac{\bar{g}_k}{\|D_k\bar{g}_k\|} \cdot \boldsymbol{1}_{(\|D_k\bar{g}_k\| > \tau)} \right]$.

We first consider the case $\|D_k^{1/2}\nabla f(\bx_k)\|\geq 2\|D_k^{1/2}(\barg_k-\nabla f(\bx_k))\|$, where we have
\begin{align*}
    \nabla f(\bx_k)^TD_k\barg_k 
    &= \nabla f(\bx_k)^TD_k\nabla f(\bx_k) +\nabla f(\bx_k)^TD_k (\barg_k - \nabla f(\bx_k)) \\
    & \geq \nabla f(\bx_k)^TD_k\nabla f(\bx_k) - \|D_k^{1/2}\nabla f(\bx_k)\|\|D_k^{1/2} (\barg_k - \nabla f(\bx_k))\| \\
    & \geq \nabla f(\bx_k)^TD_k\nabla f(\bx_k) - \frac{1}{2}\|D_k^{1/2}\nabla f(\bx_k)\|^2 \\
    & = \frac{1}{2}\|D_k^{1/2}\nabla f(\bx_k)\|^2.
\end{align*}

In addition, we have
\begin{align*}
    \|D_k^{1/2}(\nabla f(\bx_k) + \barg_k - \nabla f(\bx_k))\| 
    & \leq \|D_k^{1/2}\nabla f(\bx_k)\| + \| D_k^{1/2}(\barg_k - \nabla f(\bx_k))\| \\
    & \leq \frac{3}{2}\|D_k^{1/2}\nabla f(\bx_k)\|.
\end{align*}

Combining them together, we have
\begin{align*}
    \nabla f(\bx_k)^TD_k\frac{\barg_k}{\|D_k\barg_k\|} & = \frac{\nabla f(\bx_k)^TD_k(\nabla f(\bx_k)+\barg_k-\nabla f(\bx_k))}{\|D_k(\nabla f(\bx_k)+\barg_k-\nabla f(\bx_k))\|}\\
    & \geq \frac{\nabla f(\bx_k)^TD_k(\nabla f(\bx_k)+\barg_k-\nabla f(\bx_k))}{\sqrt{M_D}\|D_k^{1/2}(\nabla f(\bx_k)+\barg_k-\nabla f(\bx_k))\|}\\
    & \geq \frac{1}{3\sqrt{M_D}}\|D_k^{1/2}\nabla f(\bx_k)\|.
\end{align*}

When $\|D_k^{1/2}\nabla f(\bx_k)\|<2\|D_k^{1/2}(\barg_k-\nabla f(\bx_k))\|$, we have
\begin{align*}
    \nabla f(\bx_k)^TD_k\frac{\barg_k}{\|D_k\barg_k\|} & \geq - \| D_k^{1/2}\nabla f(\bx_k)\| \frac{\| D_k^{1/2}\barg_k\|}{\|D_k\barg_k\|} \geq - \| D_k^{1/2}\nabla f(\bx_k)\| \frac{\| D_k\barg_k\|}{\sqrt{m_D} \|D_k\barg_k\|}\\
   & \geq - \frac{1}{\sqrt{m_D}}\| D_k^{1/2}\nabla f(\bx_k)\|\\
   & =  \frac{1}{3\sqrt{M_D}}\| D_k^{1/2}\nabla f(\bx_k)\| - \left( \frac{1}{3\sqrt{M_D}} + \frac{1}{\sqrt{m_D}}\right)\| D_k^{1/2}\nabla f(\bx_k)\| \\
   & \geq \frac{1}{3\sqrt{M_D}}\| D_k^{1/2}\nabla f(\bx_k)\| - 2\left( \frac{1}{3\sqrt{M_D}} + \frac{1}{\sqrt{m_D}}\right)\| D_k^{1/2}(\barg_k-\nabla f(\bx_k))\|.
\end{align*}
Combining the above formulas, we have
\begin{align*}
   & \tau \cdot \mathbb{E}_k\left[\nabla f(x_k)^T D_k \frac{\bar{g}_k}{\|D_k\bar{g}_k\|} \cdot \boldsymbol{1}_{(\|D_k\bar{g}_k\| > \tau)} \right] \\
   & \geq  
   \tau \cdot \mathbb{E}_k\left[\left(\frac{1}{3\sqrt{M_D}}\| D_k^{1/2}\nabla f(\bx_k)\| - 2\left( \frac{1}{3\sqrt{M_D}} + \frac{1}{\sqrt{m_D}}\right)\| D_k^{1/2}(\barg_k-\nabla f(\bx_k))\|\right) \cdot \boldsymbol{1}_{(\|D_k\bar{g}_k\| > \tau)} \right] \\
   & = \tau \frac{1}{3\sqrt{M_D}} \cdot \mathbb{E}_k\left[\| D_k^{1/2}\nabla f(\bx_k)\| \cdot \boldsymbol{1}_{(\|D_k\bar{g}_k\| > \tau) } \right]  \\
   & \quad - 2\left( \frac{1}{3\sqrt{M_D}} + \frac{1}{\sqrt{m_D}}\right) \tau \cdot \mathbb{E}_k\left[ \| D_k^{1/2}(\barg_k-\nabla f(\bx_k))\|\cdot \boldsymbol{1}_{(\|D_k\bar{g}_k\| > \tau)} \right] \\
   & \geq \frac{\tau}{3\sqrt{M_D}} \cdot \mathbb{E}_k\left[\| D_k^{1/2}\nabla f(\bx_k)\|\right] \cdot \mathbb{E}_k\left[\boldsymbol{1}_{(\|D_k\bar{g}_k\| > \tau)} \right]  \\
   & \quad - 2\left( \frac{1}{3\sqrt{M_D}} + \frac{1}{\sqrt{m_D}}\right) \tau \cdot \mathbb{E}_k\left[ \| D_k^{1/2}(\barg_k-\nabla f(\bx_k))\|\cdot \boldsymbol{1}_{(\|D_k\bar{g}_k\| > \tau) } \right] \\
   & \geq  \frac{\tau (1-p_g)}{3\sqrt{M_D}} \cdot \mathbb{E}_k\left[\| D_k^{1/2}\nabla f(\bx_k)\|\right]  - 2\tau \left( \frac{1}{3\sqrt{M_D}} + \frac{1}{\sqrt{m_D}}\right)  \cdot \mathbb{E}_k\left[ \| D_k^{1/2}(\barg_k-\nabla f(\bx_k))\|  \right] \\
   & \geq \frac{\tau (1-p_g)}{3\sqrt{M_D}} \cdot \mathbb{E}_k\left[\| D_k^{1/2}\nabla f(\bx_k)\|\right]  - 2\tau \left( \frac{1}{3\sqrt{M_D}} + \frac{1}{\sqrt{m_D}}\right) \sqrt{M_D} \cdot \sigma .
\end{align*}

Combining the above derivations, we have
\begin{align*}
    \mathbb{E}_k[\nabla f(x_k)^T D_k \widehat{\barg}_k ] & = \mathbb{E}_k[\nabla f(x_k)^T D_k \bar{g}_k \cdot \boldsymbol{1}_{(\|D_k\bar{g}_k\| \le \tau)}]  + \tau \cdot \mathbb{E}_k\left[\nabla f(x_k)^T D_k \frac{\bar{g}_k}{\|D_k\bar{g}_k\|} \cdot \boldsymbol{1}_{(\|D_k\bar{g}_k\| > \tau) } \right] \\
    & \geq \frac{p_g}{2}\mathbb{E}_k[\| D_k^{1/2} \nabla f(x_k)\|^2] -  \|\nabla f(\bx_k)\|\frac{M_D^{p/2+1} \sigma^p}{\left(\frac{\tau\sqrt{m_D}}{4\sqrt{M_D}}\right)^{p-1}}\\
    & \quad +  \frac{\tau (1-p_g)}{3\sqrt{M_D}} \cdot \mathbb{E}_k\left[\| D_k^{1/2}\nabla f(\bx_k)\|\right]  - 2\tau \left( \frac{1}{3\sqrt{M_D}} + \frac{1}{\sqrt{m_D}}\right) \sqrt{M_D} \cdot \sigma .
\end{align*}

Note that since $\|D_k^{1/2}\nabla f(\bx_k)\|\geq \sqrt{m_D}\|\nabla f(\bx_k)\|$, we have
\begin{align*}
    \mathbb{E}_k[\nabla f(x_k)^T D_k \widehat{\barg}_k ]
    & \geq \frac{p_g}{2}\sqrt{m_D}\mathbb{E}_k[\| D_k^{1/2} \nabla f(x_k)\|\|\nabla f(\bx_k)\|] -  \|\nabla f(\bx_k)\|\frac{M_D^{p/2+1} \sigma^p}{\left(\frac{\tau\sqrt{m_D}}{4\sqrt{M_D}}\right)^{p-1}} \\
    & \quad +  \frac{\tau (1-p_g)}{3\sqrt{M_D}} \cdot \mathbb{E}_k\left[\| D_k^{1/2}\nabla f(\bx_k)\|\right]  - 2\tau \left( \frac{1}{3\sqrt{M_D}} + \frac{1}{\sqrt{m_D}}\right) \sqrt{M_D} \cdot \sigma .
\end{align*}
Since $\|\nabla f(\bx_k)\|\geq \frac{\tau}{2M_D}$, we further have
\begin{align*}
    \mathbb{E}_k[\nabla f(x_k)^T D_k \widehat{\barg}_k ]
    & \geq \frac{p_g}{4M_D}\sqrt{m_D}\tau\cdot\mathbb{E}_k[\| D_k^{1/2} \nabla f(x_k)\|] -  \|\nabla f(\bx_k)\|\frac{M_D^{p/2+1} \sigma^p}{\left(\frac{\tau\sqrt{m_D}}{4\sqrt{M_D}}\right)^{p-1}} \\
    & \quad +  \frac{\tau (1-p_g)}{3\sqrt{M_D}} \cdot \mathbb{E}_k\left[\| D_k^{1/2}\nabla f(\bx_k)\|\right]  - 2\tau \left( \frac{1}{3\sqrt{M_D}} + \frac{1}{\sqrt{m_D}}\right) \sqrt{M_D} \cdot \sigma  \\
    & \geq \frac{p_g}{4M_D}\sqrt{m_D}\tau\cdot\mathbb{E}_k[\| D_k^{1/2} \nabla f(x_k)\|] -  \|\nabla f(\bx_k)\|\frac{M_D^{p/2+1} \sigma^p}{\left(\frac{\tau\sqrt{m_D}}{4\sqrt{M_D}}\right)^{p-1}} \\
    & \quad +  \frac{\tau (1-p_g)\sqrt{m_D}}{4M_D} \cdot \mathbb{E}_k\left[\| D_k^{1/2}\nabla f(\bx_k)\|\right]  - 2\tau \left( \frac{1}{3\sqrt{M_D}} + \frac{1}{\sqrt{m_D}}\right) \sqrt{M_D} \cdot \sigma  \\
    & = \tau  \frac{\sqrt{m_D}}{4M_D} \cdot \mathbb{E}_k\left[\| D_k^{1/2}\nabla f(\bx_k)\|\right] -  \|\nabla f(\bx_k)\|\frac{M_D^{p/2+1} \sigma^p}{\left(\frac{\tau\sqrt{m_D}}{4\sqrt{M_D}}\right)^{p-1}} - 2\tau \left( \frac{1}{3\sqrt{M_D}} + \frac{1}{\sqrt{m_D}}\right) \sqrt{M_D} \cdot \sigma \\
    & \geq \tau  \frac{m_D}{4M_D} \cdot \|\nabla f(\bx_k)\| -  \|\nabla f(\bx_k)\|\frac{M_D^{p/2+1} \sigma^p}{\left(\frac{\tau\sqrt{m_D}}{4\sqrt{M_D}}\right)^{p-1}} - 2\tau \left( \frac{1}{3\sqrt{M_D}} + \frac{1}{\sqrt{m_D}}\right) \sqrt{M_D} \cdot \sigma.
\end{align*}
Since we set 
\begin{equation*}
    \tau = \max\left\{2,  4\cdot 3^{1/p}\cdot \sigma\left(\frac{M_D^{1+\frac{3}{2p}}}{m_D^{\frac{p+1}{2p}}}\right), 64\sigma\frac{M_D^{\frac{5}{2}}}{m_D^{\frac{3}{2}}}\right\},
\end{equation*}
we  have
\begin{equation*}
    \frac{M_D^{p/2+1} \sigma^p}{\left(\frac{\tau\sqrt{m_D}}{4\sqrt{M_D}}\right)^{p-1}} \leq \frac{\tau m_D}{12M_D}, \quad 2\tau \left( \frac{1}{3\sqrt{M_D}} + \frac{1}{\sqrt{m_D}}\right) \sqrt{M_D} \cdot \sigma \leq \frac{\tau^2m_D}{24M_D^2}.
\end{equation*}
Therefore, we have
\begin{align*}
    \mathbb{E}_k[\nabla f(x_k)^T D_k \widehat{\barg}_k ]
    & \geq   \frac{\tau m_D}{4M_D} \cdot \| \nabla f(\bx_k)\| -  \frac{\tau m_D}{12M_D}\|\nabla f(\bx_k)\| - \frac{\tau^2m_D}{24M_D^2} \\
    & \geq \frac{\tau m_D}{4M_D} \cdot \| \nabla f(\bx_k)\| -  \frac{\tau m_D}{6M_D}\|\nabla f(\bx_k)\|\\
    & = \frac{\tau m_D}{12M_D}\|\nabla f(\bx_k)\|.
\end{align*}
Plugging the above result to the Taylor expansion, we obtain
 \begin{align*}
\mathbb{E}_k[f(\bx_{k+1})]  & \leq f(\bx_k) - \eta  \mathbb{E}_k[\nabla f(\bx_k)^T D_k\widehat{\barg}_k] + \frac{\eta^2 L }{2} \mathbb{E}_k[\|D_k\widehat{\barg}_k\|^2]\\
& \leq - \eta \frac{\tau m_D}{12M_D}\|\nabla f(\bx_k)\| + \frac{\eta^2 L }{2}\tau^2 \\
& \leq - \eta \frac{\tau m_D}{12M_D}\|\nabla f(\bx_k)\| + \eta^2 L  \tau M_D \|\nabla f(\bx_k)\| \\
& \leq - \eta \frac{\tau m_D}{24M_D}\|\nabla f(\bx_k)\| \\
& \leq - \eta \frac{ m_D}{12M_D}\|\nabla f(\bx_k)\| .
\end{align*}
We thus complete the proof.
\end{proof}

\subsection{Proof of Example \ref{example} }\label{proof_example}

    Since $\nabla f(\bx_k) = C$ for some very small constant $C>0$, and the gradient estimate only takes two discrete values, then when $\barg_k>0$, we can infer that the noise is $+\sigma$, while when $\barg_k<0$, we can infer that the noise is $-\sigma$. Moreover, it is easy to verify that Assumption \ref{assumption3} is satisfied.

    Since $\tau>>\sigma$, thus the gradient estimate is not clipped, and thus
    \begin{equation*}
        \varepsilon_k^u= \barg_k - \mathbb{E}_k[\barg_k]=\pm \sigma, \quad \mathbb{E}_k[\|\varepsilon_k^u\|^2]=\sigma^2.
    \end{equation*}
Let's compute the expectation $\mathbb{E}_k[ D_k \epsilon_k^u ]$:$$\begin{aligned}
\mathbb{E}_k[ D_k \epsilon_k^u ] &= 0.5 \cdot (D_k|_{\epsilon=\sigma} \cdot \sigma) + 0.5 \cdot (D_k|_{\epsilon=-\sigma} \cdot (-\sigma)) \\
&= 0.5 \cdot (m_D \cdot \sigma) + 0.5 \cdot (M_D \cdot (-\sigma)) \\
&= 0.5 \sigma (m_D - M_D)\\
& = \mathcal{O}(\sigma^2).
\end{aligned}$$
Recall that $\nabla f(\bx_k)$ is a constant, we have
\begin{equation*}
    -(\eta - L\eta^2 M_D)\nabla f(\bx_k)^T\mathbb{E}_k[D_k\epsilon_k^u] = \mathcal{O}(\eta\sigma^2)=\mathcal{O}(\eta\mathbb{E}_k[\|\epsilon_k^u\|^2]).
\end{equation*}
We complete the proof.

\section{Vector-valued Burkholder-type inequality}\label{append_E}

\begin{lemma}[Vector-valued Burkholder-type inequality]\label{append_C:lemma1}
Let $X_k, k=0,1,\dots,T-1$, be random variables with $\mathbb{E}[X_k\mid X_{0:k-1}]=\boldsymbol{0}$ and $\mathbb{E}[\|X_k\|^{p}]<\infty$ for any $k\geq 0$ and some $p \in (1,2]$, then 
\begin{equation*}
\mathbb{E}\left[ \left\| \sum_{k=0}^{T-1} X_k \right\| ^{p} \right] \leq 2^{2-p} \sum_{k=0}^{T-1} \mathbb{E}[\|X_k\|^{p}].
\end{equation*}
\end{lemma}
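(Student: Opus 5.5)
The plan is to prove the inequality by induction on $T$, reducing it to a one-step pointwise inequality in Hilbert space. Set $S_k=\sum_{j=0}^{k-1}X_j$, with $S_0=\boldsymbol{0}$. It suffices to show that for every $k$,
\begin{equation*}
\mathbb{E}\left[\|S_k+X_k\|^p\right]\leq \mathbb{E}\left[\|S_k\|^p\right]+2^{2-p}\,\mathbb{E}\left[\|X_k\|^p\right].
\end{equation*}
Summing this over $k=0,\dots,T-1$ telescopes to the claim. The key ingredient is the following deterministic inequality, valid for all vectors $a,b\in\mathbb{R}^d$ and $p\in(1,2]$:
\begin{equation*}
\|a+b\|^p\leq \|a\|^p+p\|a\|^{p-2}\langle a,b\rangle+2^{2-p}\|b\|^p,
\end{equation*}
with the convention that the middle term is $0$ when $a=\boldsymbol{0}$. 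Given this, take $a=S_k$ and $b=X_k$ and condition on $X_{0:k-1}$. Since $S_k$ is measurable with respect to $X_{0:k-1}$ and $\mathbb{E}[X_k\mid X_{0:k-1}]=\boldsymbol{0}$, the linear term has zero conditional expectation. Integrability of that term holds by H\"older, because $\|S_k\|^{p-1}\|X_k\|$ is integrable when all $p$-th moments are finite. Taking total expectation then gives the one-step bound.

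\textbf{Main obstacle: proving the pointwise inequality with the constant $2^{2-p}$.} By homogeneity of degree $p$, assume $\|a\|=1$ (the case $a=\boldsymbol{0}$ is trivial since $2^{2-p}\ge 1$). Write $b$ through its component along $a$ and its orthogonal part, so that $t=\langle a,b\rangle$ and $s=\|b\|$ with $|t|\le s$. Then $\|a+b\|^2=1+2t+s^2$, and the claim becomes
\begin{equation*}
\phi(t,s):=(1+2t+s^2)^{p/2}-1-pt\leq 2^{2-p}s^p .
\end{equation*}

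I would split into two regimes. For small $s$ (say $s\le 1$), use concavity of $x\mapsto x^{p/2}$ at $x=1$, which gives $(1+u)^{p/2}\le 1+\tfrac p2 u$. This yields $\phi\le \tfrac p2 s^2$, and $\tfrac p2 s^2\le s^p\le 2^{2-p}s^p$ when $s\le1$. For large $s\ge1$, bound $\|a+b\|^p\le(1+s)^p$ and handle the linear term via $-pt\le ps$. It then remains to check
\begin{equation*}
(1+s)^p-1+ps\le 2^{2-p}s^p\quad\text{on } s\ge 1 .
\end{equation*}
This looks too lossy at $s=1$, where the left side is $2^p-1+p$ and the right side is $2^{2-p}$. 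So I expect to instead maximize $\phi(t,s)/s^p$ directly. For fixed $s$, $\phi$ is convex in $t$, so the maximum over $t\in[-s,s]$ occurs at an endpoint. This reduces the problem to the scalar inequality
\begin{equation*}
|1+x|^p-1-px\le 2^{2-p}|x|^p,\qquad x\in\mathbb{R}.
\end{equation*}
This scalar bound is exactly the one the paper attributes to \citet{Fang2025High}, and I would either cite it or verify it by a one-variable calculus argument: study $g(x)=\bigl(|1+x|^p-1-px\bigr)/|x|^p$, whose supremum is attained in the limit $x\to-\infty$ or at an interior critical point, and show it is at most $2^{2-p}$.
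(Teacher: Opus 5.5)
Your overall route differs from the paper's and is viable. The paper never proves a vector pointwise inequality. Instead it draws an independent $g\sim\mathcal{N}(\boldsymbol{0},I)$, applies the scalar inequality to the martingale differences $y_j=g^\top X_j$, and cancels the common factor $C(p)$ using $\mathbb{E}[|g^\top v|^p]=C(p)\|v\|^p$. Your direct induction with a Hilbert-space one-step inequality is a legitimate alternative, and your conditioning and integrability steps are fine.

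The gap is in your proof of the pointwise inequality. The claim that $\phi(\cdot,s)$ is convex in $t$ is false. In fact $\partial_t^2\phi=p(p-2)(1+2t+s^2)^{p/2-2}\le 0$, so $\phi$ is \emph{concave} in $t$, by the same concavity of $x\mapsto x^{p/2}$ you used for small $s$. Its unconstrained maximizer is $t^*=-s^2/2$, where $\|a+b\|=\|a\|$ and $\phi=\tfrac p2 s^2$, and $t^*\in[-s,s]$ whenever $s\le 2$. So for $1<s<2$ the worst case is genuinely non-collinear, and the scalar inequality does not capture it. For example, at $p=3/2$ and $s=1$ the interior value $\phi=0.75$ exceeds both endpoint values, $2^{3/2}-5/2\approx 0.33$ and $p-1=0.5$. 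As written, your reduction to the scalar case therefore fails on $s\in(1,2)$.

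The repair uses tools you already have. Your tangent-line bound $\phi(t,s)\le\tfrac p2 s^2$ holds for all $t,s$, not just $s\le 1$. Moreover, $\tfrac p2 s^2\le 2^{2-p}s^p$ is equivalent to $\tfrac p2 s^{2-p}\le 2^{2-p}$, which holds for every $s\le 2$ since $p\le 2$. For $s\ge 2$, concavity together with $t^*=-s^2/2\le -s$ shows that $\phi(\cdot,s)$ is nonincreasing on $[-s,s]$. Its maximum is therefore at $t=-s$, where $\phi=(s-1)^p-1+ps$. This is exactly the scalar inequality $|1+x|^p\le 1+px+2^{2-p}|x|^p$ at $x=-s\le -2$, which you may cite (Case C of the paper's scalar proof).

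With this fix your argument proves the statement. It also yields a deterministic one-step inequality valid in any inner-product space. The paper's Gaussian trick instead transfers any scalar inequality of this form to vectors with the same constant, without any two-dimensional calculus.
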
	

 The proof is a combination of the proof of the scalar-valued Burkholder-type inequality in \cite{Fang2025High} Lemma B.1 and the proof in \cite{Hubler2024Gradient}.

We first revisit the Lemma B.1 in \cite{Fang2025High}. We include the proof in Appendix \ref{proofB1} for completeness.

\begin{lemma}[Scalar-valued Burkholder-type inequality]\label{append_C:lemma2}
Let $X_k, k=0,1,\dots,T-1$, be random variables with $\mathbb{E}[X_k\mid X_{0:k-1}]=0$ and $\mathbb{E}[|X_k|^{p}]<\infty$ for any $k\geq 0$ and some $p \in (1,2]$, then 
\begin{equation}\label{append:eq2}
\mathbb{E}\left[ \left| \sum_{k=0}^{T-1} X_k \right| ^{p} \right] \leq 2^{2-p} \sum_{k=0}^{T-1} \mathbb{E}[|X_k|^{p}].
\end{equation}
\end{lemma}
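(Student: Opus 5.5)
The statement to prove is the scalar Lemma~\ref{append_C:lemma2}: for a martingale difference sequence $X_k$ with finite $p$-th moments, $p\in(1,2]$, we want $\mathbb{E}|S_T|^p\le 2^{2-p}\sum_k\mathbb{E}|X_k|^p$, where $S_t=\sum_{k<t}X_k$. I would induct on $T$ (equivalently, telescope $\mathbb{E}|S_{t+1}|^p-\mathbb{E}|S_t|^p$). It then suffices to prove the one-step inequality
\begin{equation*}
\mathbb{E}\bigl[|S_t+X_t|^p \bigm| X_{0:t-1}\bigr]\le |S_t|^p+2^{2-p}\,\mathbb{E}\bigl[|X_t|^p\bigm| X_{0:t-1}\bigr],
\end{equation*}
and then take total expectations and sum over $t=0,\dots,T-1$, using $S_0=0$.

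To obtain the one-step bound, I would establish a deterministic inequality for reals $a,b$:
\begin{equation*}
|a+b|^p\le |a|^p+p\,|a|^{p-2}a\,b+2^{2-p}|b|^p.
\end{equation*}
The linear term has zero conditional mean because $a=S_t$ is measurable with respect to $X_{0:t-1}$ and $\mathbb{E}[X_t\mid X_{0:t-1}]=0$. Integrability is fine since $|a|^{p-1}|b|\le |a|^p+|b|^p$. For $p=2$ the inequality is an identity with constant $1=2^0$, which is a good sanity check.

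The main obstacle is proving this elementary inequality with the sharp constant $2^{2-p}$. By homogeneity (the case $a=0$ being trivial) I would set $a=1$, $b=x\in\mathbb{R}$ and study
\begin{equation*}
h(x)=2^{2-p}|x|^p+1+px-|1+x|^p\ge 0 .
\end{equation*}
The approach has three parts.
\begin{itemize}
\item For $x\ge0$: since $\phi(s)=s^{p-1}$ is concave, $(1+x)^p-1-px=p\int_0^x\bigl((1+s)^{p-1}-1\bigr)ds\le p\int_0^x s^{p-1}ds=x^p$. This gives the bound even with constant $1\le 2^{2-p}$.
\item For $x\in[-1,0]$: the same subadditivity argument, $1-(1-u)^{p-1}\le u^{p-1}$, handles this range.
\item For $x<-1$: this is where the constant matters. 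I expect the extremal point to be $x=-2$, where $|1+x|^p=1$ and $h(-2)=2^{2-p}2^p+1-2p-1=4-2p\ge0$. I would then show that $h$ has no negative interior minimum on $(-\infty,-1)$ by examining $h'$: the critical-point equation $p2^{2-p}|x|^{p-1}=p(|x|-1)^{p-1}-p$ should be incompatible with $h<0$. Alternatively, I would reduce via $y=-x$ to the inequality $(y-1)^p\le 2^{2-p}y^p+1-py$ and check it with convexity in $y$.
\end{itemize}

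For the vector-valued Lemma~\ref{append_C:lemma1}, the same structure carries over by replacing $|a|^{p-2}ab$ with $\|a\|^{p-2}\langle a,b\rangle$ and proving $\|a+b\|^p\le\|a\|^p+p\|a\|^{p-2}\langle a,b\rangle+2^{2-p}\|b\|^p$. I would reduce this to two dimensions, the span of $a$ and $b$, and then to the scalar case by decomposing $b$ along $a$ and orthogonal to it.
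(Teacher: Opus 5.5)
Your architecture matches the paper's. You normalize to $a=1$, prove $h(x)=2^{2-p}|x|^p+1+px-|1+x|^p\ge 0$ on the ranges $x\ge 0$, $-1\le x\le 0$ and $x<-1$, then induct with the cross term removed by $\mathbb{E}[X_t\mid X_{0:t-1}]=0$; your integrability check via Young is fine. Your subadditivity arguments for $x\ge -1$ are correct and cleaner than the paper's Cases A and B, and they show constant $1$ already suffices there. The gap is the range $x<-1$. That is the only range with real content: with constant $1$ the inequality fails at $x=-2$, since $2^p<2p$ for $p\in(1,2)$. There you offer only expectations, and each is off:
\begin{itemize}
\item The critical-point equation is $2^{2-p}|x|^{p-1}=1+(|x|-1)^{p-1}$; you have the sign of the constant term flipped.
\item The point $x=-2$ is not a minimizer. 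It is a stationary inflection point of $h$ with $h(-2)=4-2p>0$ for $p<2$. The minimum of $h$ on $(-\infty,-1]$ sits at the endpoint, $h(-1)=2^{2-p}+1-p$.
\item With $y=-x$, the function $H(y)=2^{2-p}y^p+1-py-(y-1)^p$ is concave on $(1,2)$ when $p<2$, so a convexity-in-$y$ check cannot work.
\end{itemize}
Your ``no negative interior minimum'' route needs all zeros of $h'$ on $(-\infty,-1)$, and you have not located them. So as written, the only nontrivial case of the key inequality is unproven.

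The missing step is the paper's Case C computation. We have $H''(y)=p(p-1)\bigl[(2/y)^{2-p}-(y-1)^{p-2}\bigr]$, which is negative on $(1,2)$ and positive on $(2,\infty)$. So $H'$ is minimized at $y=2$, where $H'(2)=p\,2^{2-p}2^{p-1}-2p=0$. Hence $H'\ge 0$, $H$ is nondecreasing, and $H(y)\ge H(1)=2^{2-p}+1-p\ge 0$.

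Alternatively, you can finish inside your own integral framework. Set $\psi(s)=\operatorname{sgn}(s)|s|^{p-1}$. For $x<0$ one has $|1+x|^p-1-px=p\int_x^0(\psi(1)-\psi(1+s))\,ds$. Moreover $|\psi(u)-\psi(v)|\le 2^{2-p}|u-v|^{p-1}$: same-sign pairs follow from your subadditivity, and opposite-sign pairs from $A^{p-1}+B^{p-1}\le 2^{2-p}(A+B)^{p-1}$ for $A,B\ge 0$. This gives $h\ge 0$ on all of $\mathbb{R}$ at once and shows where $2^{2-p}$ comes from. With either fix, the rest of your argument goes through.

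Your vector-valued remark is beyond this statement. The paper does not prove a vector pointwise inequality; it reduces to this scalar lemma through the Gaussian projection $y_j=g^TX_j$.
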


\subsection{Proof of Lemma \ref{append_C:lemma1}}
\begin{proof}
    Following (Kornilov et al., 2024), we define $g \sim \mathcal{N}(\boldsymbol{0}, I)$ and $y_j := g^T X_j$, where $g$ is independent of $X_j$. We need to verify that $y_j$ defined this way satisfies the conditions in Lemma \ref{append_C:lemma2}.
    
    We define the sigma algebra $\mathcal{H}_1 \coloneqq \sigma(y_{j-1}, \dots, y_1)$ and $\mathcal{H}_2 := \sigma(X_{j-1}, \dots, X_1, g)$. Since $\mathcal{H}_1 \subset \mathcal{H}_2$, by the tower property, we have
    \begin{equation*}
        \mathbb{E} [g^T X_j | \mathcal{H}_1] = \mathbb{E} [\mathbb{E} [g^T X_j | \mathcal{H}_2] | \mathcal{H}_1] = \mathbb{E} [g^T \mathbb{E} [X_j | \mathcal{H}_2] | \mathcal{H}_1] = 0,
    \end{equation*}
where the last equality holds by independence of $X_j$ and $g$, and the assumption that $\mathbb{E}[X_j \mid X_{1:j-1}] = 0$. 

Next, we verify that $\mathbb{E}[|y_j|^p] < \infty$. Note that $g^T a \sim \mathcal{N}(0, \|a\|^2)$ for any vector $a \in \mathbb{R}^d$, we have $\mathbb{E}[|y_j|^p | X_j] = \mathbb{E}[|g^T X_j|^p | X_j] = C(p) \|X_j\|^p$, with $C(p) = 2^{p/2} \Gamma(\frac{p+1}{2})/\sqrt{\pi}$, where we used the $p$-th absolute moment of normal distribution applied to a random variable $g^T X_j$ given $X_j$. Taking full expectation, we get
\begin{equation}\label{append:eq3}
\mathbb{E} [|y_j|^p] = C(p) \mathbb{E} [\|X_j\|^p] < \infty.
\end{equation}

We have verified that the sequence $y_j$ satisfies the conditions in Lemma \ref{append_C:lemma2}. 

Using the $p$-th moment of normal distribution applied to $g^T \sum_{k=0}^{T-1}X_k$ given $\sum_{k=0}^{T-1}X_k$, we have
\begin{equation*}
    C(p) \mathbb{E} \left[\left\|\sum_{k=0}^{T-1}X_k\right\|^p\right] = \mathbb{E} \left[\mathbb{E} \left[\left|g^T \sum_{k=0}^{T-1}X_k\right|^p \mid \sum_{k=0}^{T-1}X_k\right]\right] = \mathbb{E} \left[ \left| \sum_{j=1}^{n} y_j \right|^p \right] \le 2^{2-p} \sum_{j=1}^{n} \mathbb{E} [|y_j|^p],
\end{equation*}
where in the last step we used \eqref{append:eq2}. We then apply \eqref{append:eq3} and cancel $C(p)$ and complete the proof.
\end{proof}

\subsection{Proof of Lemma \ref{append_C:lemma2}}\label{proofB1}
\begin{proof}

To prove the result, we fist show the following inequality: \begin{equation}\label{append:eq4}
|a + b|^p \leq |a|^p + p \cdot \text{sgn}(a)\cdot |a|^{p-1} b + 2^{2-p} |b|^p.
\end{equation}
When $a=0$, the inequality holds trivially as $2^{2-p}\geq 1$ for $p\in (1,2]$. Moreover, when $p=2$, the inequality also holds trivially by the observation that $\text{sgn}(a)|a|=a$. In what follows, we consider $a\neq 0$ and $p\in (1,2)$. We divide $|a|^p$ on both sides and get
\begin{equation}\label{append:eq5}
\left|1+ \frac{b}{a}\right|^p \leq 1 + p \frac{b}{a} + 2^{2-p}\left|\frac{b}{a}\right|^p,
\end{equation}
where we use the observation that $\text{sgn}(a)/|a| = 1/a$. Denoting $x=b/a$, it suffices to show that $f(x)= 1 + p x + 2^{2-p}\left|x\right|^p-\left|1+ x \right|^p \geq 0$. To this end, we consider three cases. \textbf{Case A:} $b/a>0$, \textbf{Case B:} $-1<b/a\leq 0$, and \textbf{Case C:} $b/a \leq -1$.

\vskip4pt
\noindent$\bullet$ \textbf{Case A:} $b/a>0$: We have $f(x) = 1+px+2^{2-p}x^p-(1+x)^p$ with $f(0) = 0, f'(0)=0$~and \begin{equation*}
f''(x) = p(p-1) \left[ \left( \frac{2}{x} \right)^{2-p} - \left( \frac{1}{1+x} \right)^{2-p} \right].
\end{equation*}
Since $1 + x > x/2 $ for all $x > 0$, when $p < 2$, we have \begin{equation*}
\left( \frac{2}{x} \right)^{2-p} > \left( \frac{1}{1+x} \right)^{2-p} \implies f''(x) > 0.
\end{equation*}
Thus, for all $x > 0$, we have $f'(x) > 0$, which then implies $f(x) >  0$, and thus \eqref{append:eq4} holds.

\vskip4pt
\noindent $\bullet$ \textbf{Case B:} $-1<b/a\leq 0$: We substitute $y=1+x$ so that $ 0 \leq y \leq 1 $. Then,
\begin{equation*}
f(x)=f(y-1)= 2^{2-p} (1 - y)^p + p (y - 1) + 1 - y^p \geq (1 - y)^p + p (y - 1) + 1 - y^p \coloneqq g(y).
\end{equation*}  
Then, we have $g(1)=0$, $g'(0)=g'(1)=0$, and 
\begin{equation*}
g''(y)= p (p-1) \left[ \left( \frac{1}{1 - y} \right)^{2-p} - \left( \frac{1}{y} \right)^{2-p} \right].
\end{equation*}
Note that $g''(y)$ has exactly one root at $y = 1/2$. Moreover, $ g''(y) < 0 $ for $y < 1/2 $ and $ g''(y) > 0 $ for $y >1/2$, so $ g'(y) $ has a minimum at $ y = \frac{1}{2} $. Since $ g'(0) = g'(1) = 0 $, we conclude that $ g'(y) < 0 $ for all $ 0 < y < 1 $. As $g(y)$ is monotonically decreasing on $(0,1)$ and $g(1)=0$, we have $ g(y) \geq 0 $ for all $0 \leq y \leq 1 $. Therefore, $f(x)\geq 0$.

\vskip4pt
\noindent$\bullet$ \textbf{Case C:} $b/a \leq -1$: We substitute $z = -1 - x $ so that $z\geq 0$. Then $f(x)=f(-z-1)\eqqcolon h(z)$ and \vspace{-0.2cm}
\begin{equation*}
h(z) = 2^{2-p} (1 + z)^p - p (1 + z) + 1 - z^p.
\end{equation*}
Then, we have $h(0) \geq 0, h'(1) = 0$, and  \vspace{-0.2cm}
\begin{equation*}
h'(z) = p (p-1) \left[ \left( \frac{2}{1 + z} \right)^{2-p} - \left( \frac{1}{z} \right)^{2-p} \right].
\end{equation*}
Note that $ h''(z) = 0 $ only at $ z = 1$. Moreover, $ h''(z) < 0 $ for $ z < 1 $ and $ h''(z) > 0 $ for $ z > 1 $, so $ h'(z) $ has a minimum at $z = 1 $. Since $ h'(1) = 0 $, we know $ h'(z) \geq 0 $ for all $z \geq 0 $, and hence $ h(z) \geq 0 $ for all $z \geq 0$. Combining the above three cases, we prove \eqref{append:eq4}.

Next, we prove \eqref{append:eq2} by induction. The inequality holds trivially for $T=1$, as $2^{2-p} \geq 1$ for $p \in (1,2] $. Suppose the inequality holds for $T=n$, we now consider $T=n+1$.~Applying \eqref{append:eq4} with $a =  \sum_{k=0}^{n-1} X_k $ and $b = X_{n}$, we have
\begin{equation*}
\left| \sum_{k=0}^{n} X_k \right| ^{p} \leq  \left| \sum_{k=0}^{n-1} X_k \right| ^{p} + p\cdot \text{sgn}(\sum_{k=0}^{n-1} X_k) \cdot \left| \sum_{k=0}^{n-1} X_k \right|^{p-1} \cdot X_{n} + 2^{2-p}  |X_{n}|^p .
\end{equation*}
Taking expectation on both sides, we have
\begin{equation*}
\mathbb{E}\left[ \left| \sum_{k=0}^{n} X_k \right| ^{p} \right] \leq  \mathbb{E}\left[  \left| \sum_{k=0}^{n-1} X_k \right| ^{p} \right] + p \mathbb{E}\left[  \text{sgn}(\sum_{k=0}^{n-1} X_k) \left| \sum_{k=0}^{n-1} X_k \right|^{p-1} \cdot X_{n}  \right]  + 2^{2-p}  \mathbb{E}\left[  |X_{n}|^p \right]  .
\end{equation*}
Using $\mathbb{E}[X_{n}\mid X_{0:n-1}]=0$, we get
\begin{equation*}
\mathbb{E}\left[ \left| \sum_{k=0}^{n} X_k \right| ^{p} \right] \leq  \mathbb{E}\left[  \left| \sum_{k=0}^{n-1} X_k \right| ^{p} \right] + 2^{2-p}  \mathbb{E}\left[  |X_{n}|^p \right] .
\end{equation*}
We complete the proof by induction.
\end{proof}

Now we extend the above one to vector valued.


\end{document}